\newtheorem{theorem}{Theorem}[section]
\newtheorem{lemma}[theorem]{Lemma}
\newtheorem{corollary}[theorem]{Corollary}
\newtheorem{definition}[theorem]{Definition}
\newenvironment{proof}{{\bf Proof:}}{\hfill\rule{2mm}{2mm}}
\numberwithin{equation}{section}
\newcommand\E{\mathbb{E}}
\newcommand{\A}{\mathcal{A}}
\newcommand{\R}{\mathcal{R}}
\newcommand{\N}{\mathcal{N}}
\newcommand{\nn}{\mathbb{N}}
\newcommand{\rr}{\mathbb{R}}
\newcommand{\curly}[1]{ {\left\{ #1 \right\}}}
\newcommand{\roundy}[1]{ {\left( #1 \right)}}
\newcommand{\squary}[1]{ {\left[ #1 \right]}}
\newcommand{\abs}[1]{ {\left | #1 \right |}}
\newcommand{\inprod}[2]{\left \langle #1 ,\,#2 \right\rangle}
\renewcommand{\vec}[1]{vec\roundy{#1}}
\newcommand{\trans}[1]{{#1}^{\dagger}}
\renewcommand{\hat}[1]{\widehat{#1}}
\newcommand{\mail}[1]{\href{mailto:#1}{\color{blue} #1}}
\begin{document}

\title{The impact of allocation strategies  in subset learning on the expressive power of neural networks}
\author{Ofir Schlisselberg\\Tel Aviv University \mail{ofirs4@mail.tau.ac.il} \and Ran Darshan \\ Tel Aviv University \mail{darshan@tauex.tau.ac.il}}
\date{}

\maketitle

\begin{abstract}
In traditional machine learning, models are defined by a set of parameters, which are optimized to perform specific tasks. In neural networks, these parameters correspond to the synaptic weights. However, in reality, it is often infeasible to control or update all weights. This challenge is not limited to artificial networks but extends to biological networks, such as the brain, where the extent of distributed synaptic weight modification during learning remains unclear. Motivated by these insights, we theoretically investigate how different allocations of a fixed number of learnable weights influence the capacity of neural networks. Using a teacher-student setup, we introduce a benchmark to quantify the expressivity associated with each allocation. 
We establish conditions under which allocations have `maximal' or `minimal' expressive power in linear recurrent neural networks and linear multi-layer feedforward networks.
For suboptimal allocations, we propose heuristic principles to estimate their expressivity. These principles extend to shallow ReLU networks as well. Finally, we validate our theoretical findings with empirical experiments. Our results emphasize the critical role of strategically distributing learnable weights across the network, showing that a more widespread allocation generally enhances the network’s expressive power.

\end{abstract}

\section{Introduction}\label{sec:intro}

A foundational principle in neuroscience posits that changes in synaptic weights drive learning and adaptive behaviors \citep{martin2000synaptic,humeau2019next}. This principle is mirrored in artificial neural networks (NNs), where modern algorithms adjust weights when training a network to perform a task. However, while typically in NNs it is common for all weights to be adaptable, the scale of this process in the brain is unclear.

Recent evidence suggests that only a small subset of synaptic weights is modified when an animal learns a new task \citep{hayashi2015labelling} and that training a subset of neurons can induce broad changes in neural activity as animals acquire new skills \citep{Kim2023}. Perturbing just a few neurons has been shown to significantly alter decision-making, perception, and memory-guided behaviors \citep{daie2021targeted, marshel2019cortical, robinson2020targeted}. These findings raise fundamental questions about the distributed nature of learnable weights in intelligent systems: To what extent are synaptic weight changes spread throughout the network, and what strategies should the learnable system use to allocate the subset of learnable weights?

While most algorithms used in NNs do not constrain which weights are trained, a few research directions explore this question, primarily from a practical standpoint. For example, training only a subset of the weights is used for pruning networks to make models more suitable for storage-constrained environments \citep{guo2021parameterefficienttransferlearningdiff}, to reduce computational costs \citep{10.5555/3625834.3626034} and to reduce communication overheads in distributed systems \citep{sung2021trainingneuralnetworksfixed}. Similarly, transfer learning often fine-tunes large models by adjusting only a fraction of the weights (``parameter allocating,'' see \citet{wang2024comprehensivesurveycontinuallearning}). Such a strategy is particularly useful for continual learning, helping to mitigate catastrophic forgetting \citep[e.g.,][]{mallya2018packnetaddingmultipletasks,mallya2018piggybackadaptingsinglenetwork,serrà2018overcomingcatastrophicforgettinghard,wortsman2020supermaskssuperposition,zaken2022bitfitsimpleparameterefficientfinetuning}.

In both biological and artificial neural network research, similar questions arise regarding learning with only a subset of the available weights: If resources are constrained, what are the most effective strategies to allocate the learnable weights? Should learnable weights be confined to specific subsets of neurons, or distributed more broadly? And given an allocation strategy, how well can a network perform a task? Motivated by these questions, we theoretically study how learnable weights should be allocated within a network. More generally, we consider a model in which a task is learned under resource constraints—where only a fraction of the model’s parameters is adaptable, while others remain fixed. In this setting, we explore how the selection of learnable parameters affects overall performance. 

\subsection{Our contribution}

In this paper, we provide the first theoretical framework for analyzing the expressive power of various allocation strategies in NNs. Motivated by our goal of understanding how learnable weights should be organized in the brain, we apply our framework to explore the impact of allocating learnable weights on the expressivity of recurrent neural networks (RNNs). RNNs are particularly relevant to neuroscience as they serve as models to how neural systems maintain and process information over time \citep{hopfield1982neural,elman1990finding, barak2017recurrent,qian2024partial}. We focus on linear RNNs (LRNNs) due to their analytical tractability, grounded in the well-established literature on linear dynamical systems (e.g \citet{heij2006introduction}). Notably, we found that subset learning in LRNNs often produces non-trivial results, with insights that extend to feedforward architectures and even shallow ReLU networks. Our specific contributions are:

\begin{itemize} 

\item We formalize the problem of how to allocate learnable parameters in a model using a student-teacher setup. We introduce a benchmark (\Cref{def:match}), which defines the match probability—the likelihood that a student, with a specific allocation of learnable parameters, can replicate the teacher's outputs. This measure of expressivity allows us to determine which allocations maximize the model's expressive power.

\item For LRNNs, we prove several theorems that highlight the effects of different allocation strategies for learnable parameters in the encoder, decoder, and in the recurrent interactions (\Cref{thm:decoder,thm:encoder,thm:lrnn necessary conditions,thm:lrnn sufficient conditions}). These theorems identify conditions under which allocations can be maximal, leading to full expressivity, or minimal, resulting in zero expressive power. For cases where neither conditions are met, we propose heuristic principles to estimate the match probability. These results show a sharp transition between allocations with minimal expressivity to maximal expressivity.
    
\item We show that similar concepts from LRNNs apply for fully connected multi-layer linear feed-forward networks (LFFN). We use these concepts to provide similar conditions that identify allocations that lead to large match probability.
\item We show that similar concepts can be used to analyze the performance of the possible allocation strategies in one-layer ReLU feed-forward network. 
\end{itemize}
Our theoretical findings suggest that, as a rule of thumb, allocations tend to become more optimal when distributing the learnable weights throughout the network. Specifically, distributing the {\it same} number
of learnable weights over more neurons, such that there are fewer learnable weights per neuron, increases the network’s expressive power. This principle pertained to LRNN, LFFN and shallow ReLU networks.

\subsection{Related work}

\paragraph{Expressive Power of Neural Networks.} 

The expressive power of neural networks has been extensively studied. \citet{cover1965geometrical} established limits on the expressivity of a single perceptron, while \citet{Cybenko1989ApproximationBS} and \citet{HORNIK1989359} demonstrated that shallow NNs serve as universal approximators. More recent work by \citet{raghu2017expressivepowerdeepneural,cohen2016expressivepowerdeeplearning,montúfar2014numberlinearregionsdeep}, highlighted the greater expressive power of deep networks compared to shallow ones. Additionally, the expressivity of specific architectures were investigated, like convolutional neural networks (CNNs) \citep{cohen2016expressivepowerdeeplearning}, RNNs \citep{SIEGELMANN1995132,khrulkov2018expressivepowerrecurrentneural}, and graph neural networks (GNNs) \citep{joshi2024expressivepowergeometricgraph}. \citet{collins2017capacitytrainabilityrecurrentneural} showed that different RNN architectures, such as GRU, LSTM, and UGRNN, exhibit similar expressivity, suggesting that insights into RNN expressivity could generalize to other recurrent models. In contrast to these studies that focus on the expressivity of a fully learned model, here we will study how different allocations of a subset of parameters affect the model expressivity.

\paragraph{Theory on subset learning and related techniques.}

Adaptation, a technique similar to subset learning, is widely used for fine-tuning neural networks. Despite its prevalence in practice, few studies have explored the expressive power of these methods. For instance, \citet{englert2022adversarialreprogrammingrevisited} demonstrated that neural reprogramming \citep{elsayed2018adversarialreprogrammingneuralnetworks}, a strategy that alters only the input while keeping the pretrained network unchanged, can adapt a random two-layer ReLU network to achieve near-perfect accuracy on a specific data model. Similarly, \citet{giannou2023expressivepowertuningnormalization} examined the expressive power of fine-tuning normalization parameters, while \citet{zeng2024expressivepowerlowrankadaptation} recently analyzed the expressive power of low-rank adaptation, a concept that is reminiscent of subset learning. Furthermore, the lottery ticket hypothesis \citep{frankle2019lotterytickethypothesisfinding, malach2020provinglotterytickethypothesis} suggests that within a neural network, subnetworks exist that are capable of matching the test accuracy of the full model.

\section{Settings and definitions} \label{sec:settings}
We consider a model $M_W$ with weights $W\in\rr^p$:
\begin{align*}
    y=M_W(x)
\end{align*}
with $x\in\rr^q$ and $y\in\rr^d$. We study expressivity using a student-teacher framework (\Cref{fig:schema}) \citep{E_Gardner_1989}. Both models share the same architecture but have different weights, where the teacher's weights $W^*$ are fixed and known.
Our goal is to find a student model that can exactly match the teacher, but with limited resources. The challenge is to match the labels produced by the teacher, which uses all $p$ weights, while the student controls only $r < p$ weights. The remaining weights are randomly drawn, but are not learned. We will show that different strategies for allocating the subset of $r$ learnable weights in the student model determine its ability to match the teacher. When considering $m$ samples, we denote the output as $Y = M_W(X)$, where $X \in \mathbb{R}^{q \times m}$ represents the $m$ inputs, and $Y \in \mathbb{R}^{d \times m}$ corresponds to the $m$ outputs produced by the model.

\begin{definition}
    An \textit{allocation} \(\mathcal{A}\) for a model \(M_W\) with weights \(W \in \mathbb{R}^p\) is defined as the subset \(\mathcal{A} \subset \{1, 2, \dots, p\}\), which identifies the indices of the learnable weights. 
\end{definition}

\begin{definition}
    For an allocation \(\mathcal{A}\) of size \(|\mathcal{A}| = r\) and constant weights \(\hat{W} \in \mathbb{R}^{p-r}\), let \(\bar{\mathcal{A}} = \{1 \le i \le p \;|\; i \notin \mathcal{A}\}\) be the complement of \(\mathcal{A}\), 
    the \textit{realization set} of \(\mathcal{A}\) with respect to \(\hat{W}\) is defined as:
    \[
    \mathcal{R}_{\mathcal{A}; \hat{W}} = \curly{W \in \mathbb{R}^p \; \text{s.t.} \; W[\bar{\mathcal{A}}] = \hat{W}}.
    \]
\end{definition}

In other words, the realization set \(\mathcal{R}_{\mathcal{A}; \hat{W}}\) consists of all vectors \(W\) that match the constant weights \(\hat{W}\) at the positions indexed by \(\bar{\mathcal{A}}\). 
When the constant weights \(\hat{W}\) are clear from the context, we denote the realization set simply as \(\mathcal{R}_{\mathcal{A}}\). We use this notation to define our benchmark:

\begin{definition} \label{def:match}
Let $\A$ be an allocation of size $\abs{\A} = r$. For some weights distribution $\mathcal{W}$, teacher distribution $\mathcal{T}$ and samples distribution $\mathcal{X}$, we define the \textbf{match probability} of $\A$ to be:
\begin{align*}
    MP(\A, m)=  \Pr\squary{\exists W \in \R_{\A; \hat{W}} \; \text{s.t.} \;M_W(X) = M_{W^*}(X)}
\end{align*}
where $\hat{W}$ is sampled from $\mathcal{W}$, $W^*$ is sampled from $\mathcal{T}$ and $X$ from $\mathcal{X}^m$. 
\end{definition}
Namely, the match probability for an allocation is the probability of the student, when learning the weights allocated, to express the same $m$ labels for the same $m$ samples as the teacher. Match probability will be used as a benchmark for the expressive power of an allocation in a given model, i.e., higher $MP$ for an allocation leads to a better expressive power of the model. Since our focus is on evaluating expressivity, rather than the optimization algorithm, we will focus on the existence of such $W$, without considering the optimization process of how to find it. In addition, although we assume throughout the paper that the teacher and student share the same architecture, this assumption is not strictly required (see \Cref{sec:sup disc}). Using samples to measure expressivity is inspired by \citet{cover1965geometrical}, where expressivity is assessed by having labeled samples and examining the probability of expressing those labels. 

In this paper we investigate $MP\roundy{\A, \frac{r}{d}}$. As said, our goal is to use $r$ weights to make the student output the same $Y$, consisting of $dm$ independent entries. In the models we consider in this paper, if $r < dm$, a complete match is unattainable and $MP = 0$. Since we only use $MP\roundy{\A, \frac{r}{d}}$ we short it to $MP\roundy{\A}$. 

Throughout this paper, we will frequently encounter conditions where allocations that meet them result in $MP(\A) = 1$ or $MP(\A) = 0$. For brevity, we define {\it `minimal allocation'} as an allocation $\A$ such that $MP(\A) = 0$, and {\it`maximal allocation'} as an allocation $\A$ such that  $MP(\A) = 1$.

\textbf{Assumptions.} In this paper, we assume that the weight distribution $\mathcal{W}$ is normal with mean 0 and that there are no correlations between the different weights. This assumption guarantees that matrices are invertible and diagonalizable. Additionally, we assume that the distributions $\mathcal{X}$ and $\mathcal{T}$ is such that any drawn $m$ samples and any $m$ output vectors of the teacher are linear independent, which means that any square matrix of inputs is invertible (see \Cref{sec:assumption relax} for possible relaxation of the assumptions). In few of the sections there are additional assumptions that are stated in the relevant section. Furthermore, in the linear models, we assume that the model is such that a fully-learned teacher can express any linear function. For example, in feed-forward linear network, we assume that there is no layer with size smaller than the output.
\begin{figure}[h]
\begin{minipage}[c]{0.6\textwidth}
    \includegraphics[width=1\linewidth]{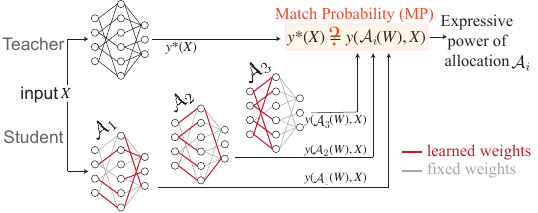}
  \end{minipage}\hfill
  \begin{minipage}[c]{0.4\textwidth}
    \caption{Schema of the student-teacher setup. The match probability (MP) estimates the expressive power of a student with an allocation $\A_i$ of its learnable weights by measuring its ability to match the teacher.}
    \label{fig:schema}
      \end{minipage}
\end{figure}
\subsection{Warm up: Linear estimator model}\label{sec:linear estimator}
To better understand the settings and the goal, we start with a simple example of a linear estimator, $y = Wx$, with $W\in \rr^{d \times q}$. Considering $m$ samples we write:
\begin{center}
    $Y=M_W(X) = WX$
    \end{center}
with $X\in\rr^{q \times m},\,Y\in\rr^{d \times m}$. We thus ask how to allocate the $r$ learnable weights in $W$ such that the student matches the teacher successfully on all samples. For a realization of $W^*$, $X$ and $\hat{W}$, we seek for $W\in\R_{\A;\;\hat{W}}$ such that:
\begin{align*}
    WX = W^*X
\end{align*}
Denote $W_i$ and $W^*_i$ as the $i$th row of the matrices, $r_i$ the number of learnable weights in $W_i$, and $\tilde{W}_i$ as the subset of $W_i$ corresponding to these $r_i$ weights. Additionally, let $\tilde{X}$ represent the corresponding rows of the input samples, $X$,  $\hat{W}_i$ denote the remaining (fixed) weights of $W_i$, and $\hat{X}$ represent the remaining rows of $X$.
We then have:
\begin{align}
    \tilde{W}_i\tilde{X} &= W^*_i X - \hat{W}_i\hat{X} \label{eq:le_row}
\end{align}
where all the student and teacher weights on the r.h.s of \Cref{eq:le_row} are constants. As $\tilde{X}\in\rr^{r_i \times m}$, if $r_i < m$ we will have more equations than variables in \Cref{eq:le_row} and there will be no solution to the set of these linear equations. In contrast, if $r_i = m$ there is always a solution:
\begin{equation} \label{eq:good_row_sol}
    \tilde{W}_i = \roundy{W^*_i X - \hat{W}_i\hat{X}}\tilde{X}^{-1}
\end{equation}
As $r_i$ is the number of learnable weights in the $i$th row of $W$, we have $r = \sum_{i=1}^{d}r_i$.
This means that an allocation must have an equal division between the rows of $W$, each row consisting of exactly $\frac{r}{d}$ learnable weights. Based on \Cref{eq:good_row_sol} we can allocate the learnable weights to create $W\in\R_\A$ such that $W^*X = WX$. Therefore, in that case $MP\roundy{\A} = 1$, namely a maximal allocation, and otherwise $MP\roundy{\A} = 0$, namely minimal allocation.

Notice the dichotomy here - there is an allocation that perfectly matches the teacher, but its not robust. A slight change in the allocation, such as moving a single learnable weight from one row to another, will create an allocation that will never match the teacher (a minimal allocation). This concept will return later on.

\section{Linear RNN model} \label{sec:lrnn}
We begin by applying our framework to study allocations of learnable weights in RNNs. We focus on RNNs both because of their relevance to modeling neural circuits, as well as the non-trivial results arising from allocating the learnable weights in their recurrent weights. 

Consider a linear recurrent neural network:
\begin{align}\label{eq:Dynamics}
    h_t &= Wh_{t-1} + Bx_t \\
    y_t &= D h_{t-1}
\end{align}
with the hidden state $h_t\in R^n$, the initial state $h_0=0$, a step input $x_t\in\rr^b$, and with $t=1...T$. The input to the network is driven through the encoder matrix $B\in \rr^{n\times b}$, and $W\in R^{n\times n}$ is the recurrent connections. The hidden state is read out through the decoder $D\in \rr^{d \times n}$, such that the network's output is $y_t \in \rr^d$. We consider $y_{T+1}$ as the output of the network, and denote $y = y_{T+1}$. The RNN is thus a function $F:\rr^{Tb}\to \rr^d$. We assume $n \gg b,d$. Additionally, when referring to $m$ inputs, we denote $X_t\in\rr^{b\times m}$, where each column represents one of the $m$ inputs at step $t$ across all samples.

First, due to a key attribute of LRNN, we assume that $T b \leq n$ and $T d \leq n$ (see \Cref{thm:T upper bound} for details).

Second, solving the recursion gives:
\begin{equation}\label{eq:lrnn function}
    Y = D\sum_{t=1}^T{W^{T-t+1}BX_t}
\end{equation}
This implies that for the student to match the teacher, it must solve a system of \( dm = r \) polynomials (since \( Y \in \mathbb{R}^{d \times m} \)) with \( r \) variables. Notably, the high polynomial degree arises only when allocating weights in \( W \). As a result, we show that allocating the subset of learnable synapses to the encoder, decoder, or the recurrent interactions lead to very different allocation strategies. 

\subsection{Learning the decoder} \label{sec:decoder}
We first consider allocations that learn the decoder, $D$. 
Denote $X' = \sum_{t=1}^T{W^{T-t+1}BX_t}$, we get that $Y = DX'$.
Learning the decoder is, therefore, analogous to that of a linear estimator. The conditions for the allocation to be optimal are the same as those described in \Cref{sec:linear estimator}. We formalize that as a theorem:
\begin{theorem}\label{thm:decoder}
For any allocation $\A$ learning the decoder $D\in \mathbb{R}^{d\times n}$, if every row has exactly $m$ learnable weights the allocation is maximal, else it is minimal.
\end{theorem}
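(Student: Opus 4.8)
The plan is to reduce the statement to the linear-estimator warm-up of \Cref{sec:linear estimator}. When the allocation lies entirely in $D$, the recurrent matrix $W$ and the encoder $B$ are fixed, so for a realization of these and of the inputs the matrix $X':=\sum_{t=1}^{T}W^{T-t+1}BX_{t}\in\rr^{n\times m}$ is a constant and, by \Cref{eq:lrnn function}, the student output is $Y=DX'$. Hence the student matches the teacher iff some $D\in\R_{\A}$ solves $DX'=M_{W^{*}}(X)$, which is exactly \Cref{eq:le_row} with the sample matrix $X$ replaced by $X'$. First I would pass to rows as there: with $r_i$ the number of learnable entries in the $i$-th row of $D$, $\tilde X'_i\in\rr^{r_i\times m}$ the corresponding rows of $X'$, and $\hat D_i\hat X'_i$ the contribution of the remaining (fixed) entries of that row, the $i$-th row equation is solvable iff the constant $1\times m$ vector $c_i$ --- the $i$-th row of $M_{W^{*}}(X)$ minus $\hat D_i\hat X'_i$ --- lies in the row space of $\tilde X'_i$, and when $r_i=m$ with $\tilde X'_i$ invertible the analogue of \Cref{eq:good_row_sol} exhibits the solution.

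The substance is to show that $X'$ inherits the general-position property that $X$ is assumed to have. I would prove two facts: (i) almost surely every $m$ of the $n$ rows of $X'$ are linearly independent, so $r_i=m$ forces $\tilde X'_i$ invertible and the $i$-th row equation solvable; and (ii) if $r_i<m$ then $\Pr[c_i\in\operatorname{rowspace}(\tilde X'_i)]=0$, because conditionally on $W$, $B$, $X$ and the teacher, $c_i$ differs by a constant from the Gaussian vector $\hat D_i\hat X'_i$, which has a nondegenerate distribution on $\operatorname{rowspace}(\hat X'_i)$ and --- since by (i) $\operatorname{rowspace}(\tilde X'_i)+\operatorname{rowspace}(\hat X'_i)=\rr^{m}$ while $\dim\operatorname{rowspace}(\tilde X'_i)\le r_i<m$ --- puts no mass on the relatively proper affine set where $c_i\in\operatorname{rowspace}(\tilde X'_i)$. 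Granting (i)--(ii), the theorem follows from $\sum_{i=1}^{d}r_i=r=dm$: if every row has exactly $m$ learnable weights then by (i) every row equation is solvable, so $MP(\A)=1$; otherwise some row has $r_i<m$, and (ii) gives $MP(\A)\le\Pr[\text{the }i\text{-th row equation is solvable}]=0$.

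The step I expect to be the real obstacle is (i): unlike $X$, which is drawn from a distribution assumed to keep every square submatrix invertible, $X'$ is a fixed polynomial function of $W$, $B$ and $X$, so that assumption does not apply directly. I would instead fix a sample matrix in general position and note that every $m\times m$ minor of $X'$ (indexed by a choice of $m$ of its $n$ rows) is a polynomial in the entries of $W$ and $B$; it suffices to show each such polynomial is not identically zero, for then its vanishing set is negligible for the absolutely continuous weight distribution, the union over the finitely many row choices is still negligible, and integrating over $X$ finishes (i). Non-vanishing follows by exhibiting one good pair $(W,B)$: taking $W$ diagonalizable with distinct eigenvalues and $B$ generic writes $X'=[\,WB\ W^{2}B\ \cdots\ W^{T}B\,]\,[\,X_{T}^{\top}\ \cdots\ X_{1}^{\top}\,]^{\top}$ as a (confluent) Vandermonde-type matrix times the stacked sample matrix, and distinctness of the eigenvalues together with the general position of the samples makes each relevant $m\times m$ block invertible; the standing bound $Tb\le n$ is what lets $X'$ reach column rank $m$ here. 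The remaining pieces --- the explicit row solution and the density argument in (ii) --- are routine.
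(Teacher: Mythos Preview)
Your proposal is correct and follows the paper's approach: the paper's entire argument for \Cref{thm:decoder} is the one-line reduction $Y=DX'$ with $X'=\sum_{t}W^{T-t+1}BX_t$, followed by the assertion that the linear-estimator analysis of \Cref{sec:linear estimator} applies verbatim to $X'$ in place of $X$.

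Where you differ is in rigor, not in strategy. The paper never verifies that $X'$ satisfies the general-position property (every $m$ of its $n$ rows linearly independent) that the warm-up argument actually uses; it simply leans on the blanket assumption that Gaussian weights make ``matrices invertible.'' You correctly flag this as the only non-trivial step and supply an argument: each $m\times m$ minor of $X'$ is a polynomial in $(W,B)$, and an explicit Vandermonde-type choice of $(W,B)$ witnesses non-vanishing for generic $X$ (here the hypothesis $m\le Tb$, implicit in the paper's assumption that the $m$ input vectors in $\rr^{Tb}$ are independent, is what makes the rank count work). Your point (ii) --- that when $r_i<m$ the Gaussian vector $\hat D_i\hat X'_i$ almost surely avoids the proper affine set --- is likewise a detail the paper omits. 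So your route is the paper's route, just with the gaps honestly filled.
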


\subsection{Learning the encoder} \label{sec:encoder}
We continue with considering allocations that learn the encoder, $B$. In this section we assume the sample distribution is element-wise i.i.d. Similar to the decoder, this scenario reduces to a system of linear equations, where the matrix is regular for some allocations and singular for others, leading to a match probability of either 1 or 0 for each allocation. However, unlike the decoder, constructing and analyzing the matrix is more involved, as $B$ is embedded within the dynamics (see \Cref{eq:lrnn function}) rather than appearing at the end.

\begin{theorem} \label{thm:encoder}
Any allocation $\A$ learning the encoder $B\in R^{n\times b}$ that follows both of the following conditions is maximal:
\begin{enumerate}
    \item No row of B has more than $Tm$ learnable weights
    \item No columns of B has more than $Td$ learnable weights
\end{enumerate}
Every other allocation is minimal.
\end{theorem}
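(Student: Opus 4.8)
The plan is to view the matching problem, with the decoder $D$ and recurrence $W$ fixed at their random values, as a \emph{linear} system in the learnable entries of $B$, and to characterize exactly when its coefficient matrix is a.s. invertible. By \Cref{eq:lrnn function}, $B\mapsto Y=D\sum_{t=1}^{T}W^{T-t+1}BX_t$ is linear. Split $B=\hat B+\tilde B$ with $\hat B$ supported on $\bar{\A}$ (the fixed entries) and $\tilde B$ on $\A$ (the $r$ learnable entries), and split the teacher $B^*$ the same way. Then $Y=Y^*$ becomes a square linear system $\Phi_{\A}\,\vec{\tilde B}=\gamma$ with $r=dm$ unknowns and equations, where $\gamma\in\rr^{d\times m}\cong\rr^{dm}$ depends only on the fixed data and on $B^*$. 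If $\Phi_\A$ is invertible the system has a solution, so $MP(\A)=1$; hence it remains to pin down invertibility of $\Phi_\A$. A short computation shows the column of $\Phi_\A$ indexed by a learnable entry $(i,j)$ is $\vec{V_iU_j^\top}$, where $V_i\in\rr^{d\times T}$ has columns $DW^{T-t+1}e_i$ ($t=1,\dots,T$) and $U_j\in\rr^{m\times T}$ collects the $j$-th rows of $X_1,\dots,X_T$.

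For the necessity direction I use a dimension count. If some row $i_0$ of $B$ carries more than $Tm$ learnable entries, the corresponding columns of $\Phi_\A$ all lie in $\{\vec{V_{i_0}N}:N\in\rr^{T\times m}\}$, a subspace of dimension $\operatorname{rank}(V_{i_0})\cdot m\le Tm$; if some column $j_0$ of $B$ carries more than $Td$ learnable entries, the corresponding columns all lie in $\{\vec{MU_{j_0}^\top}:M\in\rr^{d\times T}\}$, of dimension $\le Td$. In either case $\Phi_\A$ is rank-deficient for \emph{every} realization, so the set of outputs reachable by the student is a proper affine subspace of $\rr^{dm}$. Since the teacher's weights are independent of the fixed data (and one checks that whenever the two conditions hold one has $m\le Tb$, so the fully-learned model is surjective), the target $\gamma$ lands outside this subspace with probability $1$, i.e.\ $MP(\A)=0$.

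The sufficiency direction is the main obstacle. Here $\det\Phi_\A$ is a polynomial in the entries of $W$, $D$, and $X$, so it is enough to exhibit a single realization at which it does not vanish. My plan is to take $W$ a cyclic shift and $D$ generic, so that each $V_i$ consists of $T$ consecutive columns of $D$ and has rank $\min(d,T)$, and to take the samples $X_t$ supported on pairwise-disjoint single coordinates. Then every entry of $\Phi_\A$ is either $0$ or a single monomial in the remaining free parameters, and $\Phi_\A$ becomes the weighted incidence matrix of a bipartite graph between the $dm$ learnable entries of $B$ and the $dm$ output coordinates. The content of the argument is to choose the support pattern of $X$ so that (i) the two degree hypotheses on $\A$ are \emph{exactly} Hall's marriage condition for this graph, yielding a perfect matching, and (ii) the matched entry can be taken to be the unique nonzero entry in its row, so that after a permutation $\Phi_\A$ is triangular; then $\det\Phi_\A\neq 0$ at this realization, hence generically, hence a.s., giving $MP(\A)=1$.

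I expect the delicate point to be precisely step (i)--(ii) of the sufficiency argument: matching the two counting conditions ($\le Tm$ per row, $\le Td$ per column, total $dm$) against Hall's condition for the specialized bipartite graph, while simultaneously keeping the entries monomial and arranging a row-isolating matching so the surviving diagonal term cannot be cancelled by the other Leibniz terms. A possible fallback if a single clean specialization is hard to produce is an induction that peels off one learnable weight together with one output coordinate at a time, checking that the residual allocation still satisfies both conditions; but ruling out cancellation still requires the triangularity observation above.
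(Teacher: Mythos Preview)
Your reduction to a square linear system $\Phi_\A\,\vec{\tilde B}=\gamma$ whose columns are $\vec{V_iU_j^\top}$, and your necessity argument by dimension counting, match the paper's treatment exactly. The paper packages this as a general lemma (\Cref{lem:encoder helper}): a family $\{V_iU_j\}$ of $dm$ rank-$\le T$ matrices in $\rr^{m\times d}$, with pairwise-uncorrelated entries, is almost surely a basis if and only if no $V_i$ is repeated more than $Td$ times and no $U_j$ more than $Tm$ times. The ``only if'' half is your rank bound.

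For sufficiency the paper takes a genuinely different route from yours. Rather than a deterministic specialization of $(W,D,X)$ plus Hall's condition, it proves the ``if'' half of \Cref{lem:encoder helper} probabilistically, by induction on the number of \emph{distinct} $V_i$'s and $U_j$'s appearing. The base case (minimal number of distinct $V_i$'s) is a direct block argument; the inductive step fixes one $V_i$, separates $S$ into $S_v$ and $S\setminus S_v$, and argues that because the random subspace $\mathrm{span}(S_v)$ is uncorrelated with $\mathrm{span}(S\setminus S_v)$, any nontrivial linear dependence has probability zero. The required uncorrelatedness of the $U_j$ (built from $DW^k$) is supplied by a separate moment computation (\Cref{lem:w no correlation}) using Isserlis' theorem; this is also why the encoder section alone imposes the extra element-wise i.i.d.\ assumption on $X$.

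Your polynomial-nonvanishing strategy is legitimate in principle and, if it went through, would have the advantage of working for any absolutely continuous $(W,D,X)$ without tracking correlations. But the execution has a real gap exactly where you flag it. With $W$ a cyclic shift, the $V_i$ are \emph{overlapping} windows of consecutive columns of $D$: $V_i$ and $V_{i'}$ share $D$-columns whenever $|i-i'|<T$. Consequently the nonzero entries of $\Phi_\A$ are not algebraically independent monomials, and a perfect matching in your bipartite graph only gives one nonzero term of the Leibniz sum, which can be cancelled by others. You assert one can choose the support pattern of $X$ so that the matching is row-isolating (yielding a triangular $\Phi_\A$), but under your specialization, row $(p,s)$ has nonzero entries at \emph{every} column $(i,j)$ with $j$ in the $X$-support of sample $s$ and arbitrary $i$; isolating a single column in each row therefore forces strong constraints that are not obviously compatible with both the row bound $\le Tm$ and the column bound $\le Td$ on $\A$. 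Neither your Hall's reduction nor your fallback induction addresses this cancellation issue. The paper's probabilistic induction sidesteps it entirely by never writing down $\det\Phi_\A$.
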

Notice that as the number of learnable weights are constant, allocating too many learnable weights in a row (column) leads to insufficient learnable weights in the other rows (columns). Furthermore, as in for the decoder, each allocation exhibits a dichotomy — being either minimal or maximal — with maximal expressivity achieved when the allocation is more distributed.

\subsection{Learning the recurrent connections} \label{sec:connectivity}
Although the proofs for allocating the learnable subset to the decoder or encoder differed, the underlying rationale was similar. In both cases, we transformed the problem into a system of linear equations, where maximality is achieved when the matrix is invertible and minimality when it is not. However, when the allocation is applied to learning the recurrent connections, as shown in \Cref{eq:lrnn function}, the problem shifts to solving a system of polynomials of degree $T$, making it more complex than the linear cases.

We illustrate this phenomenon in \Cref{sec:example} with a simple example network, where $T=2$. In this scenario, \Cref{eq:lrnn function} simplifies into two equations, one linear and one quadratic, which can be combined into a single quadratic equation, represented as $ax^2 + bx + c=0$. It is well-known that this type of equation is solvable if and only if $b^2 \geq 4ac$. We demonstrate that for different allocations, distinct expressions for $a$, $b$, and $c$ arise, leading to varying match probabilities. Moreover, for certain allocations, we find that $a=0$, which implies the equation is linear, and the match probability is 1. This observation raises the question of whether this phenomenon, where maximal allocations exist even though the equations are non-linear, also occurs in larger models and how to identify such allocations.

To gain insights on under which conditions allocations become maximal, we simplify the high-degree equations. The following lemma reduces the problem to a system of linear and quadratic equations:
\begin{lemma} \label{lem:lrnn equations}
Given samples $X$ and labels $Y$ created by the teacher, the student matches them using allocation $\A$  if and only if the following set of equations is solvable:
\begin{align}
    Wg_B(F) &= F  \label{eq:rows dynamics mt}\\
    DFX &= Y \label{eq:rows samples mt}
\end{align}
\end{lemma}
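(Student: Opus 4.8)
The plan is to prove \Cref{lem:lrnn equations} by exhibiting a bijection (or at least a solvability-preserving correspondence) between realizations $W \in \R_{\A;\hat W}$ that match the teacher and solutions of the auxiliary system \eqref{eq:rows dynamics mt}--\eqref{eq:rows samples mt}. The key idea, which I expect the paper intends, is to introduce the matrix $F$ as a new variable standing for the ``propagated input'' $F = \sum_{t=1}^T W^{T-t+1} B X_t$ (so that $F \in \rr^{n \times m}$), together with an auxiliary shifted-sum object $g_B(F)$ that collects the lower-order terms of the recursion. Concretely, unrolling \eqref{eq:lrnn function} one step shows $F = W\bigl(BX_T \text{ shifted in}\bigr) + \dots$, which can be rewritten as $F = W g_B(F)$ where $g_B(\cdot)$ is the affine map sending the ``depth-$\le T-1$ accumulator'' back into the recursion and injecting the fresh input terms $B X_t$. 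The point of this substitution is that \eqref{eq:lrnn function} is a single equation of polynomial degree $T$ in the entries of $W$, whereas \eqref{eq:rows dynamics mt} is degree $2$ (one factor of $W$ times the affine $g_B(F)$) and \eqref{eq:rows samples mt} is degree $1$; we have traded degree for extra variables.

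The steps, in order, would be: (1) Define $g_B$ precisely — I would take $g_B(F)$ to be the $n \times m$ matrix whose role is to satisfy the telescoping identity $g_B(F) = BX_T + W\bigl(BX_{T-1} + W(\dots)\bigr)$ expressed implicitly, so that $g_B$ is an affine function of $F$ with coefficients built from $B$ and the $X_t$; the cleanest formulation is to let $g_B(F)$ be defined by the relation that makes $W g_B(F) = F$ equivalent to the true recursion when $F$ is the genuine propagated state. (2) Forward direction: given $W \in \R_\A$ with $M_W(X) = M_{W^*}(X)$, set $F := \sum_{t=1}^T W^{T-t+1} B X_t$; verify by the unrolling identity that $W g_B(F) = F$, and observe $DFX = \dots$ — wait, more carefully, $Y = DF$ directly from \eqref{eq:lrnn function}, and since the teacher's $Y$ is matched we get \eqref{eq:rows samples mt} (the extra $X$ on the right of \eqref{eq:rows samples mt} versus \eqref{eq:lrnn function} suggests $F$ here denotes a per-sample state matrix of a slightly different shape, so I would reconcile the indexing so that $DFX = Y$ holds — this bookkeeping is step (2a)). (3) Reverse direction: given any pair $(W, F)$ with $W \in \R_\A$ solving the auxiliary system, show that $F$ is forced to equal the genuine propagated state $\sum_t W^{T-t+1}BX_t$; this is where the structure of $g_B$ and the invertibility assumptions on the sample matrices (stated in the Assumptions paragraph) do the work — from $W g_B(F) = F$ one peels off one layer at a time and uses that the input blocks are in ``general position'' to conclude $F$ is uniquely determined by $W$, hence $Y = DF$ gives the match. (4) Conclude that solvability of one system is equivalent to solvability of the other.

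The main obstacle I anticipate is getting the definition of $g_B$ and the precise shapes/indexing of $F$ exactly right so that both \eqref{eq:rows dynamics mt} and \eqref{eq:rows samples mt} literally hold as written, and in particular arguing the uniqueness in step (3): a priori the auxiliary system could have ``spurious'' solutions $(W, F)$ where $F$ is not the true propagated state, and one must rule these out (or else the lemma would only give one-directional implication). I expect this is handled by the fact that $g_B$ is affine and the fixed-point equation $F = W g_B(F)$, combined with $W$ being a generic (invertible, by the Assumptions) perturbation of fixed weights, pins down $F$ — essentially $(I - W\cdot(\text{linear part of } g_B))$ is invertible, so $F$ is a function of $W$ alone. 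A secondary subtlety is that the lemma is stated for a fixed draw of $X, Y, \hat W$ (it says ``given samples $X$ and labels $Y$ created by the teacher''), so no probability is involved here — but we are implicitly using that the realized $X$ lies in the measure-one good set from the Assumptions, which I would state explicitly at the top of the proof.
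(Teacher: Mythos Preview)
Your two-direction skeleton is right, but you have misidentified what $F$ and $g_B$ are, and this sends your step (3) down an unnecessary and shaky path.

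In the paper, $F\in\rr^{n\times Tb}$ (not $n\times m$), written in blocks $F=(F_1\mid F_2\mid\cdots\mid F_T)$ with $F_t\in\rr^{n\times b}$, and $g_B$ is simply the block-shift
\[
g_B(F)=(F_2\mid F_3\mid\cdots\mid F_T\mid B).
\]
It does not involve $X$ at all. With this definition, \eqref{eq:rows dynamics mt} unpacks to $WF_{t+1}=F_t$ for $1\le t\le T-1$ and $WB=F_T$, which by back-substitution \emph{forces} $F_t=W^{T-t+1}B$. So there are no ``spurious'' $(W,F)$ pairs to rule out, and no appeal to invertibility or genericity of $X$ is needed: the recursion itself pins $F$ down as a function of $W$. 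Then $FX=\sum_t F_t X_t$ (here $X\in\rr^{Tb\times m}$ stacks the $X_t$), so \eqref{eq:rows samples mt} is exactly \eqref{eq:lrnn function}. The forward direction is equally short: given a matching $W$, set $F_t:=W^{T-t+1}B$ and verify both equations directly.

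Your proposed $F:=\sum_t W^{T-t+1}BX_t$ is the wrong object (it already has $X$ baked in, which is why you couldn't reconcile the extra $X$ in \eqref{eq:rows samples mt}), and your speculation that $g_B$ is an affine map built from $B$ and the $X_t$, together with the $(I-W\cdot\text{linear part})^{-1}$ argument, is both incorrect and unnecessary. Once you use the paper's shift-operator $g_B$, the lemma is a two-line bookkeeping exercise in each direction.
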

with $W\in \R_{\A},\,F\in\rr^{n\times Tb}$ and $g_B$ is a linear operator that depends on $B$, formally defined in the appendix. Note that these equations are essentially \Cref{eq:lrnn function} (see appendix), where \Cref{eq:rows dynamics mt} is just a step of the dynamics. The key insight is that by treating the dynamic steps themselves as variables, we alter the problem's structure, making it more tractable. Introducing the $nTb$ new auxiliary variables, denoted as $F$, in addition to the variables in $W$, reduces the polynomial degree of $T$ in \Cref{eq:lrnn function} to a system of linear and quadratic equations in $W$ and $F$.

Generally, the equations in \Cref{eq:rows dynamics mt} are quadratic, while those in \Cref{eq:rows samples mt} are linear. However, if a full row in $W$ has no learnable weights, the $Tb$ equations corresponding to that row become linear, involving only the variables from $F$. As shown in \Cref{sec:connections appendix}, a similar observation applies to columns. This means that if an allocation uses fewer rows or columns for its learnable weights, the resulting system will contain more linear equations and fewer quadratic ones.

One might ask: How does altering the number of rows or columns--and thereby the number of quadratic equations--affect overall solvability? First, if an allocation uses too few rows or columns, the match probability will be zero. This occurs because when there are too many equations involving only the variables from $F$, we end up with a situation where there are more variables than equations, resulting in a "waste" of variables. This idea is formalized in the following theorem:
\begin{theorem}\label{thm:lrnn necessary conditions}
Any allocation $\A$ learning the recurrent connections $W\in \mathbb{R}^{n\times n}$ that follows at least one of the following conditions is minimal:
\begin{enumerate}
    \item A row in $W$ has more than $Tb$ learnable weights
    \item A column in $W$ has more than $Td$ learnable weights
\end{enumerate}
\end{theorem}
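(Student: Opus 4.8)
The strategy is to use \Cref{lem:lrnn equations}, which recasts the matching problem as solvability of the bilinear system $Wg_B(F)=F$, $DFX=Y$ with $W\in\R_{\A}$ and auxiliary variables $F\in\rr^{n\times Tb}$. I want to show that if some row $i$ of $W$ carries more than $Tb$ learnable weights (the column case being symmetric via the transpose-type argument alluded to in \Cref{sec:connections appendix}), then generically no solution exists. The key structural observation is a dimension count: the auxiliary variables $F$ are forced to live on a variety cut out by the ``linear'' equations — those rows of $Wg_B(F)=F$ corresponding to rows of $W$ with \emph{no} learnable weights — together with $DFX=Y$, and I need to show these constraints already pin $F$ down tightly enough that the remaining freedom cannot absorb the extra learnable weights concentrated in row $i$.

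\textbf{Step 1: Count equations and effective variables.} The total system has $r=dm$ scalar equations from $DFX=Y$ is wrong — rather, $DFX=Y$ contributes $dm$ equations, and $Wg_B(F)=F$ contributes $nTb$ equations; the variables are the $r$ entries of $W$ in $\A$ plus the $nTb$ entries of $F$. Since $r=dm$ (our running choice $m=r/d$), a naive count balances. The point is a \emph{local} count: fix a generic target and look at row $i$. The $Tb$ equations coming from row $i$ of $Wg_B(F)=F$ read $\tilde W_i\,\widetilde{g_B(F)}=F_i - \hat W_i\,\widehat{g_B(F)}$, exactly analogous to \Cref{eq:le_row}: here $\tilde W_i$ has $r_i>Tb$ unknowns, $\widetilde{g_B(F)}\in\rr^{r_i\times Tb}$, and — crucially — for \emph{fixed} $F$ this is an underdetermined linear system in $\tilde W_i$, so it imposes \emph{no} constraint on $F$ and ``wastes'' $r_i-Tb$ of the learnable weights.

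\textbf{Step 2: The waste forces a deficit elsewhere.} Because $r=\sum_i r_i = dm = Td\cdot(m/T)$... more carefully: the other rows must then collectively contain $r-r_i<dm-Tb$ learnable weights. After peeling off row $i$, what remains is: determine $F$ (that is, $nTb$ numbers) from the other $n-1$ rows of $Wg_B(F)=F$ — of which the rows with no learnable weights are linear in $F$ — plus $DFX=Y$. I would argue that the generic fiber of the map $(W_{\A\setminus\text{row }i},F)\mapsto(Wg_B(F)-F,\ DFX)$ has positive dimension over the relevant target, hence by a Sard/parameter-transversality argument (using that $\hat W$, $X$, $W^*$ are drawn from continuous distributions, as in the standing assumptions) the actual teacher-generated target lies off the image with probability one. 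Equivalently: moving $r_i-Tb$ weights into row $i$ strictly decreases the rank of the Jacobian of the full system below $dm$, so the image is a measure-zero subset of $\rr^{d\times m}$, and $MP(\A)=0$.

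\textbf{Main obstacle.} The genuinely delicate point is making the ``wasted variables'' argument rigorous at the level of the \emph{nonlinear} system rather than just counting: I must verify that the extra freedom in $\tilde W_i$ for fixed $F$ genuinely fails to re-couple to $F$ through the other equations — i.e., that one cannot trade the row-$i$ slack against constraints elsewhere. I expect this needs the precise form of $g_B$ from the appendix (in particular that $g_B(F)$ depends on $F$ in a way that keeps the row-$i$ block of $Wg_B(F)=F$ solvable-for-$\tilde W_i$ on an open dense set of $F$), plus the invertibility/genericity guaranteed by the Gaussian $\W$ and the independence assumptions on $\mathcal X,\mathcal T$. Handling the column condition should then follow by the symmetric reduction indicated in \Cref{sec:connections appendix}, swapping the roles of the $DFX=Y$ block and a dual factorization of the dynamics, so I would state it as a remark rather than repeat the computation.
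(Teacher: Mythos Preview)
Your Step~1 is exactly the paper's structural observation: for any $F$, the $i$th row block $W_i g_B(F)=F_i$ imposes only $Tb$ scalar constraints on the $r_i>Tb$ learnable entries of $W_i$, so $r_i-Tb$ of them are redundant. Where you diverge is in Step~2. You reach for a Jacobian-rank / Sard-type argument to conclude the image is a null set in $\rr^{d\times m}$, and you correctly flag that making this rigorous for the bilinear system is the hard part. The paper avoids that difficulty entirely by a one-line reduction: given any solution $(W,F)$ for $\A$, freeze $r_i-Tb$ of the row-$i$ learnable weights to their solved values and observe that the remaining $Tb$ row-$i$ weights can always be recovered from $\tilde W_i=\bigl(F_i-\hat W_i\,\widehat{g_B}(F)\bigr)\bigl(\widetilde{g_B}(F)\bigr)^{-1}$ (generic invertibility of the $Tb\times Tb$ block). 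Hence whenever $\A$ matches, so does the smaller allocation $\A_2$ with $|\A_2|=r-(r_i-Tb)<dm$; but $MP(\A_2)=0$ by the elementary \Cref{lem:small alloc} (fewer variables than polynomial equations), so $MP(\A)\le MP(\A_2)=0$. This is precisely the ``wasted variables'' intuition you articulate, but packaged as $MP(\A)\le MP(\A_2)$ rather than as a transversality statement, which sidesteps your main obstacle completely. Your route could in principle be made to work, but it would need genuine algebraic-geometry input to control the rank of the polynomial map globally; the paper's reduction needs only that a generic $Tb\times Tb$ submatrix of $g_B(F)$ is invertible. The column case is handled in the paper by a dual formulation (\Cref{lem:lrnn equations columns}) using a shift operator built from $D$ instead of $B$, with the identical reduction-to-smaller-allocation argument.
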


Second, when an allocation utilizes a certain number of rows or columns, the problem can simplify into a linear system, which will be solvable with probability 1. In this case, the allocation becomes maximal. This concept is formalized in the following theorem:
\begin{theorem} \label{thm:lrnn sufficient conditions}
Any allocation $\A$ learning the recurrent connections $W\in \mathbb{R}^{n\times n}$, that follows one of the following conditions is maximal:
\begin{enumerate}
    \item Each row has $Tb$ or 0 learnable weights
    \item Each column has $Td$ or 0 learnable weights
\end{enumerate}
\end{theorem}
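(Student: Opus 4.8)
The plan is to leverage \Cref{lem:lrnn equations}, which reduces the matching condition to solvability of the system $Wg_B(F) = F$ and $DFX = Y$ in the unknowns $W \in \R_\A$ and $F \in \rr^{n \times Tb}$. I will treat the two conditions symmetrically: first handle the row condition (each row of $W$ has $Tb$ or $0$ learnable weights), then obtain the column case by a transpose-type argument, exactly as the excerpt hints that "a similar observation applies to columns." So fix an allocation where each row is either \emph{full} (exactly $Tb$ learnable weights) or \emph{empty} ($0$ learnable weights); let $\rho$ be the number of full rows. Counting parameters, $r = \rho \cdot Tb$, and since we study $MP(\A, r/d)$ we have $m = r/d = \rho Tb/d$.

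The strategy is a two-stage \emph{decoupling} of the system. \textbf{Stage 1: pick $F$ first, ignoring $W$.} Because the empty rows of $W$ contribute linear equations in $F$ alone (the $i$-th block of \Cref{eq:rows dynamics mt} reads $0 = F_i$ for an empty row $i$ — or more precisely an equation forcing that block of $F$ to lie in a fixed subspace determined by $g_B$), I first solve for a valid $F$ using only \Cref{eq:rows samples mt}, $DFX = Y$, together with these linear constraints coming from the empty rows. Here I would use the paper's standing assumptions — $X$ square (since $m \le Tb \le n$) and invertible, teacher outputs independent, $D$ and $B$ generic hence full rank — to argue that the linear map $F \mapsto (DFX, \{F_i : i \text{ empty}\})$ is surjective onto the relevant space, so a solution $F^\star$ exists almost surely; moreover the solution set is an affine subspace of positive dimension, which I will need in Stage 2. \textbf{Stage 2: solve for $W$ given $F = F^\star$.} Now \Cref{eq:rows dynamics mt} becomes \emph{linear} in the remaining unknowns — the learnable entries of the full rows of $W$ — since $g_B(F^\star)$ is a fixed matrix. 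Row $i$ being full means all $n$ entries of the $i$-th row of $W$ are... no: it means $Tb$ of them are learnable. The $i$-th block of \Cref{eq:rows dynamics mt} is $W_i \, g_B(F^\star) = F^\star_i$, which is $Tb$ scalar equations; splitting $W_i$ into its $Tb$ learnable and $n - Tb$ fixed entries gives $\widetilde{W}_i \,\widetilde{G} = F^\star_i - \widehat{W}_i \widehat{G}$ where $\widetilde{G}$ is the $Tb \times Tb$ submatrix of $g_B(F^\star)$ selected by the learnable columns in row $i$. This is exactly the linear-estimator situation of \Cref{sec:linear estimator}: a square system, solvable whenever $\widetilde{G}$ is invertible.

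The main obstacle — and the step I would spend the most care on — is showing that $\widetilde{G}$ is invertible with probability $1$, i.e. that $g_B(F^\star)$ is "generic enough" after we have already pinned $F^\star$ down in Stage 1. The subtlety is that $F^\star$ is not independent of $B$ (the constraints from empty rows involve $g_B$), so one cannot naively say "a generic matrix has invertible minors." The fix is to use the positive-dimensional freedom in the choice of $F^\star$ noted above: the set of valid $F$ is an affine variety, and $\det \widetilde{G}$ is a polynomial on it that is not identically zero (one exhibits a single valid $F$ making it nonzero, using that $B$ is generic), hence nonzero on a full-measure subset; choosing $F^\star$ there makes every $\widetilde{G}$ across all full rows simultaneously invertible (finitely many nonvanishing conditions). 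Combining Stages 1 and 2: almost surely there exist $F^\star$ and then $W \in \R_\A$ solving the full system, so by \Cref{lem:lrnn equations} the student matches the teacher, giving $MP(\A) = 1$. For the column condition, note $Y = D\sum_t W^{T-t+1} B X_t$ and transpose: $Y^\dagger = \sum_t X_t^\dagger B^\dagger (W^\dagger)^{T-t+1} D^\dagger$, which has the same structural form with the roles of encoder/decoder and $b/d$ swapped and $W$ replaced by $W^\dagger$; a column of $W$ is a row of $W^\dagger$, so the column version follows from the row version applied to this transposed network. I would double-check that the transposed system still satisfies the paper's genericity assumptions (it does, since transposition preserves invertibility and independence) and that $Tb \le n$, $Td \le n$ are used symmetrically.
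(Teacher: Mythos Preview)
Your overall two-stage strategy is exactly the paper's: solve the empty-row block of \Cref{eq:rows dynamics mt} together with \Cref{eq:rows samples mt} for $F$, then back-substitute into the full-row block to recover the learnable entries of $W$ as a square linear system. The transpose argument for the column condition is also morally what the paper does (there it is phrased via a dual shift operator $g_D$ and a companion lemma rather than literal transposition, but the content is the same).

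There is, however, a concrete gap in your Stage~1/Stage~2 interface. Count equations: with $\rho$ full rows you have $n-\rho$ empty rows, giving $(n-\rho)Tb$ linear constraints on $F$; the constraint $DFX=Y$ gives $dm$ more; and since $r=\rho\,Tb=dm$, the total is $(n-\rho)Tb+\rho Tb=nTb$, which equals the number of entries of $F$. So under the paper's genericity assumptions the Stage~1 system pins $F$ down \emph{uniquely}---there is no ``affine subspace of positive dimension'' to exploit. Your Stage~2 argument for the invertibility of the $Tb\times Tb$ minor $\widetilde G$ relies precisely on this nonexistent freedom (``choosing $F^\star$ there makes every $\widetilde G$ invertible''), so as written it does not go through. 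The repair is not to vary $F^\star$ but to observe that the unique $F^\star$ is a rational function of the random data $D,B,X,Y,\widehat W$, hence $\det\widetilde G$ is a (non-identically-zero) rational function of those same Gaussian parameters and vanishes with probability zero; this is the implicit step the paper takes when it says ``since every row of $W_{\text{TOP}}$ contains precisely $Tb$ variables, the equation is solvable.'' A minor side remark: an ``empty'' row has all entries \emph{fixed}, not zero, so its contribution is $W_i\,g_B(F)=F_i$ with $W_i$ constant---you note this in your parenthetical, but the initial ``$0=F_i$'' is misleading and should be dropped.
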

Namely, allocation that uniformly distribute the learnable weights in $\frac{r}{Tb}$ rows or $\frac{r}{Td}$ columns is always maximal. Notice that this is the least number of rows and columns that allocation should use, as less rows or columns can't satisfy the conditions in \Cref{thm:lrnn necessary conditions}. Recall that we assumed $Tb$ and $Td$ to be smaller than $n$, ensuring that the number of learnable weights does not exceed the length of any row or column. 

In other cases, we still need to solve a system of quadratic equations to determine whether the student matches the teacher. The problem of determining the number of solutions for a set of polynomial equations is well studied in mathematics. For example, Smale’s 17th problem \citep{Smale1998} addresses the algorithmic challenges of root-finding for complex polynomials, though much less is known for polynomials with real coefficients. Recently, \citet{subag2024concentrationzerosetlarge} investigated the number of solutions for a system of $n$ polynomials in $n$ variables with Gaussian coefficients and showed that the probability of finding at least one solution increases with the number of polynomials (and variables), eventually converging to 1. Therefore, we conjectured that as an allocation uses more rows—thus introducing more quadratic equations—the match probability will also increase. Moreover, as increasing the model size increases the number of quadratic equations, we also anticipate that for large networks the match probability for all allocations will approach 1.

To test this conjecture, we used numerical simulations to estimate the match probability while restricting the allocation of learning synapses to a subset of the rows. \Cref{fig:experiments1}a shows that across different network sizes the match probability increases as more rows are utilized in the allocation. Additionally, for the same percentage of rows, the  match probability increases with the network size. This is consistent with the intuition that increasing the total number of polynomials (by increasing the network size) increases the probability to find a solution to the set of polynomials. \Cref{fig:experiments1}b depicts the match probability for models with constant $r/n$, which signifies the sharp transition from minimal (zero expressivity) to maximal expressivity at the
$\frac{r}{Tb}$th row (see \Cref{thm:lrnn necessary conditions,thm:lrnn sufficient conditions}).

\begin{figure}[h]
    \centering        \includegraphics[width=0.75\linewidth]{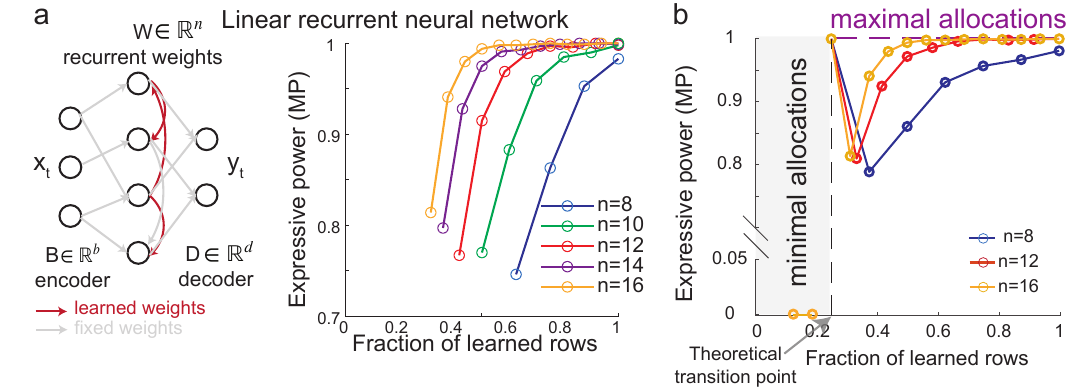}
    \caption{ a. Estimation of MP for allocations in the recurrent weights of LRNN with $d=4$ for different sizes of the hidden state, $n$. Note that MP increases with $n$. b. Same as (a), but with fixed $d/n=\frac{1}{4}$. In this case, $\frac{r}{Tb} = \frac{n}{4}$, which means that allocations using $\frac{1}{4}$ of the rows for every $n$ follow \Cref{thm:lrnn sufficient conditions}, and are thus maximal. Allocations using less rows are minimal due to \Cref{thm:lrnn necessary conditions}. Note that MP approaches 1 as $n$ increases. All experiments ran with second order optimization methods. See \Cref{sec:experiments} for full details. }
    \label{fig:experiments1}
\end{figure}
\section{Linear multi-layer feed-forward model} \label{sec:ff}
The findings for linear RNNs provide insights into the expressive power of subset learning in linear multi-layer feedforward networks (LFFNs). In these models, the input is multiplied by several matrices. Formally:

\[
Y = W_L W_{L-1} \cdots W_2 W_1 X
\]

where $W_l \in \mathbb{R}^{n_l \times n_{l-1}}, \, 2 \leq l \leq L-1$, and $W_L \in \mathbb{R}^{d \times n_{L-1}}, W_1 \in \mathbb{R}^{n_1 \times q}$. Additionally, we require $n_l \geq q, d$ for every $1 \leq l \leq L$. This ensures that the network dimension never decreases below $q$ and $d$.

As the last layer acts as the decoder, it has indeed the same attributes as learning the decoder in the LRNN model as we saw in \Cref{sec:decoder}. Allocations of learnable synapses in one of the intermediate hidden layers, or the encoding layer, is reminiscent of allocations in the encoder of the LRNNs. This is formalized in the next theorem:
 \begin{theorem} \label{thm:ff-single}
For any allocation $\A$ learning an intermediate or encoder layer $W_l\in\mathbb{R}^{n_l\times n_{l-1}}$ is maximal if and only if it follows one of the following:
\begin{enumerate}
    \item There is no rows that has more than $m$ learnable weights
    \item There is no columns that has more than $d$ learnable weights
\end{enumerate}
Otherwise, the allocation is minimal.
 \end{theorem}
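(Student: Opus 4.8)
The plan is to reduce \Cref{thm:ff-single} to the same linear algebra underlying \Cref{thm:encoder}, specialized to a single ``step'' so that only rank-one tensors survive. Fix the layer $l$ (the last layer is excluded because it is the decoder, already covered by \Cref{thm:decoder}) and absorb the other layers into effective matrices: let $A = W_L W_{L-1}\cdots W_{l+1}\in\rr^{d\times n_l}$ be the random downstream product and $Z = W_{l-1}\cdots W_1 X\in\rr^{n_{l-1}\times m}$ the random upstream product applied to the inputs, with $Z=X$ when $l=1$ is the encoder layer. Then $Y = A W_l Z$, so matching the teacher means finding $W_l\in\R_{\A}$ with $A W_l Z = Y^{*}$, where $Y^{*}=M_{W^{*}}(X)$. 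A preliminary lemma, shared with the encoder analysis and resting on the fact that products of independent mean-zero Gaussian matrices satisfy no unexpected polynomial identities, records the general-position facts needed: since every layer width is at least $d$ and at least $q$, the matrix $A$ behaves like a generic $d\times n_l$ matrix (any $d$ of its columns are a.s.\ independent), $Z$ behaves like a generic matrix of its rank, and $Y^{*}$ is a.s.\ generic in $\rr^{d\times m}$ within the row space of $X$.

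Next I linearize. Writing $W_l = \widehat{W}_l + \sum_{(i,j)\in\A} w_{ij} E_{ij}$ with $\widehat{W}_l$ the fixed part and $E_{ij}$ the matrix units, and using $A E_{ij} Z = a_i z_j^{\top}$ (with $a_i$ the $i$th column of $A$ and $z_j^{\top}$ the $j$th row of $Z$), the matching condition becomes the linear system
\begin{equation*}
\sum_{(i,j)\in\A} w_{ij}\,\bigl(a_i\otimes z_j\bigr) \;=\; Y^{*} - A\widehat{W}_l Z \;=:\; G
\end{equation*}
in the $r$ unknowns $w_{ij}$; since the sample count is $m=r/d$ throughout, this is a square $dm\times dm$ system with a.s.\ generic right-hand side $G\in\rr^{d\times m}\cong\rr^{dm}$. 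Consequently $\A$ is maximal exactly when the $dm$ rank-one tensors $\{a_i\otimes z_j : (i,j)\in\A\}$ are a.s.\ linearly independent, and minimal when they are a.s.\ dependent. It therefore suffices to prove two implications: (i) if every row of $W_l$ carries at most $m$ learnable weights and every column at most $d$, the tensor family is a.s.\ independent; and (ii) if some row carries more than $m$, or some column more than $d$, it is a.s.\ dependent.

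Implication (ii) is the easy dimension count. If some row $i_0$ carries at least $m+1$ learnable weights, the corresponding tensors $a_{i_0}\otimes z_j$ all lie in the $m$-dimensional subspace $a_{i_0}\otimes\rr^{m}$ and are dependent; by symmetry, if some column $j_0$ carries at least $d+1$ learnable weights the tensors $a_i\otimes z_{j_0}$ lie in the $d$-dimensional subspace $\rr^{d}\otimes z_{j_0}$. Either way the full family is dependent, its span is an a.s.\ proper subspace of $\rr^{d\times m}$, the generic $G$ a.s.\ lies outside it, so the system is unsolvable and $MP(\A)=0$. This is the $T=1$ shadow of \Cref{thm:encoder}, whose bounds $Tm$ and $Td$ specialize to $m$ and $d$.

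Implication (i) is the substantive part, and I would prove it by exhibiting a single witness pair $(A,Z)$: the $dm\times dm$ matrix whose columns vectorize the tensors $a_i\otimes z_j$ has determinant a polynomial in the entries of $A$ and $Z$, so by the genericity lemma it is enough to make that determinant nonzero for one well-chosen pair. Here I view $\A$ as a bipartite multigraph between the used rows and used columns of $W_l$, with degree at most $m$ on the row side, degree at most $d$ on the column side, and exactly $dm$ edges; a degree-constrained decomposition of such a graph into balanced matchings — a balanced (equitable) edge coloring, or equivalently a repeated application of Hall's theorem — lets me activate the coordinates of $\rr^{d}\otimes\rr^{m}$ block by block so that the coefficient matrix becomes triangular with nonzero diagonal. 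This is exactly the construction behind the proof of \Cref{thm:encoder} with $T=1$, which collapses the linear/quadratic bookkeeping there and isolates this purely combinatorial kernel. I expect that kernel — translating the two degree bounds into an explicit invertible assignment, equivalently establishing generic independence of a degree-bounded sub-grid of rank-one tensors — to be the main obstacle; a secondary point needing care, used in both directions, is the genericity of $A$, $Z$ and $G$ coming from the Gaussian-product structure. The whole argument is uniform in whether $W_l$ is an intermediate layer or the encoder, the only change being $Z=X$ versus $Z$ a product of matrices.
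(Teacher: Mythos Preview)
Your reduction --- writing $Y=AW_lZ$, vectorizing so the coefficient matrix has columns $a_i\otimes z_j$ for $(i,j)\in\A$, and reducing maximality to generic independence of that rank-one family --- is exactly the paper's argument (the appendix proof, \Cref{thm:ff-single-appendix}, vectorizes via the Kronecker identity and arrives at the same criterion), and your dimension count for direction (ii) is the ``first direction'' of the paper's \Cref{lem:encoder helper}. Where you diverge is direction (i). The paper handles generic independence by invoking \Cref{lem:encoder helper} with $k=1$; that lemma's second direction is an induction on the number of distinct $a_i$'s (equivalently $z_j$'s) appearing in the family, with a null-set argument at each inductive step. You instead propose a single explicit witness $(A_0,Z_0)$ built from a bipartite edge-decomposition of $\A$. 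This is a genuinely different route, and potentially cleaner --- one good witness would replace the somewhat delicate measure-zero reasoning in the paper's induction. Two caveats, however. First, your claim that this is ``exactly the construction behind the proof of \Cref{thm:encoder}'' is inaccurate: the paper's encoder proof also goes through the inductive \Cref{lem:encoder helper}, not a witness construction. Second, the combinatorial kernel you flag is not dispatched by a routine Hall or edge-coloring argument: the naive witness $a_i=e_{\pi(i)}$, $z_j=e_{\sigma(j)}$ already fails for $d=m=2$ and $\A=\{(1,1),(1,2),(2,3),(3,3)\}$ (no choice of $\pi,\sigma$ makes the four tensors hit all four standard basis elements of $\rr^{2}\otimes\rr^{2}$), so whatever ``block-by-block triangular'' scheme you have in mind must be more subtle than assigning each vector a single active coordinate.
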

Allocating learnable synapses across multiple matrices is analogous to learning the recurrent connections in LRNNs. This type of allocation results in $r$ polynomials with $r$ variables, where the degree of the polynomials corresponds to the number of learnable layers. Similarly, as we showed for allocations of the recurrent connections, certain allocations can reduce the number of polynomials involved.

Using the same framework and arguments as \Cref{lem:lrnn equations}, one can show that the student matches the teacher iff the following equations are solvable:
\begin{align} 
    W_1X &= F_1 \\
    W_lF_{l-1} &= F_l\quad 2 \le l \le L-1 \label{eq:ff eq}\\
    W_LF_{L-1} &= Y 
\end{align}
with $W_l\in\rr^{n_{l} \times n_{l-1}},\,2 \le l \le L-1, \quad W_L\in\rr^{d \times n_{L-1}},\,W_1\in\rr^{n_1\times q}, \quad F_l\in\rr^{n_{l} \times m},\,1 \le l \le L-1$. We introduce many new variables, denoted as $F$. However, if an allocation is restricted to only a small subset of rows, many of the resulting equations become linear. By solving these linear equations, we can reduce the number of polynomials to fewer than $r$.

As mentioned earlier, we expect that the probability of a set of polynomials having a solution to increase  as the number of polynomials increases. Therefore, we expect that allocations restricted to a small subset of rows across different layers will have a lower match probability. This expectation is confirmed empirically, as shown in \Cref{fig:experiments2}a. 

Therefore, in FFNs, similar to learning the recurrent weights in LRNNs, the multiplication of multiple matrices (due to the layers) leads to non-linear equations, which motivates distributing the learnable weights to increase the match probability.

\begin{figure}[t]
    \centering        \includegraphics[width=1\linewidth]{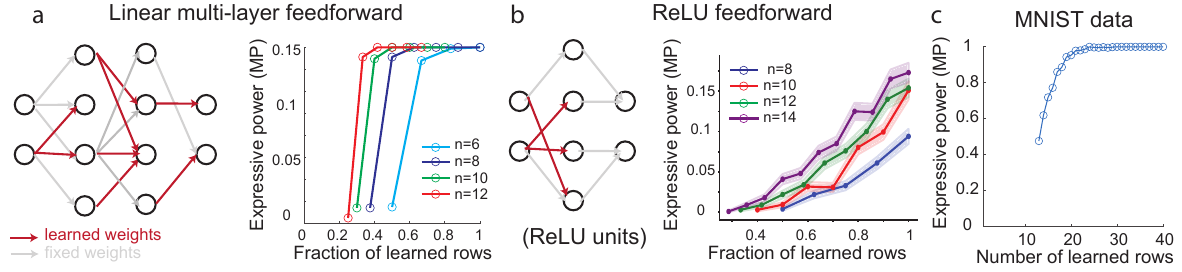}
    \caption{Estimation of MP for FFNs. MP increases when distributing the weights. a. Estimation of MP for allocations in 3-layer linear FFNs network, for different sizes of the intermediate layer denoted as $n$. b. Allocation for shallow ReLU network for different sizes of the hidden layer size. All experiments ran with second order optimization methods. c. Allocations in a shallow ReLU network with n=1000 on structured data (MNIST). The MP when the allocation uses more then 40 rows was constant 1. See \Cref{sec:experiments} for full details.}
    \label{fig:experiments2}
\end{figure} 

\section{Shallow ReLU model}
We saw that generally, it is better to distribute the learnable weights throughout the network (\Cref{fig:experiments1,fig:experiments2}a). We conclude with an intuition, backed by empirical evidence, that this is also true in a shallow network of ReLU units.

Consider a 2-layer feedforward ReLU network:
\begin{align*}
    y_i=W_2\,\phi(W_1x_i)
\end{align*}
with $W_1\in\rr^{n\times q},\,W_2\in\rr^{d\times n}$, and where $x_i$ and $y_i$ is the $i$'th sample and label respectively for $1\le i \le m$. We consider allocations learning $W_1$. For the linear model, as long as some conditions hold (\Cref{thm:ff-single}), all allocations are maximal. We now want to examine if the ReLU changes these results and whether it adds another incentive for distributing the learnable weights of an allocation. 

For a given allocation $\A$, let $k\le n$ be the number of rows that has learnable weights. Assume w.l.o.g that they are the upper rows, and denote $W_1^{1}$ to be the first $k$ rows and $W_1^{2}$ to be the other $n-k$. Denote $W_2^1$ and $W_2^2$ to be the corresponding columns in $W_2$. Thus we can write:
\begin{align*}
    W_2^1\,\phi(W_1^1x_i) = y_i - W_2^2\,\phi(W_1^2x_i)
\end{align*}
Notice that the r.h.s is constant. For $\phi(x)$ being the ReLU function, this equation can be rewritten as:
\begin{align} \label{eq:relu P}
    W_2^1P_iW_1^1x_i = y_i - W_2^2\,\phi(W_1^2x_i)
\end{align}
Here, $P_i\in\rr^{k\times k}$ is a diagonal matrix with $1$s on rows where $W_1^1x_i>0$ and $0$s where it is negative, namely $(2P_i-I)W_1^1x_i > 0$. For a fixed $P = \curly{P_i}_{i=1}^m$ we get a linear equation in the $r$ learning weights of $W_1^1$. Thus, for the allocation to match, two conditions must hold: 1) the linear equation must be solvable, and 2) the solution of $W_1^1$ must satisfy  $(2P_i-I)W_1^1x_i > 0$ for every $x_i$.

The first observation is that if the allocation does not satisfy the conditions in \Cref{thm:ff-single}, the linear equations will be unsolvable. Another key point is that we need to "guess" \(P\), which could, in theory, take one of \(2^{km}\) possible configurations, solve for \(W_1^1\), and verify if it satisfies the inequalities. However, the actual number of feasible configurations is lower—\Cref{lem:relu k limit,lem:relu r_i limit} demonstrate that some configurations are inherently invalid due to the linear equations being unsolvable. Importantly, the fraction of feasible configurations increases as the allocation becomes more distributed—\Cref{lem:relu k limit} shows that the number of valid configurations for \(P\) decreases when smaller values of \(k\) are used, and \Cref{lem:relu r_i limit} reveals that allocations with rows containing many learnable weights similarly face a reduction in the number of possible configurations for \(P\).

This suggests that allocations using more rows are more likely to increase the match probability. As shown in \Cref{fig:experiments2}b, the empirical trend aligns with our intuition. These results suggest that the ReLU non-linearity introduces an added benefit for allocations that distribute their learnable weights more broadly across the network.

\section{Discussion} \label{sec:disc}
In this paper, we explored different strategies for learning under resource constraints, focusing on how to allocate a subset of learnable model weights from a standpoint of expressivity. Our main focus was on linear RNNs and FFNs. Linear models have been fundamental across fields such as linear circuits and control theory, despite being oversimplifications of real-world systems. Their strength lies in simplifying complex ideas, providing a necessary foundation for understanding more intricate nonlinear behaviors \citep{saxe2014exactsolutionsnonlineardynamics}. Although the linearity of these models, we showed that the problem of how to allocate the resources when examining subset learning becomes highly non-linear, with linearity emerging only in specific cases (\Cref{thm:lrnn necessary conditions,thm:lrnn sufficient conditions}). 

We conjectured that the number of non-linear equations arising from subset learning in linear models improves model expressivity, and that distributing the learnable weights increases the number of non-linear equations, thus generally improves the expressive power of the model. This conjecture was backed-up with numerical simulations (\Cref{fig:experiments1,fig:experiments2}). Interestingly, and perhaps unexpectedly, our results suggests that in large models, allocations can consistently achieve maximum expressivity.

While our work provides theoretical insights into the impact of allocation strategies on expressivity, several limitations should be acknowledged. First, our paper should be viewed as a first step in exploring how allocation strategies affect the network performance. As such, our analysis primarily focuses on expressivity, which does not encompass other important aspects of network performance, such as generalization or optimization, and these should be explored in future work. Additionally, our theoretical framework relies on assumptions like i.i.d. inputs and linear independence of certain matrices, which may not fully capture the complexities of practical neural networks. Another limitation is that many results were derived for simplified architectures, necessary to develop insights beyond numerical investigations. We took a step toward generalizing the results by extending our framework to study expressivity in a non-linear shallow ReLU model, where we found that non-linearity itself motivates distributed allocations. Future work should extend these results numerically and theoretically to more complex architectures and structured datasets.

Despite these limitations, our paper presents a general framework for studying expressivity in neural networks. Match probability (MP) is a versatile and applicable measurement that captures the reduction in expressivity due to specific allocation strategies. While we use a student-teacher framework for clarity, our analysis does not rely on the teacher model itself, and MP can be naturally defined for general data distribution (see \Cref{sec:sup disc}). In fact, we tested our approach on structured, real-world data by estimating the MP of a shallow ReLU network on the MNIST dataset (\Cref{fig:experiments2}c). These results suggest that our insights regarding distributing the  learnable weights throughout the network may pertain to practical applications.

\newpage

\section{Acknowledgements}
We thank Oren Yakir, Lior Bary-Soroker and Roi Livni for their input and constructive feedback. We thank Tomer Koren for his input, constructive feedback and carefully reading the manuscript. This work was supported by research grants from the Israel Science Foundation (2097/24, 2994/24)

\bibliographystyle{plainnat}
\bibliography{cited.bib}
\newpage

\appendix
\part{Appendix}

\startcontents[sections]
\printcontents[sections]{l}{1}{\setcounter{tocdepth}{2}}
\section{Linear Algebra Preliminaries: Kronecker Product and Vectorization}
We use $I_n$ as the identity matrix of size $n\times n$. When the size is clear from context we write $I$.

\begin{definition}

The Kronecker product of two matrices \( A \) and \( B \) is denoted by \( A \otimes B \) and is defined as follows:

Let \( A \) be an \( m \times n \) matrix and \( B \) be a \( p \times q \) matrix:

\[
A = \begin{pmatrix}
a_{11} & a_{12} & \cdots & a_{1n} \\
a_{21} & a_{22} & \cdots & a_{2n} \\
\vdots & \vdots & \ddots & \vdots \\
a_{m1} & a_{m2} & \cdots & a_{mn}
\end{pmatrix}_{m \times n}, \quad
B = \begin{pmatrix}
b_{11} & b_{12} & \cdots & b_{1q} \\
b_{21} & b_{22} & \cdots & b_{2q} \\
\vdots & \vdots & \ddots & \vdots \\
b_{p1} & b_{p2} & \cdots & b_{pq}
\end{pmatrix}_{p \times q}
\]

The Kronecker product \( A \otimes B \) is an \( (mp) \times (nq) \) matrix defined by:

\[
A \otimes B = \begin{pmatrix}
a_{11} B & a_{12} B & \cdots & a_{1n} B \\
a_{21} B & a_{22} B & \cdots & a_{2n} B \\
\vdots & \vdots & \ddots & \vdots \\
a_{m1} B & a_{m2} B & \cdots & a_{mn} B
\end{pmatrix}
\]

\end{definition}

\begin{definition}
Let \( A \) be an \( m \times n \) matrix. The vectorization of \( A \), denoted by \( \text{vec}(A) \), is the \( mn \times 1 \) column vector obtained by stacking the columns of \( A \) on top of one another. For example:

\begin{align*}
&A = \begin{pmatrix}
a_{11} & a_{12} & \cdots & a_{1n} \\
a_{21} & a_{22} & \cdots & a_{2n} \\
\vdots & \vdots & \ddots & \vdots \\
a_{m1} & a_{m2} & \cdots & a_{mn}
\end{pmatrix}
    &\text{vec}(A) = \begin{pmatrix}
a_{11} \\
a_{21} \\
\vdots \\
a_{m1} \\
a_{12} \\
a_{22} \\
\vdots \\
a_{m2} \\
\vdots \\
a_{1n} \\
a_{2n} \\
\vdots \\
a_{mn}
\end{pmatrix}
\end{align*}
\end{definition}

We will use two well known attributes of Kronecker product (see \citet{Horn_Johnson_1991}):
\begin{lemma}\label{lem:vec and kron}
For every $A\in\rr^{d_1\times d_2},\,B\in\rr^{d_2\times d_3},\,C\in\rr^{d_3\times d_4}$:
\begin{align*}
    \vec{ABC} = (C^T \otimes A)\vec{B} 
\end{align*}
\end{lemma}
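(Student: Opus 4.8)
\textbf{Proof proposal for Lemma \ref{lem:vec and kron}.}

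The plan is to verify the identity $\vec{ABC} = (C^T \otimes A)\vec{B}$ by a direct column-by-column computation, reducing the general case to the special cases $\vec{AB} = (I \otimes A)\vec{B}$ and $\vec{BC} = (C^T \otimes I)\vec{B}$, and then composing them. First I would fix notation: write $B$ in terms of its columns as $B = \begin{pmatrix} b_1 & b_2 & \cdots & b_{d_3}\end{pmatrix}$, so that $\vec{B}$ is the stacked vector $(b_1^T, b_2^T, \dots, b_{d_3}^T)^T$. The key elementary observation is that the $k$-th column of $BC$ is $\sum_{j} C_{jk} b_j = B c_k$, where $c_k$ is the $k$-th column of $C$; equivalently, the $k$-th column of $ABC$ is $A B c_k = \sum_j C_{jk} (A b_j)$.

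Next I would compute the right-hand side blockwise. By the definition of the Kronecker product, $C^T \otimes A$ is the block matrix whose $(k,j)$ block is $(C^T)_{kj} A = C_{jk} A$. Multiplying this block matrix against $\vec{B}$, whose $j$-th block is $b_j$, the $k$-th block of the product is $\sum_j C_{jk} A b_j$. This is exactly the $k$-th column of $ABC$ identified in the previous step, so stacking over $k$ gives $(C^T \otimes A)\vec{B} = \vec{ABC}$, which is the claim. I would present this as the single clean computation rather than first proving the two special cases, since the blockwise argument handles the general statement directly; alternatively one can cite that the two special cases follow by taking $C = I$ or $A = I$ and then compose, using $(C^T \otimes A) = (C^T \otimes I)(I \otimes A)$.

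I do not expect any real obstacle here — the statement is a standard identity from \citet{Horn_Johnson_1991} and the proof is purely a matter of bookkeeping with block-matrix indices. The only point requiring a little care is keeping the transpose in the right place: it is $C^T$ (not $C$) that appears on the left of the Kronecker product, which is forced by the fact that $C$ acts on the \emph{column index} of $B$ from the right, and the column index becomes the \emph{outer} (block) index after vectorization. As long as the index conventions for $\vec{\cdot}$ (stack columns) and for $\otimes$ (row-major block layout) are stated explicitly at the outset, the verification is immediate.
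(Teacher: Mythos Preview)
Your proposal is correct: the blockwise computation showing that the $k$-th block of $(C^T\otimes A)\vec{B}$ equals $\sum_j C_{jk} A b_j$, which is the $k$-th column of $ABC$, is the standard verification and goes through without issue. Note that the paper itself does not prove this lemma at all---it simply states it as a well-known identity with a citation to \citet{Horn_Johnson_1991}---so there is no paper proof to compare against, and your direct argument is an appropriate way to fill in the details.
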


\begin{lemma} \label{lem:rank kron}
For every $A\in\rr^{d_1\times d_2},\,B\in\rr^{d_3\times d_4}$:
\begin{align*}
    rank\roundy{A \otimes B} = rank\roundy{A}rank\roundy{B}
\end{align*}
\end{lemma}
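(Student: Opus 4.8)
The plan is to compute $\text{rank}(A\otimes B)$ as the dimension of the column space of $A\otimes B$. First I would read off from the block structure in the definition above that the columns of $A\otimes B$ are exactly the vectors $(\text{col}_j A)\otimes(\text{col}_t B)$ for $1\le j\le d_2$ and $1\le t\le d_4$. By bilinearity of $\otimes$ in its two arguments, their span is precisely $\curly{u\otimes v \;:\; u\in\mathrm{col}(A),\ v\in\mathrm{col}(B)}$, where $\mathrm{col}(\cdot)$ denotes the column space. So it suffices to show this span has dimension exactly $\text{rank}(A)\,\text{rank}(B)$.

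Fix bases $u_1,\dots,u_{r_A}$ of $\mathrm{col}(A)$ and $v_1,\dots,v_{r_B}$ of $\mathrm{col}(B)$, with $r_A=\text{rank}(A)$ and $r_B=\text{rank}(B)$. Bilinearity shows every element of the set above is a linear combination of the $r_A r_B$ vectors $u_i\otimes v_j$, giving the upper bound $\dim\le r_A r_B$. For the matching lower bound I would show the $u_i\otimes v_j$ are linearly independent: assume $\sum_{i,j} c_{ij}\,u_i\otimes v_j=0$ and set $w_i=\sum_j c_{ij}v_j$, so that $\sum_i u_i\otimes w_i=0$. The identity $\vec{w\,u^T}=u\otimes w$ (a special case of \Cref{lem:vec and kron} with a $1\times 1$ middle factor) rewrites this as $\vec{\sum_i w_i u_i^T}=0$, i.e. $WU^{T}=0$ where $U=[u_1\mid\cdots\mid u_{r_A}]$ and $W=[w_1\mid\cdots\mid w_{r_A}]$. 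Since the columns of $U$ are linearly independent, $U^{T}$ has a right inverse, so $W=0$; hence each $w_i=\sum_j c_{ij}v_j=0$, and linear independence of the $v_j$ forces all $c_{ij}=0$, as required.

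The computation is short, so I expect the only (mild) friction to be bookkeeping with the Kronecker/vectorization conventions: verifying that the columns of $A\otimes B$ are indeed the $u\otimes v$ in the order dictated by the definition, and keeping the factor order straight in $\vec{w\,u^T}=u\otimes w$ so that $U$ and $W$ are not accidentally swapped in the final cancellation. If one prefers to bypass the column-space description, an alternative is to take rank factorizations $A=PQ$, $B=RS$ with $P,R$ of full column rank and $Q,S$ of full row rank, use the mixed-product property $A\otimes B=(P\otimes R)(Q\otimes S)$ (itself derivable from \Cref{lem:vec and kron}), and note that $(P\otimes R)^{T}(P\otimes R)=(P^{T}P)\otimes(R^{T}R)$ is invertible, so $P\otimes R$ has full column rank $r_A r_B$ and likewise $Q\otimes S$ has full row rank $r_A r_B$, whence their product has rank $r_A r_B$.
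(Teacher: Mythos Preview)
Your proof is correct. Note, however, that the paper does not actually prove this lemma: it states it as one of ``two well known attributes of Kronecker product'' and refers the reader to \citet{Horn_Johnson_1991}. So there is no paper proof to compare against; you have supplied a self-contained argument where the paper simply cites a reference. Both of your routes (the column-space argument and the rank-factorization alternative via the mixed-product property) are standard and valid, and your bookkeeping with the identity $\vec{w\,u^{T}}=u\otimes w$ and the right-invertibility of $U^{T}$ is fine.
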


\section{Linear RNN}
We start with a key feature of LRNNs: although the total number of weights is $p=(d + b + n)n$, only $p=(d + b)n$ of these are necessary to represent the network's output. This result is formalized in the following theorem:
\begin{theorem} \label{thm:T upper bound}
For every $T$, the output of the network can be expressed using $n(d + b)$ variables.
\end{theorem}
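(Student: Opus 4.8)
The plan is to reparametrize the network's input--output map through the spectral decomposition of the recurrent matrix, and then to quotient out the scaling freedom that is inherent in a sum of rank-one matrices. By \Cref{eq:lrnn function}, $Y = D\sum_{t=1}^{T} W^{T-t+1} B X_t$, so the output depends on the triple $(W,B,D)$ only through the ``Markov parameters'' $D W^{j} B \in \rr^{d\times b}$, $j=1,\dots,T$. This already strips away the explicit dependence on $T$; what remains is to show these matrices, hence $Y$, are controlled by only $n(d+b)$ scalars.

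Under the Gaussian assumption on the weight distribution $\W$, the matrix $W$ is almost surely diagonalizable, $W=Q\Lambda Q^{-1}$ with $\Lambda=\mathrm{diag}(\lambda_1,\dots,\lambda_n)$. Let $u_i\in\rr^{d}$ be the $i$-th column of $DQ$ and $v_i^{\top}\in\rr^{b}$ the $i$-th row of $Q^{-1}B$. Then $DW^{j}B=DQ\,\Lambda^{j}\,Q^{-1}B=\sum_{i=1}^{n}\lambda_i^{\,j}\,u_iv_i^{\top}$, and substituting into the expression for $Y$ gives
\[
Y=\sum_{i=1}^{n}\sum_{t=1}^{T}\lambda_i^{\,T-t+1}\,u_iv_i^{\top}X_t .
\]
Thus $Y$ is a function of the $n$ triples $(\lambda_i,u_i,v_i)\in\rr\times\rr^{d}\times\rr^{b}$, i.e.\ of at most $n(1+d+b)$ real variables. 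Finally, each summand depends on $u_i$ and $v_i$ only through the rank-one matrix $u_iv_i^{\top}$, which is unchanged under $(u_i,v_i)\mapsto(cu_i,c^{-1}v_i)$ for $c\neq0$; since $(W,B)$ is almost surely controllable (equivalently, $Q^{-1}B$ has no zero row), every $v_i\neq 0$, so we may normalize, e.g.\ by fixing the first nonzero entry of each $v_i$ to $1$, eliminating exactly one scalar per index. This leaves $n(1+d+b)-n=n(d+b)$ variables that determine the output, which is the claim.

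The step requiring the most care is the bookkeeping for complex eigenvalues: for a real Gaussian $W$ the non-real $\lambda_i$ occur in conjugate pairs, with the $(u_i,v_i)$ of one index the complex conjugates of its partner's, so such a pair contributes one complex eigenvalue together with one complex $u$ and one complex $v$ modulo a single complex rescaling, i.e.\ $2(d+b)$ real parameters for the two slots --- exactly matching the real count above. The genericity facts invoked (diagonalizability, controllability, no zero row of $Q^{-1}B$) all hold almost surely under the stated assumptions, and non-generic triples may be disregarded since their input--output maps lie in the closure of the generic family (they are realizable with a strictly smaller hidden dimension). One should also read ``expressed using $n(d+b)$ variables'' in the natural way: there is a map from $\rr^{n(d+b)}$ into the space of functions $\rr^{Tb}\to\rr^{d}$ whose image contains the input--output function of $M_W$ for (almost) every $(W,B,D)$. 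An equivalent route, if a parametrization by a subset of the original weights is preferred, is to bring $(W,B)$ to a controllable canonical form via a change of basis $h\mapsto Ph$ (which leaves the Markov parameters fixed); the change of basis uses up exactly the $n^{2}$ degrees of freedom of $GL_n$, leaving $n(d+b)$ free entries across $(W,B,D)$.
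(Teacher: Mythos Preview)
Your proof is correct and follows essentially the same route as the paper's: both diagonalize $W$, observe that the Markov parameters $DW^{j}B$ factor through $(DQ,\Lambda,Q^{-1}B)$, and then quotient out the $n$-dimensional diagonal scaling freedom $Q\mapsto QS$ to reduce the parameter count from $n(1+d+b)$ to $n(d+b)$. The only cosmetic difference is the normalization chosen---the paper fixes the first row of $DQ$ to be all ones, whereas you fix the first nonzero entry of each $v_i$ to $1$---and you are more careful than the paper about the complex-eigenvalue bookkeeping.
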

\begin{proof}
From \Cref{eq:lrnn function} we get that if the value of $DW^{T+1-t}B$ for two networks is the same for all $T$, their output will always be equal.

The measure of real matrices that are not diagonalizable over the complex equals 0, so the probability for a random matrix with a continuous probability distribution to be non-diagonalizable vanishes. Since $W$ is such matrix, we can safely assume it diagonizable. Let $U\lambda U^{-1}$ be the spectral decomposition of $W$. 

For every diagonal matrix $S$, we can write:
\begin{align*}
    W = US\lambda S^{-1}U^{-1} = \roundy{US} \lambda \roundy{US}^{-1}
\end{align*}
Now fix $S$ such that $D_1US =  \overrightarrow{1}$, where $\overrightarrow{1}$ is a vector of all $1$'s and $D_1$ is the first row of $D$. Denote $V = US$, we have $W = V\lambda V^{-1}$ such that $D_1V = \overrightarrow{1}$. In other words, we can always pick the eigenvectors $V$ of $W$ to be such that $D_1V = \overrightarrow{1}$. 

Notice that $DW^{T+1-t}B= DV\roundy{\lambda^{T+1-t}}V^{-1}B$. Thus, two networks that will have the same $DV$, $V^{-1}B$ and $\lambda$ will always output the same system. There are $bn$ variables in $V^{-1}B$ , and $n$ variables in $\lambda$. $DV$ has $dn$ entries but we know that the first row is $\overrightarrow{1}$ so it has only $dn - n$. Thus, these $n(d+b)$ variables express the output of the network for every $T$.
\end{proof}

From solving the recursion it becomes clear that LRNN is a linear transformation, receiving $Tb$ inputs and producing $d$ outputs, effectively acting as a $Tb \times d$ matrix. From the proof, we observed that this function (expressed as $DW^{T+1-t}B$ for each $1 \le t \le T$) can be represented using $n(d+b)$ variables. This implies that the the function that the network represents becomes degenerate if $Tbd > n(d+b)$. To prevent degeneration, we require $T \le \frac{n(d+b)}{db}$. Given the assumption that $n \gg b,d$, this simplifies to approximately $Tb \le n$ and $Td \le n$.

\subsection{Learning the recurrent connections} \label{sec:recurrent appendix}
In this section, we provide proofs for \Cref{thm:lrnn necessary conditions} and \Cref{thm:lrnn sufficient conditions}, and demonstrate their application through a small network example.

\label{sec:connections appendix}
\begin{definition}[Shift Operator]
Given $n_1,n_2,n_3\in\nn$, $B\in\rr^{n_1\times n_2}$ and $F\in\rr^{n_1\times n_2n_3}$. Denote $F = \roundy{\begin{array}{c|c|c|c|c} 
F_1 & F_2 & \cdots & F_{m-1} & F_m
\end{array}}$ such that for every $1\le i \le n_3$, $F_i\in\rr^{n_1\times n_2}$. Then:
\begin{align*}
    g_B(F) = \roundy{\begin{array}{c|c|c|c|c|c} 
F_2 & F_3 & \cdots & F_{m-1} & F_{m} & B
\end{array}}
\end{align*}
\end{definition}
\begin{lemma} \label{lem:lrnn equations rows}
Given samples $X\in \mathbb{R}^{Tb\times m}$ and labels $Y\in \mathbb{R}^{d\times m}$ created by the teacher, the student can express them using allocation $\A$  if and only if the following set of equations is solvable:
\begin{align}
    Wg_B(F) &= F  \label{eq:rows dynamics}\\
    DFX &= Y \label{eq:rows samples} \\
    \notag W\in \R_{\A}&,\,F\in\rr^{n\times Tb}
\end{align}
\end{lemma}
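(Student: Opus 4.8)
The plan is to show that the polynomial system implicit in \Cref{eq:lrnn function} is equivalent, as a solvability question, to the system \eqref{eq:rows dynamics}--\eqref{eq:rows samples}, by exhibiting a bijection-like correspondence between their solution sets. The key idea, already hinted at in the text, is that the auxiliary matrix $F \in \rr^{n \times Tb}$ should be thought of as recording the ``states'' $W^{T-t+1}B$ for $t = 1, \dots, T$ produced by the recurrent dynamics. So first I would write $F$ in block form $F = (F_1 \mid F_2 \mid \cdots \mid F_T)$ with each $F_i \in \rr^{n \times b}$, and spell out what \Cref{lem:lrnn equations rows} (via the shift operator $g_B$) says blockwise: \eqref{eq:rows dynamics} becomes $W F_{i+1} = F_i$ for $1 \le i \le T-1$ together with $W B = F_T$. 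Iterating these relations from the bottom gives $F_{T-j} = W^{j+1} B$ for each $j$, i.e. $F_t = W^{T-t+1} B$ for all $t$.

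The forward direction is then immediate: given $W \in \R_\A$ realizing the teacher's labels in the sense of \Cref{eq:lrnn function}, define $F_t := W^{T-t+1} B$. These $F_t$ satisfy the dynamics equations by construction ($W F_{t+1} = W \cdot W^{T-t} B = W^{T-t+1}B = F_t$ and $WB = F_T$), and since $X$ is arranged as the stack of the per-step inputs $X_1, \dots, X_T$, the product $DFX = D \sum_t F_t X_t = D \sum_t W^{T-t+1} B X_t = Y$ by \Cref{eq:lrnn function} applied to the teacher. Hence \eqref{eq:rows samples} holds and $(W, F)$ solves the system. For the converse, suppose $(W, F)$ with $W \in \R_\A$ solves \eqref{eq:rows dynamics}--\eqref{eq:rows samples}. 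The dynamics equations force $F_t = W^{T-t+1} B$ exactly as computed above — this is the crucial point where the auxiliary variables are pinned down and no genuine extra freedom is introduced — so substituting into \eqref{eq:rows samples} recovers $D \sum_t W^{T-t+1} B X_t = Y$, which is precisely $M_W(X) = M_{W^*}(X)$ by \Cref{eq:lrnn function}. Thus the student matches the teacher. Combining both directions, solvability of the system is equivalent to the existence of a matching $W$.

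I would also need a short remark reconciling the statement of \Cref{lem:lrnn equations rows} (stated for samples $X \in \rr^{Tb \times m}$) with the original formulation in \Cref{eq:lrnn function} where the input is given as $X_t \in \rr^{b \times m}$ for $t = 1, \dots, T$: these are identified by stacking the $X_t$ vertically into a single $Tb \times m$ matrix $X$, and the block structure of $F$ (width $Tb$) is compatible so that $FX = \sum_{t=1}^T F_t X_t$. This bookkeeping, plus keeping the indexing of $g_B$ consistent with the powers of $W$, is the only place where care is required.

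The main obstacle is not conceptual but notational: getting the shift/indexing conventions exactly right so that $g_B$ applied to $F$ really does implement ``shift the blocks and append $B$,'' and verifying that iterating $W F_{i+1} = F_i$ with the boundary condition $W B = F_T$ reproduces $F_t = W^{T-t+1}B$ with the correct exponent for every $t$ (an off-by-one here would break the equivalence with \Cref{eq:lrnn function}). Once the block recursion is set up correctly, both implications are essentially substitution, so no real analytic or algebraic difficulty remains — in particular I do not need $W$ to be invertible for this lemma, since the states are defined by forward iteration from $B$, not by inverting $W$.
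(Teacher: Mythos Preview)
Your proposal is correct and matches the paper's own proof essentially line for line: both directions proceed by writing $F = (F_1 \mid \cdots \mid F_T)$, unpacking $Wg_B(F)=F$ into $WF_{t}=F_{t-1}$ for $2\le t\le T$ together with $WB=F_T$, solving the recursion to get $F_t = W^{T-t+1}B$, and then using $FX = \sum_t F_t X_t$ to identify \eqref{eq:rows samples} with \Cref{eq:lrnn function}. The bookkeeping concerns you flag (indexing of $g_B$, stacking of the $X_t$) are exactly the only points of care, and you have them right.
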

\begin{proof}
We will start by showing that if the system is solvable, then the student can express the given labels. Assume the equations are solvable using $W^*\in \R_{\A}$ and $F^*\in \mathbb{R}^{n \times Tb}$, $F_t^*\in \mathbb{R}^{n \times b}$. Notice that $W^*$ and $F^*$ are assumed to be solutions to the above equations and are unrelated to the teacher.

From the definition of the operator $g_B$, \Cref{eq:rows dynamics} becomes:
\begin{align*}
    W^*F^*_t &= F^*_{t-1} \quad 2 \le t \le T\\
    W^*B &= F^*_T
\end{align*}
Solving the recursion we get $F^*_t = (W^*)^{T-t+1}B$. 

Since $FX = \sum_{t=1}^T{F_tX_t}$, \Cref{eq:rows samples} is equivalent to:
\begin{align*}
    Y = D\sum_{t=1}^T{F_tX_t} = D\sum_{t=1}^T{(W^*)^{T-t+1}BX_t}
\end{align*}
Since $W^*$ is a valid realization of $\A$ and it outputs the correct labels, the student can match the teacher.

Now, on the other hand, assume that the student can express $Y$. We choose $W$ to be the recurrent connections of that student, which implies $Y = D\sum_{t=1}^T{W^{T-t+1}BX_t}$. By selecting $F_t = W^{T-t+1}B$, it is easy to verify that the equations hold.
\end{proof}

\begin{lemma} \label{lem:small alloc}
For any allocation $\A$, if $\abs{\A} < dm$ then $MP(\A) = 0$.
\end{lemma}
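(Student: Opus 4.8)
The plan is to use \Cref{lem:lrnn equations rows} to reduce the claim to a counting argument about the solvability of a polynomial system, and then argue that a generic target $Y$ is not attainable when there are too few free parameters. The key observation is that \Cref{lem:lrnn equations rows} tells us the student matches the teacher if and only if there exist $W \in \R_{\A}$ and $F \in \rr^{n \times Tb}$ satisfying \Cref{eq:rows dynamics,eq:rows samples}. Now, crucially, once $W$ and $F$ are \emph{fixed}, \Cref{eq:rows dynamics} forces $F_t = W^{T-t+1}B$, so that the map
\[
(W, F) \;\longmapsto\; DFX = D\sum_{t=1}^{T} W^{T-t+1} B X_t
\]
has image (as $W$ ranges over the $r = |\A|$-dimensional affine space $\R_{\A}$ and $X$ is fixed) equal to a subset of $\rr^{d \times m}$ that is the image of a polynomial map from an $r$-dimensional parameter space. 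Since $r < dm$, this image is contained in a set of measure zero in $\rr^{d \times m} \cong \rr^{dm}$ (the image of a smooth — indeed polynomial — map from a lower-dimensional space is Lebesgue-null, e.g. by Sard's theorem or a direct Fubini argument on the rank of the Jacobian).

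Next I would argue that for a random teacher $W^*$ and random samples $X$, the label matrix $Y = M_{W^*}(X)$ is distributed with a density with respect to Lebesgue measure on $\rr^{dm}$ — or at least assigns zero probability to the measure-zero image set above. Under the paper's standing assumptions, $X$ is drawn so that any square input matrix is invertible, and the teacher distribution $\mathcal{T}$ together with $\mathcal{X}$ is such that the $m$ teacher outputs are linearly independent; one should check that this, combined with $m = r/d$ and the genericity of $W^*$, indeed guarantees that $Y$ avoids the null set with probability one. The cleanest route is: condition on $X$; then $Y$ as a function of the random $W^*$ (which has a density, being Gaussian on its learnable coordinates, with the rest fixed) has a law that is absolutely continuous on the $n(d+b)$-dimensional effective parameter space of the teacher (using \Cref{thm:T upper bound}), and since $Tbd \le n(d+b)$ the output map is a submersion onto an open subset of $\rr^{dm}$ generically, so $Y$ has a density there; hence $\Pr[Y \in (\text{null set})] = 0$.

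Putting these together: $MP(\A) = \Pr[\,\exists W \in \R_{\A}\ \text{s.t.}\ M_W(X) = Y\,] = \Pr[Y \in \operatorname{Im}(\Phi_X)]$ where $\Phi_X$ is the polynomial map above from $\rr^r$; the image is Lebesgue-null because $r < dm$, and $Y$ almost surely avoids it, so $MP(\A) = 0$.

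The main obstacle is the second step — rigorously establishing that the teacher's label matrix $Y$ has no atoms on the relevant null set. The image of $\Phi_X$ is a semialgebraic set of dimension at most $r < dm$, so it is null; the delicate point is matching this against the actual distribution of $Y$. One must either invoke the assumption on $\mathcal{T},\mathcal{X}$ directly (phrasing it as: the law of $Y$ is mutually absolutely continuous with Lebesgue measure on $\rr^{d\times m}$, which is what "linear independence of outputs" plus the Gaussian weights is meant to encode), or prove it from the model structure via \Cref{thm:T upper bound} and a Jacobian-rank computation showing the teacher's output map is a submersion almost everywhere. I expect the authors take the former, lighter route, treating the absolute continuity of $Y$ as part of the standing assumptions; I would do the same, flagging explicitly where it is used.
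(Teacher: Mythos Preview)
Your proposal is correct and follows essentially the same idea as the paper: matching the teacher amounts to solving $dm$ polynomial equations in $|\A| < dm$ unknowns, which fails generically. The paper's proof is far terser --- it works directly from \Cref{eq:lrnn function} (your detour through \Cref{lem:lrnn equations rows} is unnecessary, since you immediately collapse $F$ back to $W^{T-t+1}B$ anyway) and then simply cites an external result (a theorem on polynomial systems with Gaussian coefficients) asserting that such overdetermined systems are solvable with probability zero. Your Sard/null-image argument is a more self-contained substitute for that citation; the obstacle you flag about the distribution of $Y$ is genuine, but the paper does not address it any more carefully than you do --- it is implicitly absorbed into the cited theorem and the standing assumptions on $\mathcal{T}$ and $\mathcal{X}$.
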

\begin{proof}
 From \Cref{eq:lrnn function}, the student can match with allocation $\A$ if a set of $dm$ polynomials is solvable with $\abs{\A}$ variables. Thus, if the student can match with $\A$, a set of $dm$ polynomials is solvable with less than $dm$ variables.  This is generally solvable w.p 0, for example from Theorem 6.8 of \citet{article} this is true for polynomials with Gaussian coefficients. Therefore, the match probability of $\A$ is 0.   
\end{proof}

\begin{theorem}[First part of \Cref{thm:lrnn necessary conditions}]\label{thm:lrnn necessary first}
Given an allocation $\A$, if there is a row with more than $Tb$ learnable weights the allocation is minimal.
\end{theorem}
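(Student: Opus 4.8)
**Proof proposal for Theorem (First part of Theorem 3.x): If a row of $W$ has more than $Tb$ learnable weights, the allocation is minimal.**

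The plan is to use Lemma~\ref{lem:lrnn equations rows} to replace the polynomial system by the bilinear system $Wg_B(F)=F$, $DFX=Y$, and then exhibit a set of polynomial equations that the solution must satisfy but which involves strictly more variables than equations, so that by the same genericity argument as in Lemma~\ref{lem:small alloc} (Gaussian-coefficient polynomials have no solution with probability $1$) the match probability is $0$. Concretely, suppose row $i$ of $W$ has $r_i > Tb$ learnable weights; write $\tilde W_i \in \rr^{1\times r_i}$ for the learnable entries of that row and $\hat W_i$ for the fixed ones. Looking at the $i$-th row of \Cref{eq:rows dynamics}, we get $\tilde W_i \tilde G + \hat W_i \hat G = F_i$, where $G = g_B(F)$, $\tilde G \in \rr^{r_i \times Tb}$ is the submatrix of $G$ on the learnable columns, $\hat G$ the complementary rows, and $F_i\in\rr^{1\times Tb}$ is the $i$-th row of $F$. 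The key observation is that the columns of $G=g_B(F)$ are just the columns of $F$ shifted, together with the column block $B$; in particular row $F_i$ and the matrix $\tilde G$ are built out of the {\em same} auxiliary variables in $F$ (plus the fixed random $B$), so this is a genuine constraint, not a definition of new free variables.

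The main step is a variable-counting argument. I would treat the full unknown as the pair $(W \in \R_\A,\ F \in \rr^{n\times Tb})$, which has $r + nTb$ coordinates, and the system \Cref{eq:rows dynamics}--\Cref{eq:rows samples} has $nTb + dm$ equations; since $r = dm$ by our choice $m = r/d$, the system is square ($r + nTb$ equations in $r+nTb$ unknowns), consistent with a generically finite solution set. The point of the ``$>Tb$ in one row'' hypothesis is that it makes the system {\em overdetermined after elimination}: the $Tb$ equations coming from row $i$ of \Cref{eq:rows dynamics}, namely $\tilde W_i \tilde G(F) = F_i - \hat W_i \hat G(F)$, can be viewed — for {\em generic} fixed values of everything else — as $Tb$ linear equations in the $r_i$ unknowns $\tilde W_i$; since $r_i > Tb$, we can solve for (some) $r_i - Tb$ of the entries of $\tilde W_i$ in terms of the remaining ones and of $F$, i.e. we eliminate $r_i - Tb$ genuine variables from $\tilde W_i$ by \emph{using up} $Tb$ of the row-$i$ equations — wait, that is the wrong direction. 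The correct phrasing: generically the map $F \mapsto (\tilde G(F), \hat G(F), F_i)$ is such that the row-$i$ block is $Tb$ equations constraining $\tilde W_i \in \rr^{r_i}$ and $F$ jointly; solving the $Tb$ linear-in-$\tilde W_i$ equations determines $\tilde W_i$ only if $\tilde G$ has rank $Tb$, which happens generically since $r_i \ge Tb$, leaving the other $r_i - Tb$ learnable weights of row $i$ as free parameters that appear in {\em no other equation} (they multiply only columns of $G$, and they reappear only in $F_i$'s defining equations which we've just consumed). Thus after this elimination the remaining system has $r - (r_i - Tb) \cdot 0$ ... more cleanly: we obtain a reduced polynomial system in which $r_i - Tb$ of the original variables are unconstrained, so the effective system has at least $r_i - Tb$ more variables than equations. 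Hence by the genericity lemma (Theorem 6.8 of the cited reference / the argument of Lemma~\ref{lem:small alloc}) it has no solution with probability one, so $MP(\A)=0$.

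The delicate point — and the one I'd spend the most care on — is justifying that the elimination really does free up $r_i - Tb$ variables \emph{generically}, i.e. that the relevant submatrix $\tilde G(F)$ has full row rank $Tb$ for almost every realization, and that the ``freed'' variables genuinely appear in no surviving equation (this uses that $g_B$ only shifts columns, so the learnable entries of row $i$ of $W$ interact only with row $i$ of $F$ through \Cref{eq:rows dynamics} and never enter \Cref{eq:rows samples} except through $F$ itself). I would make this rigorous by the standard trick of showing the solution variety, projected to a suitable coordinate, is contained in a proper algebraic subset, hence measure zero; concretely, parametrize a hypothetical solution, observe it forces a polynomial identity among the \emph{independent} Gaussian entries of $\hat W$, $B$, $D$, $X$, $W^*$, and invoke that a nonzero polynomial vanishes with probability $0$. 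Care is needed because $B$ appears both inside $g_B$ and as a block of $G$, and because $D$, $X$ are shared with the teacher side; but none of these couplings reduce the variable deficit, they only need to be tracked to confirm the surviving system is still ``generic polynomials.'' Once that is in place the conclusion is immediate.
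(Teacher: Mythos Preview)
Your core idea — that the $r_i > Tb$ learnable weights in row $i$ interact only with the $Tb$ equations of that row and are therefore ``wasted'' — is exactly the mechanism the paper exploits. However, your write-up inverts the conclusion at the key moment: you say ``the effective system has at least $r_i - Tb$ more \emph{variables} than equations'' and then invoke the genericity lemma to conclude \emph{no} solution. That is backwards. An excess of variables makes a polynomial system \emph{easier} to solve; what you need (and what your own counting actually gives, once the row-$i$ block is removed) is $r_i - Tb$ more \emph{equations} than effective variables in the residual system.

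Even with the direction corrected, your route leaves real work to do at the ``delicate point'' you flag: you would need a fresh genericity argument for the residual bilinear system in $(W',F)$, whose coefficients are not independent Gaussians but polynomial combinations of $\hat W,\,B,\,D,\,X,\,W^*$. The paper sidesteps this entirely. Instead of eliminating and counting, it argues: given any solution $(W^*,F^*)$ for $\A$, freeze $F=F^*$ and observe that the row-$i$ equation $\tilde W_i\,\tilde g_B(F^*) = F_i^* - \hat W_i\,\hat g_B(F^*)$ is a linear system of $Tb$ equations in $r_i$ unknowns; so one can fix any $r_i - Tb$ of those learnable entries to arbitrary constants and still solve for the remaining $Tb$. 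This produces a solution for a strictly smaller allocation $\A_2$ with $|\A_2| = r - (r_i - Tb) < dm$, and now Lemma~\ref{lem:small alloc} applies directly to $\A_2$ in the \emph{original} LRNN formulation — no new genericity argument needed. Since $\A$ matches only when $\A_2$ matches, $MP(\A)=0$.

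In short: same idea, but the paper packages it as a reduction to a smaller allocation rather than a variable count, which both fixes your sign slip and dissolves the delicate point you were worried about.
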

\begin{proof}
We will prove that if the student can match with an allocation with more than $Tb$ learnable weights in some row, it can match with an allocation with less than $dm$ weights, which means from \Cref{lem:small alloc} that the match probability is 0.

Given an allocation $\A$ with $\abs{\A}=r=dm$ that has $r_i >Tb$ learnable weights in the $i$th row and solves \Cref{lem:lrnn equations rows}. Notice that the rows of $W_i$ in \Cref{lem:lrnn equations rows} participate in exactly $Tb$ equations in \Cref{eq:rows dynamics} - the $i$th row only need to suffice:
\begin{align*}
    W_ig_B(F) = F_i
\end{align*}
Where $F_i$ is the $i$th row of $F$.

Denote the solution as $F = F^*$ and $W = W^*$. Consider another allocation, $\A_2$, in which we replace $r_i - Tb > 0$ of the learnable weights of $\A$ in the $i$'th row with constants, which means that $\abs{A_2} < dm$. Define $\tilde{W}_i$ as the learnable part of $W_i$, $\tilde{g_B}(F^*)$ as the corresponding columns in $g_B(F^*)$, $\hat{W}_i$ as the remaining part of $W_i$, and $\hat{g_B}(F^*)$ as the remaining part of $g_B(F^*)$. Then:
\begin{align*}
    \tilde{W_i}\tilde{g_B}(F^*) + \hat{W_i}\hat{g_B}(F^*) &= F^*_i \\
    \tilde{W_i}\tilde{g_B}(F^*) &= F^*_i - \hat{W_i}\hat{g_B}(F^*) \\
    \tilde{W_i} &= \roundy{F^*_i - \hat{W_i}\hat{g_B}(F^*)}\roundy{\tilde{g_B}(F^*)}^{-1}
\end{align*}
Which means that there is a solution to \Cref{lem:lrnn equations rows} with $\A_2$. In other words, every time \Cref{lem:lrnn equations rows} is solvable with $\A$, it is also solvable with $\A_2$. Recall that $\abs{\A_2}<dm$, therefore, from \Cref{lem:small alloc}, the match probability of $\A_2$ is 0, which means that the match probability of $\A$ is also 0.

\end{proof}

\begin{theorem}[First part of \Cref{thm:lrnn sufficient conditions}] \label{thm:lrnn sufficient first}
Given an allocation $\A$, if every row that has learnable weights has exactly $Tb$ learnable weights in it, the allocation is maximal.
\end{theorem}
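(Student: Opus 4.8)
The plan is to use \Cref{lem:lrnn equations rows} to reduce the matching problem to solving the system \eqref{eq:rows dynamics}--\eqref{eq:rows samples}, and then exploit the hypothesis that every row of $W$ has either $0$ or exactly $Tb$ learnable weights to show this system is solvable with probability $1$. The key structural observation, already flagged in the body of the paper, is that rows of $W$ with no learnable weights contribute only \emph{linear} equations (in the auxiliary variables $F$) to \eqref{eq:rows dynamics}, while rows with learnable weights give equations that can always be satisfied by choosing the learnable entries. So I would partition the row index set into $\mathcal{S}$ (rows with $Tb$ learnable weights) and $\bar{\mathcal{S}}$ (rows with none), and treat the two blocks of \eqref{eq:rows dynamics} separately.

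First I would handle the learnable rows: for each $i \in \mathcal{S}$, the equation $W_i g_B(F) = F_i$ involves a $1\times Tb$ vector $\tilde W_i$ of free variables multiplied by the $Tb \times Tb$ matrix $\tilde g_B(F)$ (the rows of $g_B(F)$ picked out by the learnable columns), so \emph{given any $F$} for which these $Tb\times Tb$ submatrices are invertible, we can solve for $\tilde W_i$ exactly as in the displayed computation in the proof of \Cref{thm:lrnn necessary first}, namely $\tilde W_i = (F_i - \hat W_i \hat g_B(F))(\tilde g_B(F))^{-1}$. Thus the learnable rows impose \emph{no constraint} on $F$ beyond a genericity (full-rank) condition. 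What remains is to choose $F$ satisfying: (a) the non-learnable-row block of \eqref{eq:rows dynamics}, which reads $\hat W_{\bar{\mathcal S}}\, g_B(F) = F_{\bar{\mathcal S}}$ and is linear in $F$; (b) the readout equation \eqref{eq:rows samples}, $DFX = Y$, also linear in $F$; and (c) the genericity conditions. I would observe that there is a natural candidate: take $W$ itself to be the teacher's recurrent matrix restricted appropriately — more carefully, I'd argue that the teacher's own trajectory $F_t^{*}=(W^{*})^{T-t+1}B$ together with a suitable $W\in\R_\A$ solves everything, or alternatively count dimensions: the linear system (a)+(b) in the $nTb$ unknowns of $F$ has, with the assumed independence of $X$, $Y$ and the Gaussian fixed weights, a solution space of the expected dimension, and within that space the genericity conditions (invertibility of the $\tilde g_B(F)$ blocks) fail only on a measure-zero set, hence are satisfied almost surely.

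Concretely, the cleanest route is probably constructive: since $r=dm$ and exactly $r/(Tb)$ rows are learnable, I would show one can pick the learnable rows and solve \eqref{eq:rows dynamics} row-by-row — first solve the linear block for a generic $F$ consistent with $DFX=Y$ (this is where the assumption $Td\le n$ and linear independence of samples/labels guarantees enough freedom), then fill in the learnable rows via the explicit formula above, checking that the required inverse exists generically because $F$ was chosen generically and $g_B$ is injective on generic inputs (its definition appends the generic Gaussian matrix $B$). Translating "solvable for generic data" into "solvable with probability exactly $1$" uses the standard fact that a nonzero polynomial in the Gaussian entries vanishes with probability zero, together with the paper's standing assumptions on $\mathcal{W}$, $\mathcal{X}$, $\mathcal{T}$.

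The main obstacle I anticipate is part (c): verifying that the genericity/invertibility conditions on the submatrices $\tilde g_B(F)$ can be met \emph{simultaneously} with the linear constraints (a) and (b) on $F$, rather than on an open dense set that the affine solution subspace might entirely miss. This requires exhibiting at least one choice of $F$ (depending on the fixed weights and data) that satisfies (a), (b), \emph{and} makes every learnable-row submatrix invertible; once such an $F$ exists for one instance, a polynomial-nonvanishing argument upgrades it to probability $1$. I expect this is handled by choosing $F$ to come from an actual LRNN trajectory of a full-rank auxiliary network, so that $g_B(F)$ inherits invertibility of its relevant submatrices from the nondegeneracy of $W$ and $B$ — essentially reusing the spectral/diagonalizability argument from the proof of \Cref{thm:T upper bound}. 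The bookkeeping of which columns are "learnable" across different rows, and making sure the per-row inversions are compatible, is the part that needs care but should not involve any deep new idea.
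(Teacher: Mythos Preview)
Your plan matches the paper's proof: split rows of $W$ into learnable ($Tb$ free entries each) and fully fixed, treat the fixed-row block of \eqref{eq:rows dynamics} together with \eqref{eq:rows samples} as a linear system in $F$ alone, solve it, then recover each learnable row from $W_i\, g_B(F)=F_i$. The paper sharpens your dimension count into an equality: with $r=dm$ there are exactly $\frac{dm}{Tb}$ learnable rows, so the fixed rows contribute $Tb\bigl(n-\tfrac{dm}{Tb}\bigr)=nTb-dm$ linear equations in $F$, and $DFX=Y$ contributes $dm$ more --- that is $nTb$ independent equations in the $nTb$ entries of $F$, hence a \emph{unique} $F$ almost surely (the rank claims follow from \Cref{lem:vec and kron} and \Cref{lem:rank kron}). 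This dissolves your obstacle (c): there is no affine solution subspace that might miss the genericity locus, just a single $F$ to check, and invertibility of each $Tb\times Tb$ block $\tilde g_B(F)$ is then a nonzero-polynomial condition in the Gaussian data. Drop the teacher-trajectory idea: the student's $B$, $D$, and the fixed rows $W_{\bar{\mathcal S}}$ are its own random weights, not the teacher's, so $F_t=(W^{*})^{T-t+1}B$ has no reason to satisfy the student's fixed-row equations $W_{\bar{\mathcal S}}\,g_B(F)=F_{\bar{\mathcal S}}$. Stick with the exact dimension count.
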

\begin{proof}
The rank of the linear equation in \Cref{eq:rows samples} is $dm$, which follows immediately from \Cref{lem:vec and kron} and \Cref{lem:rank kron} (recall that $m\le Tb$ and $ d\le n$).

With \( r \) learnable weights in the model, the allocation consists of exactly \( \frac{r}{Tb} = \frac{dm}{Tb} \) rows, each containing \( Tb \) learnable weights, while the remaining rows are fully constant. Without loss of generality, assume that these constant rows are positioned at the bottom of \( W \). We can express the first equation of \Cref{lem:lrnn equations rows} as:
\begin{align}
    W_{TOP}\cdot g_B(F) &= F_{TOP} \label{eq:row top}\\
    W_{BOT}\cdot g_B(F) &= F_{BOT} \label{eq:row bot}
\end{align}
Where the $TOP$ rows of $W$ are the rows with learnable weights and the $BOT$ are the ones without.

Using \Cref{lem:vec and kron}, \Cref{eq:rows samples} becomes:
\begin{align}
    \roundy{X \otimes D}\vec{F} = Y \label{eq:rows samples kron}
\end{align}
And \Cref{eq:row bot} becomes:
\begin{align}
    \roundy{I_{Tb} \otimes W_{BOT}}\vec{g_B(F)} = F_{BOT} \label{eq:row bot kron}
\end{align}
From \Cref{lem:rank kron} the rank of \Cref{eq:rows samples kron} is $dm$ and the rank of \Cref{eq:row bot kron} is $Tb\roundy{n-\frac{dm}{Tb}} = nTb - dm$. These equations involve only the \( nTb \) variables of \( F \), without including any learnable weights from \( W \).

This implies that we have \( nTb \) linear independent equations with \( nTb \) variables, leading to a unique solution for \( F \). Once we obtain a solution for \( F \), we can substitute it into \Cref{eq:row top}, transforming it into a linear equation as well—each row of \( W_{\text{TOP}} \) contains \( Tb \) linear equations corresponding to the size of \( F \). Since every row of \( W_{\text{TOP}} \) contains precisely \( Tb \) variables, the equation is solvable.
\end{proof}

The proof of the second condition in both \Cref{thm:lrnn necessary conditions} and \Cref{thm:lrnn sufficient conditions} has the same structure as the first condition. It uses the shift operator, while introducing the auxiliary variables $F$, but now with the decoder matrix, $D$, instead of the encoder matrix, $B$.

\begin{lemma} \label{lem:lrnn equations columns}
Given samples $X\in \mathbb{R}^{Tb\times m}$ and labels $Y\in \mathbb{R}^{d\times m}$ created by the teacher, the student can express them using allocation $\A$  if and only if the following set of equations is solvable:
\begin{align}
    g_D(F)W &= F  \label{eq:columns dynamics}\\
    F\roundy{I_T\otimes B}X &= Y \label{eq:columns samples} \\
    \notag W\in \R_{\A}&,\,F\in\rr^{d\times nT}
\end{align}
\end{lemma}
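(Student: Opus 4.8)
The plan is to mirror the proof of \Cref{lem:lrnn equations rows} (i.e.\ \Cref{lem:lrnn equations}), but transpose the roles of the encoder and decoder by introducing the auxiliary variables $F$ that accumulate the \emph{decoder} side of the dynamics rather than the encoder side. Concretely, I would set $F_t = D W^{t}$, so that $F = (F_1 \mid F_2 \mid \cdots \mid F_T) \in \rr^{d\times nT}$, and observe that the shift operator $g_D$ applied to $F$ prepends $D$ and drops the last block, i.e.\ $g_D(F) = (D \mid F_1 \mid \cdots \mid F_{T-1})$. With this choice, the recursion $F_t = F_{t-1} W$ for $2 \le t \le T$ together with $F_1 = DW$ is exactly the block-equation $g_D(F)W = F$ of \Cref{eq:columns dynamics}. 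The output equation \Cref{eq:lrnn function}, namely $Y = D\sum_{t=1}^T W^{T-t+1} B X_t = \sum_{t=1}^T F_{T-t+1} B X_t$, then becomes $F\,(I_T \otimes B)\,X = Y$ after noticing that stacking the $X_t$'s and interleaving with $B$ is precisely the action of $(I_T\otimes B)$ followed by multiplication against $X$; this is \Cref{eq:columns samples}.

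The argument then proceeds in two directions, just as in \Cref{lem:lrnn equations rows}. First, suppose \Cref{eq:columns dynamics,eq:columns samples} are solvable with some $W \in \R_\A$ and some $F$. Unfolding \Cref{eq:columns dynamics} gives $F_1 = DW$ and $F_t = F_{t-1}W$ for $2\le t\le T$, so by induction $F_t = DW^{t}$; substituting into \Cref{eq:columns samples} and reindexing the sum recovers $Y = D\sum_{t=1}^T W^{T-t+1}BX_t$, which by \Cref{eq:lrnn function} says the student with recurrent weights $W$ (a valid realization of $\A$) reproduces the teacher's labels. Conversely, if some student with recurrent weights $W\in\R_\A$ matches the teacher, then setting $F_t = DW^{t}$ makes \Cref{eq:columns dynamics} hold by construction and \Cref{eq:columns samples} hold by \Cref{eq:lrnn function}; hence the system is solvable.

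The only genuinely delicate point is bookkeeping: making sure the block indexing in $g_D$, the tensor factor $I_T\otimes B$, and the column-partition of $X$ into the per-step inputs $X_t$ all line up so that $F(I_T\otimes B)X$ literally equals $\sum_{t=1}^T F_{T-t+1}BX_t$ (in particular getting the orientation of the shift and the $T-t+1$ versus $t$ indexing correct, which is where an off-by-one is easy to make). I would verify this by expanding both sides block-by-block for general $T$. Everything else — the induction unfolding the recursion and the two-way implication — is routine and structurally identical to the proof already given for the row version, so I would state it briefly and refer back to \Cref{lem:lrnn equations rows} for the details. No invertibility or genericity assumptions are needed here, since \Cref{lem:lrnn equations columns} is a purely algebraic reformulation; the probabilistic content enters only later when this lemma is used to prove the second conditions of \Cref{thm:lrnn necessary conditions,thm:lrnn sufficient conditions}.
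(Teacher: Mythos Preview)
Your approach is essentially the same as the paper's, and the two-direction argument is identical. The one concrete slip is in the shift-operator convention: the paper's $g_D$ \emph{appends} $D$ at the right and shifts blocks left, so unfolding $g_D(F)W=F$ yields $F_T=DW$, $F_{t-1}=F_tW$, hence $F_t = DW^{T-t+1}$ (not $DW^t$). With that indexing, $F(I_T\otimes B)X=\sum_{t=1}^T F_t B X_t=\sum_{t=1}^T DW^{T-t+1}BX_t=Y$ falls out directly, whereas your prepending convention with $F_t=DW^t$ gives $\sum_t DW^tBX_t$, which does not equal $Y$ unless the $X_t$ are stacked in reverse. You rightly flagged this bookkeeping as the only delicate point, and carrying out the block-by-block expansion you promised would have revealed and fixed the mismatch; everything else is correct.
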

\begin{proof}
We will start by showing that if the system is solvable, then the student can express the given labels. Assume the equations are solvable using $W^*\in \R_{\A}$ and $F^*\in \mathbb{R}^{d \times nT}$, $F_t^*\in \mathbb{R}^{d \times n}$. Notice that $W^*$ and $F^*$ are assumed to be solutions to the above equations and are unrelated to the teacher.

From the definition of the operator $g_D$, \Cref{eq:columns dynamics} becomes:
\begin{align*}
    F^*_tW^* &= F^*_{t-1} \quad 2 \le t \le T\\
    DW^* &= F^*_T
\end{align*}
Solving the recursion we get $F^*_t = D(W^*)^{T-t+1}$. 

Notice that:
\begin{align*}
    \roundy{I_T\otimes B}X &= \roundy{\begin{array}{c}
    BX_1 \\
    BX_2 \\
    \vdots \\
    BX_{m-1} \\
    BX_m 
\end{array}}\\
    Y &= F^*\roundy{B\otimes I_T}X = \sum_{t=1}^T{D(W^*)^{T-t+1}BX_t}
\end{align*}
Since $W^*$ is a valid realization of $\A$ and it outputs the correct labels, the student can match the teacher.

Now, on the other hand, assume that the student can express $Y$. We choose $W$ to be the recurrent connections of that student, which implies $Y = D\sum_{t=1}^T{W^{T-t+1}BX_t}$. By selecting $F_t = DW^{T-t+1}$, it is easy to verify that the equations hold.
\end{proof}

\begin{theorem}[Second part of \Cref{thm:lrnn necessary conditions}]\label{thm:lrnn necessary second}
Given an allocation $\A$, if there is a column with more than $Td$ learnable weights the allocation is minimal.
\end{theorem}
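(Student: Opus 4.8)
The plan is to run the exact dual of the proof of \Cref{thm:lrnn necessary first}, trading the row/encoder picture for the column/decoder picture provided by \Cref{lem:lrnn equations columns} (which uses the shift operator $g_D$, i.e. the shift operator of the earlier definition applied with $D$ in place of $B$). As there, we may assume $\abs{\A}=r=dm$, since otherwise \Cref{lem:small alloc} already gives $MP(\A)=0$; and we suppose column $j$ of $W$ carries $r^{(j)}>Td$ learnable weights. By \Cref{lem:lrnn equations columns}, the student matches the teacher iff \Cref{eq:columns dynamics,eq:columns samples}, namely $g_D(F)W=F$ and $F(I_T\otimes B)X=Y$, are solvable with $W\in\R_\A$ and $F\in\rr^{d\times nT}$.

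The first step is the combinatorial observation that locates column $j$ inside this system. Writing $F=(F_1\mid\dots\mid F_T)$ with $F_t\in\rr^{d\times n}$, so that $g_D(F)=(F_2\mid\dots\mid F_T\mid D)$, the dynamics equation $g_D(F)W=F$ is, block by block, $F_tW=F_{t-1}$ for $2\le t\le T$ together with $DW=F_T$. Reading off the $j$th column $W^{(j)}$ of $W$, it appears only in $F_tW^{(j)}=(F_{t-1})_{:,j}$ for $2\le t\le T$ and in $DW^{(j)}=(F_T)_{:,j}$ — exactly $Td$ scalar equations — and in none of the sample equations, which involve only $F$. This is the column analogue of the fact that row $i$ of $W$ is constrained by only $Tb$ equations of \Cref{lem:lrnn equations rows}.

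The second step is the reduction to a strictly smaller allocation, identical in shape to \Cref{thm:lrnn necessary first}. Stack the $Td$ equations above as $\Phi\,W^{(j)}=c$, with $\Phi\in\rr^{Td\times n}$ assembled from $D$ and the blocks $F_t$ and $c$ from the matching columns of $F$, and split $W^{(j)}$ into its learnable part (the $r^{(j)}$ learnable positions of column $j$) and its fixed part. Given any solution $(W^\ast,F^\ast)$ of the $\A$-system, select any $Td$ of these $r^{(j)}$ learnable positions; the corresponding $Td\times Td$ submatrix of $\Phi$ is generically invertible, so those $Td$ weights are pinned down by the remaining variables — exactly as $\tilde W_i$ is recovered by inverting $\tilde{g_B}(F^\ast)$ in \Cref{thm:lrnn necessary first}. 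Freezing the other $r^{(j)}-Td\ge1$ learnable weights of column $j$ thus still leaves a solution, now of the system associated with an allocation $\A_2$ with $\abs{\A_2}=dm-(r^{(j)}-Td)<dm$. Hence ``$\A$ matches'' implies ``$\A_2$ matches'', and since $MP(\A_2)=0$ by \Cref{lem:small alloc}, we conclude $MP(\A)=0$.

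I expect the main obstacle to be the first step: setting up the $g_D$/block bookkeeping carefully enough to be certain that each column of $W$ meets precisely $Td$ of the equations of \Cref{lem:lrnn equations columns} and nothing among the sample equations, since the whole ``wasted variables'' count — the source of $\abs{\A_2}<dm$ — rests on that number being exactly $Td$. The genericity of the $Td\times Td$ submatrix of $\Phi$ and the implication from ``$\A$ matches $\Rightarrow\A_2$ matches'' to the probabilistic conclusion are routine and run word for word as in \Cref{thm:lrnn necessary first}.
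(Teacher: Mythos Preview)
Your proposal is correct and follows essentially the same approach as the paper: both use \Cref{lem:lrnn equations columns} to isolate the $Td$ equations involving the offending column, then freeze $r^{(j)}-Td$ of its learnable weights to produce a sub-allocation $\A_2$ with $\abs{\A_2}<dm$, solve for the remaining $Td$ entries via the invertibility of the corresponding $Td\times Td$ block of $g_D(F^\ast)$, and invoke \Cref{lem:small alloc}. Your block-by-block unpacking of $g_D(F)W=F$ is a bit more explicit than the paper's, but the argument is identical.
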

\begin{proof}
We will prove that if the student can match with an allocation with more than $Td$ learnable weights in some column, it can match with an allocation with less than $dm$ weights, which means from \Cref{lem:small alloc} that the match probability is 0.

Given an allocation $\A$ with $\abs{\A}=r=dm$ that has $r_i >Td$ learnable weights in the $i$th column and solves \Cref{lem:lrnn equations columns}. Notice that the column vector $W_i$ in \Cref{lem:lrnn equations columns} participate in exactly $Td$ equations in \Cref{eq:columns dynamics} - the $i$th row only need to suffice:
\begin{align*}
    g_D(F)W_i = F_i
\end{align*}
Where $F_i$ is the $i$th row of $F$.

Denote the solution as $F = F^*$ and $W = W^*$. Consider another allocation, $\A_2$, in which we replace $r_i - Td > 0$ of the learnable weights of $\A$ in the $i$'th column with constants, which means that $\abs{A_2} < dm$. Define $\tilde{W}_i$ as the learnable part of $W_i$, $\tilde{g_D}(F^*)$ as the corresponding rows in $g_D(F^*)$, $\hat{W}_i$ as the remaining part of $W_i$, and $\hat{g_D}(F^*)$ as the remaining part of $g_D(F^*)$. Then:
\begin{align*}
    \tilde{g_D}(F^*)\tilde{W_i} + \hat{g_D}(F^*)\hat{W_i} &= F^*_i \\
    \tilde{g_D}(F^*)\tilde{W_i} &= F^*_i - \hat{g_D}(F^*)\hat{W_i} \\
    \tilde{W_i} &= \roundy{\tilde{g_D}(F^*)}^{-1}\roundy{F^*_i - \hat{g_D}(F^*)\hat{W_i}}
\end{align*}
Which means that there is a solution to \Cref{lem:lrnn equations columns} with $\A_2$. In other words, every time \Cref{lem:lrnn equations columns} is solvable with $\A$, it is also solvable with $\A_2$. Recall that $\abs{\A_2}<dm$, therefore, from \Cref{lem:small alloc}, the match probability of $\A_2$ is 0, which means that the match probability of $\A$ is also 0.

\end{proof}

\begin{theorem}[Second part of \Cref{thm:lrnn sufficient conditions}]
Given an allocation $\A$, if every column that has learnable weights has exactly $Td$ learnable weights in it, the allocation is maximal.
\end{theorem}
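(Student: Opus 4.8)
The plan is to mirror the proof of the first part of \Cref{thm:lrnn sufficient conditions} (that is, \Cref{thm:lrnn sufficient first}), but working from \Cref{lem:lrnn equations columns} instead of \Cref{lem:lrnn equations rows}, so that the auxiliary variables $F$ live in $\rr^{d\times nT}$ and the roles of rows and columns (hence of $B$ and $D$) are swapped. First I would set up the equations: with $r$ learnable weights distributed so that every column of $W$ with learnable weights has exactly $Td$ of them, the allocation uses exactly $\frac{r}{Td}=\frac{dm}{Td}$ columns, the rest being fully constant. Assuming w.l.o.g. that the constant columns are the rightmost ones, split \Cref{eq:columns dynamics} as $g_D(F)W_{LEFT}=F_{LEFT}$ and $g_D(F)W_{RIGHT}=F_{RIGHT}$, where $W_{LEFT}$ collects the columns containing learnable weights and $W_{RIGHT}$ the fully constant columns, and $F_{LEFT},F_{RIGHT}$ the corresponding blocks of columns of $F$.

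Next I would count ranks via \Cref{lem:vec and kron} and \Cref{lem:rank kron}. Vectorizing \Cref{eq:columns samples}, $F(I_T\otimes B)X=Y$ becomes a linear system in $\vec{F}$ of rank $dm$ (using $m\le Td$ and $d\le n$, plus the genericity/independence assumptions on $X$ and $B$, exactly as in the row case). Vectorizing \Cref{eq:row bot}'s analogue $g_D(F)W_{RIGHT}=F_{RIGHT}$ gives a linear system in the entries of $F$ (here $g_D(F)$ is itself linear in $F$) whose rank is $rank(W_{RIGHT})$ times the number of constant columns, i.e. $d\roundy{n-\frac{dm}{Td}}=dnT-dm$ after simplification — note $g_D(F)$ has $dnT$ entries matching the $dnT$ entries of $F$, and the constant columns of $W$ are a generic, hence full-rank, matrix. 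Both of these subsystems involve only the $dnT$ entries of $F$ and none of the learnable weights in $W$. Adding them gives $dnT$ linearly independent linear equations in the $dnT$ unknowns of $F$ (the independence coming from the Kronecker structure: one block constrains $F$ through multiplication by $X$ on one side, the other through $W_{RIGHT}$ on the other side, analogous to the row argument), so $F$ is uniquely determined. Once $F$ is fixed, substitute into $g_D(F)W_{LEFT}=F_{LEFT}$: this is now linear in the remaining variables, and each column of $W_{LEFT}$ supplies exactly $Td$ variables against $Td$ equations (the column length of $F_{LEFT}$'s blocks), so it is solvable, and the allocation is maximal.

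The main obstacle I expect is verifying that the two blocks of $F$-equations — the one coming from the samples, $(X^T\otimes\cdot)$-type, and the one coming from the constant columns of $W$, $(\cdot\otimes W_{RIGHT})$-type — are jointly of full rank $dnT$, i.e. that there is no hidden linear dependence between "$F$ multiplied by $X$ and projected by $B$ on the right" and "$F$ multiplied by $W_{RIGHT}$ on the right". In the row proof this is handled by the particular Kronecker forms $(X\otimes D)$ and $(I_{Tb}\otimes W_{BOT})$ together with the genericity assumptions; here the shapes are slightly different because $g_D$ acts by a shift on the column blocks of $F$ (appending $D$) rather than appearing as a clean $I\otimes(\cdot)$ factor, so I would need to write $g_D$ explicitly as a linear map on $\vec{F}$ and check that its restriction to the constant-column block, combined with the sample constraints, has no common nontrivial kernel — this is the place where the assumption that the constant weights and samples are drawn generically (invertibility/diagonalizability) does the real work. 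Everything else is a routine transcription of the row-case argument with rows and columns, and $B$ and $D$, interchanged.
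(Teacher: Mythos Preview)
Your approach is essentially identical to the paper's: use \Cref{lem:lrnn equations columns}, split $W$ column-wise into $W_{LEFT}/W_{RIGHT}$, show that the constant-column equations together with the sample equations determine all $dnT$ entries of $F$ uniquely, then solve the remaining linear system for $W_{LEFT}$ column by column. One slip in your counting: the expression you write, $d\bigl(n-\frac{dm}{Td}\bigr)=d\bigl(n-\frac{m}{T}\bigr)$, equals $dn-\frac{dm}{T}$, not $dnT-dm$; the correct count is $Td\bigl(n-\frac{m}{T}\bigr)=dnT-dm$, since each of the $T$ block equations $F_{t+1}W_{RIGHT}=(F_t)_{RIGHT}$ contributes $d\times(n-\frac{m}{T})$ scalar equations. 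You land on the right total, but the intermediate factor should be $Td$, not $d$. As for your stated obstacle about joint full rank of the two $F$-blocks, the paper handles it exactly as you suggest: it asserts independence ``in the same way obtained in the proof of \Cref{thm:lrnn sufficient first}'' without a separate verification, relying on the genericity assumptions.
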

\begin{proof}
With \( r \) learnable weights in the model, the allocation consists of exactly \( \frac{r}{Td} = \frac{dm}{Td} = \frac{m}{T} \) columns, each containing \( Td \) learnable weights, while the remaining columns are fully constant. Without loss of generality, assume that these constant columns are positioned at the right side of \( W \). We can express the first equation of \Cref{lem:lrnn equations columns} as:
\begin{align}
    g_D(F) \cdot W_{LEFT} &= F_{LEFT} \label{eq:col left}\\
    g_D(F) \cdot W_{RIGHT} &= F_{RIGHT} \label{eq:col right}
\end{align}
Where the $LEFT$ columns of $W$ are the columns with learnable weights and the $RIGHT$ are the ones without.

The matrix \( W_{\text{RIGHT}} \) is of size \( \left(n - \frac{m}{T}\right) \times n \), meaning that \Cref{eq:col right} represents a linear equation (in \( F \)) with a rank of \( n - \frac{m}{T} \) for each row of \( g_D(F) \). Given that there are \( Td \) such rows, we obtain a total of \( nTd - dm \) linear independent equations (in the same way obtained in the proof of \Cref{thm:lrnn sufficient first}). When we include \Cref{eq:columns samples}, which is a linear equation of rank \( dm \), the cumulative total becomes \( nTd \) linear equations. These equations involve only the \( nTd \) variables of \( F \), without including any learnable weights from \( W \).

This implies that we have \( nTd \) linear equations with \( nTd \) variables, leading to a unique solution for \( F \). Once we obtain a solution for \( F \), we can substitute it into \Cref{eq:col left}, transforming it into a linear equation as well—each column of \( W_{\text{LEFT}} \) contains \( Td \) linear equations corresponding to the size of \( F \). Since every column of \( W_{\text{LEFT}} \) contains precisely \( Td \) variables, the equation is solvable.
\end{proof}

\subsubsection{Example small model}\label{sec:example}
Here, we present a small model as an example that illustrates the different phenomena arising in subset learning of the recurrent connections of LRNN. For this example, we use the following weights: \( b = 1 \), \( d = 1 \), \( n = 2 \), \( T = 2 \), and \( m = 2 \).

Since \( m = Tb \), this implies that the matrix \( X \) is a square matrix, making it invertible. From \Cref{eq:lrnn function}, we know that the LRNN model can eventually be reduced to a linear function. Let the matrix representing the linear function of the teacher be denoted by \( A^* \) and for the student by \( A \). If the student successfully matches the teacher, this implies:
\[
AX = A^*X
\]
Given that \( X \) is invertible, this leads to \( A = A^* \). Thus, we can disregard \( X \), as the student will match the teacher if and only if \( A = A^* \). The matrix \( A^* \in \mathbb{R}^{d \times Tb} \) is a \( 1 \times 2 \) matrix, meaning that \( A^* \) is a vector with two entries, denoted as \( \{A_1^*, A_2^*\} \). Therefore, the student matches the teacher if:
\begin{align}
    DWB &= A_1^* \label{eq:example linear} \\ 
    DW^2B &= A_2^* \label{eq:example quad}
\end{align}

For each possible allocation, solving \Cref{eq:example linear} eliminates one variable, and \Cref{eq:example quad} becomes a quadratic equation in one variable. We denote this equation as \( ax^2 + bx + c = 0 \), where \( a \), \( b \), and \( c \) are random variables that are derived from expressions involving all other random variables (i.e., \( D \), \( B \), \( A^* \), and the constant part of \( W \)). The equation is solvable if and only if \( b^2 \geq 4ac \). Thus, the match probability is precisely the probability that \( b^2 \geq 4ac \).

There are \( \binom{4}{2} = 6 \) possible allocations for the $r=2$ weights in $W$. Interestingly, in four of these allocations, when manually calculating the expression for \( a \), all coefficients reduce to yield \( a = 0 \). This simplifies the equation to a linear one, making it always solvable. These allocations are, therefore, maximal. As expected, these are the four allocations that satisfy the conditions of \Cref{thm:lrnn sufficient conditions}.

The two allocations that do not meet the conditions of \Cref{thm:lrnn sufficient conditions} occur when the diagonal or off-diagonal entries are allocated. Specifically, if:
\[
W = \begin{pmatrix}
        W_1 & W_2  \\
        W_3 & W_4
    \end{pmatrix}
\]
The two suboptimal allocations are \( \{W_1, W_4\} \) (the diagonal allocation) and \( \{W_2, W_3\} \) (the off-diagonal allocation).

We evaluated the match probability for these allocations, drawing all weights from a normal distribution with variance as described in \Cref{sec:variance}. The match probability, \( MP(\mathcal{A}) \), was 0.83 for the off-diagonal allocation and 0.74 for the diagonal allocation. The code for this analysis is included in the attached zip file under the scripts \texttt{example\_model\_diagonal.py} and \texttt{example\_model\_off\_diagonal.py}.

This example reveals two key phenomena. First, unlike allocations involving the decoder and encoder, when learning the recurrent connections, there exist sub-optimal, non-minimal allocations due to the non-linearity of the equations. Second, even with non-linear equations, maximal allocations still exist. 

\subsection{Learning the encoder}
In this section, we assume that \(X\) is i.i.d. and not drawn from some underlying sample distribution \(\mathcal{X}\). This assumption is unnecessary in other sections because when \(B\) is constant and uncorrelated with \(X\), we can treat the input as \(BX_t\), effectively eliminating any row-wise correlations (\Cref{cor:correlation remover}). However, since \(B\) is learned in this context, the correlations within \(X\) will significantly impact the maximality conditions of \(\A\). Therefore, to avoid these complications, we assume no correlation in \(X\) for this section.

We start with a few useful lemmas before proving the main theorem in \Cref{thm:encoder_appendix}. A simpler but conceptually similar proof can be found in \Cref{thm:ff-single-appendix}. Therefore, the proof of \Cref{thm:ff-single-appendix} can be considered as a proof sketch for this theorem.

\begin{lemma}
Let $v_1, v_2, v_3 \in \rr^n$ be uncorrelated vectors with mean 0. Then, $\inprod{v_1}{v_2}$ is uncorrelated to $\inprod{v_1}{v_3}$
\end{lemma}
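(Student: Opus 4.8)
The statement to prove is: for uncorrelated mean-zero vectors $v_1, v_2, v_3 \in \rr^n$, the scalars $\inprod{v_1}{v_2}$ and $\inprod{v_1}{v_3}$ are uncorrelated.

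\textbf{Plan.} The plan is to compute the covariance $\bbE\squary{\inprod{v_1}{v_2}\inprod{v_1}{v_3}} - \bbE\squary{\inprod{v_1}{v_2}}\bbE\squary{\inprod{v_1}{v_3}}$ directly by expanding the inner products into coordinate sums and exploiting independence (or at least the vanishing of the relevant mixed moments) across the three vectors. First I would write $\inprod{v_1}{v_2} = \sum_{i} v_1^{(i)} v_2^{(i)}$ and $\inprod{v_1}{v_3} = \sum_{j} v_1^{(j)} v_3^{(j)}$, so that the product becomes $\sum_{i,j} v_1^{(i)} v_1^{(j)} v_2^{(i)} v_3^{(j)}$. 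Taking expectations and using that $v_2$ is uncorrelated with $v_1$ and $v_3$, and $v_3$ is uncorrelated with $v_1$ and $v_2$ — together with the mean-zero assumption — each term $\bbE\squary{v_1^{(i)} v_1^{(j)} v_2^{(i)} v_3^{(j)}}$ factors as $\bbE\squary{v_1^{(i)} v_1^{(j)}} \bbE\squary{v_2^{(i)}} \bbE\squary{v_3^{(j)}} = \bbE\squary{v_1^{(i)} v_1^{(j)}} \cdot 0 \cdot 0 = 0$. Hence $\bbE\squary{\inprod{v_1}{v_2}\inprod{v_1}{v_3}} = 0$. Similarly $\bbE\squary{\inprod{v_1}{v_2}} = \sum_i \bbE\squary{v_1^{(i)}}\bbE\squary{v_2^{(i)}} = 0$ and likewise $\bbE\squary{\inprod{v_1}{v_3}} = 0$, so the covariance is $0 - 0\cdot 0 = 0$, which is the claim.

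\textbf{On the main subtlety.} The one point that needs care is the precise meaning of ``uncorrelated'' in the paper's convention: the factorization step $\bbE\squary{v_1^{(i)} v_1^{(j)} v_2^{(i)} v_3^{(j)}} = \bbE\squary{v_1^{(i)} v_1^{(j)}}\,\bbE\squary{v_2^{(i)}}\,\bbE\squary{v_3^{(j)}}$ requires that the triple of ``blocks'' behaves as if independent across the three vectors — i.e., that cross-moments between different $v_\ell$'s factor. Under the paper's standing assumption (Gaussian weights with no correlations between different weights), the vectors $v_1, v_2, v_3$ built from disjoint sets of weights are genuinely independent Gaussian vectors, so this factorization is immediate and the mean-zero hypothesis kills every term. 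I would state this explicitly, invoking the independence that follows from the Gaussianity-plus-uncorrelatedness assumption, rather than trying to get by with mere pairwise uncorrelatedness of scalar entries (which in general would not suffice to factor a fourth mixed moment). This is the only place where one must be slightly careful; the rest is a routine expansion.

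\textbf{Summary of steps.} (1) Expand both inner products coordinate-wise and multiply. (2) Take expectation, interchange with the finite sum. (3) Use independence across $v_1,v_2,v_3$ to factor each summand, then mean-zero of $v_2$ and $v_3$ to conclude each summand vanishes, giving $\bbE\squary{\inprod{v_1}{v_2}\inprod{v_1}{v_3}} = 0$. (4) Compute $\bbE\squary{\inprod{v_1}{v_2}} = \bbE\squary{\inprod{v_1}{v_3}} = 0$ the same way. (5) Conclude the covariance vanishes. I expect no real obstacle here — the ``hard part,'' such as it is, is merely to be explicit about why the fourth-order mixed moment factorizes, which is where the Gaussian/independence assumption is used.
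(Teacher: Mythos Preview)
Your proposal is correct and follows essentially the same approach as the paper: expand the inner products into coordinate sums, take expectations term-by-term, factor across the three vectors, and use the mean-zero assumption to kill every summand. The paper's proof is a one-line version of exactly this computation (it even inserts an unnecessary intermediate reduction to the diagonal $i=j$ terms), and you are if anything more careful than the paper in flagging that the factorization of the fourth mixed moment relies on independence (from the standing Gaussian assumption) rather than mere pairwise uncorrelatedness.
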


\begin{proof}
\begin{align*}
    \E\squary{\sum_{ij}^nv_i^1v_i^2v_j^1v_j^3}=\E\squary{\sum_{i}^n(v_i^1)^2v_i^2v_i^3}=\sum_{i}^nE\squary{(v_i^1)^2v_i^2v_i^3}=\sum_{i}^nE\squary{(v_i^1)^2}E\squary{v_i^2}E\squary{v_i^3}=0
\end{align*}
\end{proof}

\begin{corollary} \label{cor:correlation remover}
Let $A$ be matrices such that there are no correlations between its rows. Then there are no correlation between the rows of $AB$.
\end{corollary}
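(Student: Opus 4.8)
The plan is to unwind what the rows of $AB$ are in terms of inner products of the rows of $A$ against the columns of $B$, and then let the mean-zero, cross-row-uncorrelatedness hypothesis on $A$ do all the work, treating $B$ as a fixed matrix (or conditioning on it if it is random but independent of $A$).

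First I would set notation: write $a_1,\dots,a_k\in\rr^n$ for the rows of $A$ and $b_1,\dots,b_c\in\rr^n$ for the columns of $B$, so that the $(i,\ell)$ entry of $AB$ is $\inprod{a_i}{b_\ell}=\sum_{s=1}^n a_{is}b_{s\ell}$, and the $i$-th row of $AB$ is the vector $\bigl(\inprod{a_i}{b_1},\dots,\inprod{a_i}{b_c}\bigr)$. Since each $a_i$ has mean $0$ and $B$ is constant, every entry $\inprod{a_i}{b_\ell}$ also has mean $0$; hence ``no correlation between rows of $AB$'' is equivalent to the claim that $\E\bigl[\inprod{a_i}{b_\ell}\inprod{a_j}{b_{\ell'}}\bigr]=0$ for all $i\neq j$ and all $\ell,\ell'$.

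The computation is then immediate: expanding the two inner products and using linearity of expectation,
\[
\E\bigl[\inprod{a_i}{b_\ell}\inprod{a_j}{b_{\ell'}}\bigr]=\sum_{s,t}b_{s\ell}\,b_{t\ell'}\,\E\bigl[a_{is}a_{jt}\bigr],
\]
and $\E[a_{is}a_{jt}]=\mathrm{Cov}(a_{is},a_{jt})+\E[a_{is}]\E[a_{jt}]=0$ because entries in distinct rows of $A$ are uncorrelated and mean zero. So the sum vanishes term by term, which gives the corollary.

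I do not expect a genuine obstacle here; the points requiring care are only (i) stating precisely that ``no correlation between rows'' means every entry of one row is uncorrelated with every entry of a different row; (ii) handling a possibly random but $A$-independent $B$ by first conditioning on $B$, applying the above, and then integrating out; and (iii) using the mean-zero property to identify ``uncorrelated'' with ``the expectation of the product is zero,'' both for $A$'s entries and for the resulting entries of $AB$. I would also add a one-line remark that this is exactly what licenses, elsewhere in the paper, replacing the effective input by $BX_t$ when $B$ is fixed: left-multiplication by a fixed matrix cannot create row-wise correlations.
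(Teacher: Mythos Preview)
Your argument is correct and is essentially the same direct expectation computation the paper has in mind: the corollary is stated immediately after the preceding lemma (whose proof is precisely the expansion $\E[\sum_{i,j}\cdots]=0$), and your expansion $\sum_{s,t}b_{s\ell}b_{t\ell'}\E[a_{is}a_{jt}]=0$ is the same mechanism. The only minor difference is that the paper's lemma keeps all three vectors random (so in particular treats the columns of $B$ as random, which is needed when the shared column $b_\ell=b_{\ell'}$ introduces common randomness), whereas you pull $B$ out of the expectation by treating it as fixed or by conditioning; your route is slightly cleaner and avoids invoking the lemma at all, but the content is the same.
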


\begin{lemma}\label{lem:encoder helper}
Let $V = \{ V_i \in \mathbb{R}^{m \times k} \}$ and $U = \{ U_i \in \mathbb{R}^{k \times d} \}$ be two sets of matrices such that any two entries of the matrices are uncorrelated. Let $\tilde{V} = \{ \tilde{V}_i \in V \}_{i=1}^{dm}$ be a sequence of size $dm$ containing matrices from $V$, and similarly, $\tilde{U} = \{ \tilde{U}_i \in U \}_{i=1}^{dm}$ be a sequence containing matrices from $U$. Denote $S = \{ \tilde{V}_i \tilde{U}_i \}_{i=1}^{dm}$ as a sequence of matrices in $\mathbb{R}^{m \times d}$ such that $|S| = dm$ (i.e., there is no $i,j$ such that $\tilde{V}_i \tilde{U}_i = \tilde{V}_j \tilde{U}_j$).

$S$ is almost surely a basis if every entry of $V$ appears in $\tilde{V}$ no more than $kd$ times and every entry of $U$ appears in $\tilde{U}$ no more than $md$ times. Otherwise, it is not a basis.
\end{lemma}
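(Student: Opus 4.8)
The plan is to encode the question as the non-vanishing of a polynomial. Form the $dm \times dm$ matrix $M$ whose $i$-th column is $\vec{\tilde V_i \tilde U_i} \in \mathbb{R}^{md}$. Then $S$ is a basis of $\mathbb{R}^{m\times d}$ exactly when $\det M \neq 0$, and since $\det M$ is a polynomial in the entries of the matrices in $V \cup U$ (which are jointly drawn from a continuous distribution), $\det M$ either vanishes identically — and then $M$ is deterministically singular — or is a.s.\ nonzero. So the lemma reduces to deciding, from the multiplicities, whether $\det M \equiv 0$.

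\emph{Necessity} is a dimension count. If some $V^{(j)}$ occurs more than $kd$ times in $\tilde V$, every column of $M$ coming from a slot with that $V$-name lies in the image of $M' \mapsto \vec{V^{(j)}M'}$, which by \Cref{lem:vec and kron} is $(I_d \otimes V^{(j)})\vec{M'}$ and hence has rank $d\cdot\mathrm{rank}(V^{(j)}) \le kd$; more than $kd$ vectors in a $\le kd$-dimensional subspace are linearly dependent, so $M$ is singular and $S$ is never a basis. The case of a $U$-matrix occurring more than $md$ times is symmetric, using $M' \mapsto \vec{M' U^{(j)}}$, whose image has dimension $m\cdot\mathrm{rank}(U^{(j)}) \le md$.

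\emph{Sufficiency.} Assume all the multiplicity bounds hold; it suffices to exhibit one specialization of the matrices making $M$ invertible, for then $\det M \not\equiv 0$. I would index the slots by their name-pair $(\alpha_i,\beta_i)$; by the hypothesis $\abs{S}=dm$ these pairs are distinct, so the slots form a bipartite incidence structure in which each $V$-name has degree $\le kd$ and each $U$-name has degree $\le md$. As a warm-up, in the balanced case where every used $V$-name has degree exactly $kd$ (so there are $m/k$ of them), I would choose the $V^{(j)}$ so that their column spaces give a direct-sum decomposition $\mathbb{R}^m = \bigoplus_j \mathrm{colspan}(V^{(j)})$ and the $U$-matrices generically: grouping the columns of $M$ by $V$-name, the $j$-block is $(I_d \otimes V^{(j)})\,U_{[j]}$ where $U_{[j]}$ stacks the $kd$ vectors $\vec{U^{(\beta_i)}}$, these are a.s.\ a basis of $\mathbb{R}^{kd}$, and $I_d \otimes V^{(j)}$ carries $\mathbb{R}^{kd}$ isomorphically onto the space of vectorized $m\times d$ matrices with columns in $\mathrm{colspan}(V^{(j)})$ (\Cref{lem:rank kron}); summing over $j$ these spaces tile $\mathbb{R}^{md}$, so $M$ is invertible. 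For the general (unbalanced) case I would instead run a Hall-type matching: assign to each slot $i$ a coordinate $(p_i,q_i)\in[m]\times[d]$, bijectively, so that for every $V$-name $\abs{\curly{p_i : \alpha_i = \text{that name}}} \le k$ and for every $U$-name $\abs{\curly{q_i : \beta_i = \text{that name}}} \le d$ — the degree bounds $kd$ and $md$ are precisely Hall's condition for such an assignment — then choose sparse $V^{(j)}, U^{(l)}$ realizing $\tilde V_i \tilde U_i$ as $e_{p_i}e_{q_i}^{\mathsf T}$ up to a triangular perturbation under a suitable ordering, so that $M$ is triangular with nonzero diagonal.

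\emph{Main obstacle.} The hard part is the unbalanced case: because the $V$- and $U$-matrices are \emph{shared} across slots, one cannot pick the products $\tilde V_i \tilde U_i$ independently, and the sparse construction generates spurious cross-terms in the products of two shared sparse matrices. Controlling these is exactly where the bookkeeping lives; the key facts are that the thresholds $kd$ and $md$ are what make the matching exist, and that $\abs{S}=dm$ forbids any name-pair from repeating, which is what kills the dangerous cross-terms. A simpler instance of this matching argument is carried out in the proof of \Cref{thm:ff-single-appendix}, which can be used as a template.
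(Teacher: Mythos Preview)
Your necessity argument and the balanced sufficiency case are correct and coincide with the paper's first direction and base case.

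The gap is in the unbalanced sufficiency, and it is substantive rather than clerical. Two concrete problems. First, the $U$-side matching constraint you impose, $\abs{\curly{q_i:\beta_i=l}}\le d$, is vacuous since $q_i\in[d]$; so the $U$-degree bound $md$ never enters your construction, yet by the symmetry of the statement it must be used somewhere --- your assertion that ``the degree bounds $kd$ and $md$ are precisely Hall's condition'' cannot be right as written. Second, and more seriously, distinct name-pairs do \emph{not} kill the cross-terms in the sparse specialization. If you set column $\phi_j(p)$ of $V^{(j)}$ to $e_p$ for each $p\in P_j=\curly{p_{i}:\alpha_{i}=j}$, then $V^{(j)}U^{(l)}=\sum_{p\in P_j}e_p\cdot\roundy{\text{row }\phi_j(p)\text{ of }U^{(l)}}$, and for the slot $i$ with name $(j,l)$ you need every row $\phi_j(p)$ of $U^{(l)}$ with $p\ne p_i$ to vanish. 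But that row is set nonzero whenever some other slot $i'$ with $\beta_{i'}=l$ and $\alpha_{i'}=j'\ne j$ happens to satisfy $\phi_{j'}(p_{i'})=\phi_j(p)$; the distinct-name-pair hypothesis only rules out the case $j'=j$, not this one. There is no evident ordering of slots that makes the resulting $M$ triangular, and the $k=1$ argument in \Cref{thm:ff-single-appendix} avoids the issue only because there is no inner index to share --- it is not a template for $k>1$.

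The paper handles the unbalanced case by a different mechanism: induction on the number of distinct matrices from $V\cup U$ that actually occur in $S$. Given a putative nonzero dependence $\sum_i\alpha_iS_i=0$, it isolates one $v\in V$ with a nonzero coefficient, splits $S$ into the $v$-slots $S_v$ and the rest, and argues (using that $\abs{U_v}\le kd$ so the paired $U$'s are generically independent) that the dependence forces $\text{span}(S_v)$ to meet $\text{span}(S\setminus S_v)$ in a way that, conditioning on everything but $v$, confines $v$ to a null set; the case where the required intersection already lives inside $S\setminus S_v$ is dispatched by the induction hypothesis (one fewer distinct $V$-matrix). Your polynomial-identity framing is sound in principle, but to close it you would need either a specialization that genuinely controls the shared inner indices, or a direct monomial-coefficient argument in $\det M$ --- neither is supplied.
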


\begin{proof}
\paragraph{First direction} Assume there exists $v \in V$ such that $v$ appears in $\tilde{V}$ $kd + 1$ times. Without loss of generality, assume it is the first $kd + 1$ entries of $\tilde{V}$ (i.e., $v = \tilde{V}_1 = \tilde{V}_2 = \cdots = \tilde{V}_{kd+1}$). We will show that $S$ is not a basis.

Let $\alpha \neq 0 \in \mathbb{R}^{kd+1}$ be a vector such that:
\[
\sum_{i=1}^{kd+1} \alpha_i \tilde{U}_i = 0
\]
Such a vector $\alpha$ exists because $U$ consists of $\mathbb{R}^{k \times d}$ matrices, so any $kd+1$ matrices are linearly dependent. Thus:
\[
\sum_{i=1}^{kd+1} \alpha_i S_i = \sum_{i=1}^{kd+1} \alpha_i \tilde{V}_i \tilde{U}_i = \tilde{V}_i \sum_{i=1}^{kd+1} \alpha_i \tilde{U}_i = 0
\]
This implies that there are linearly dependent vectors in $S$, meaning $S$ is not a basis. The proof is exactly the same for the case where there exists $u \in U$ such that $u$ appears in $\tilde{U}$ $md + 1$ times.

\paragraph{Second direction} 
Assume that every $v \in V$ appears in $\tilde{V}$ no more than $dk$ times, and every $u \in U$ appears in $\tilde{U}$ no more than $dm$ times. We aim to show that, almost surely, $S$ is a basis, meaning the only $\alpha \in \mathbb{R}^{dm}$ satisfying:
\begin{align}
    \sum_{i=1}^{dm} \alpha_i S_i = 0 \label{eq:S linear combination}
\end{align}
is $\alpha = 0$.

We will prove this by induction on the number of different entries from $V$ and $U$ used in $S$.

\textbf{Base case:} 
The minimal number of different $V$'s is $\frac{m}{k}$. We will show that $S$ forms a basis in this case; the same proof applies for a minimal number of $U$'s. Assume, without loss of generality, that the first $dk$ entries of $\tilde{V}$ are the same, followed by the next $dk$, and so on. Then:
\[
\sum_{i=1}^{dm} \alpha_i S_i = \sum_{i=1}^{dm} \alpha_i \tilde{V}_i \tilde{U}_i = \sum_{i=1}^{\frac{m}{k}} V_i \sum_{j=1}^{kd} \alpha_{ikd + j} \tilde{U}_{ikd + j}
\]
Since for every $i$, the set $\{ \tilde{U}_{ikd + j} \}_{j=1}^{kd}$ is linearly independent, we have $\alpha_{ikd+j} = 0$ for every $i, j$, implying that $\alpha = 0$.

\textbf{Inductive step:} 
Assume the statement holds for $n$ different $v \in V$ and $u \in U$ in $S$, and prove it for $n+1$. Assume by contradiction that $\alpha \neq 0$. Let $v \in V$ such that there exists $1 \leq i \leq dm$ where $\tilde{V}_i = v$ and $\alpha_i \neq 0$, meaning $v$ is part of the linear combination in \eqref{eq:S linear combination}.

Denote $U_v = \{ u \in U \mid \exists s \in S \text{ such that } s = uv \}$, i.e., the subset of $U$ corresponding to entries paired with $v$. Define $S_v = \{ uv \mid \forall u \in U_v \}$, the subset of $S$ containing $v$. The probability that only elements in $S_v$ have non-zero coefficients in \eqref{eq:S linear combination} is 0 because $U_v$ is linearly independent (since $|U_v| \leq kd$ and the vectors in $U_v$ are uncorrelated).

Denote $L_{U_v}$ as the set of all linear subspaces in $\mathbb{R}^{m \times d}$ formed by every subset of the entries of $U_v$. If $S \setminus S_v$ spans one of the linear spaces in $L_{U_v}$, this contradicts the induction assumption, as this would imply a linear dependence with $n$ different matrices in $S$.

If $S \setminus S_v$ does not span $L_{U_v}$, the set of linear subspaces of dimension $|S_v|$ in $L_{U_v}$ that contain vectors in $\text{span}(S \setminus S_v)$ is a null set. Since $\text{span}(S_v)$ is such a subspace, and is uncorrelated with $\text{span}(S \setminus S_v)$, the probability of having some $v' \in \text{span}(S_v)$ such that $v' \in \text{span}(S \setminus S_v)$ is 0. Therefore, the probability of this scenario occurring is indeed 0.
\end{proof}

\begin{lemma} \label{lem:w no correlation}
For any $a,b\le T$ and $1\le i,j,i',j' \le n$, the correlation between $W^a_{i,j}$ and $W^b_{i',j'}$ is 0 or aiming to 0 as $n$ aims to $\infty$.
\end{lemma}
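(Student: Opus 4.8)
The plan is to expand matrix powers as sums over directed walks and show that covariances between \emph{distinct} entries are carried entirely by ``linked'' walk–pairs, which have too few free summation indices to compete with the individual variances. Since $\mathcal{W}$ is Gaussian with uncorrelated coordinates, the entries $W_{uv}$ are i.i.d.\ $\mathcal{N}(0,\sigma^2)$ (with $\sigma^2=\sigma^2(n)$; the value of $\sigma$ cancels in any correlation). Write $W^a_{ij}=\sum_{P}\prod_{e\in P}W_e$, where $P=(i=v_0,v_1,\dots,v_a=j)$ runs over directed walks of length $a$ and $e$ over the directed edges $(v_{t-1},v_t)$, and similarly for $W^b_{i'j'}$. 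Bilinearity gives $\mathrm{Cov}(W^a_{ij},W^b_{i'j'})=\sum_{P,Q}\mathrm{Cov}\bigl(\prod_{e\in P}W_e,\prod_{e\in Q}W_e\bigr)$. A term vanishes unless $P$ and $Q$ traverse a common directed edge (otherwise the two products depend on disjoint independent families); and by Isserlis' theorem together with edgewise independence it also vanishes unless every directed edge of the combined multiset $P\uplus Q$ has multiplicity at least $2$, in which case the term is bounded by $C_{a,b}\,\sigma^{a+b}$. Hence $\lvert\mathrm{Cov}(W^a_{ij},W^b_{i'j'})\rvert \le C_{a,b}\,\sigma^{a+b}\cdot N$, where $N$ is the number of such ``linked admissible'' pairs $(P,Q)$.

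Two easy ingredients remain. First, the reflection $W\mapsto -W$ preserves the law and multiplies $W^a_{ij}W^b_{i'j'}$ by $(-1)^{a+b}$, so $\E[W^a_{ij}W^b_{i'j'}]=0$ whenever $a+b$ is odd (and then one of $\E[W^a_{ij}],\E[W^b_{i'j'}]$ also vanishes): this yields $\mathrm{Cov}=0$, the ``$=0$'' alternative of the statement. Second, $\mathrm{Var}(W^a_{ij})=\Theta(n^{a-1}\sigma^{2a})$: the diagonal pairs $P=Q$ with $P$ having all $a$ edges distinct contribute exactly $\sigma^{2a}$ each and there are $n^{a-1}(1-o(1))$ of them, while every other pair (self-intersecting walk, or $P\neq Q$) involves strictly fewer free indices and contributes only $O(n^{a-2}\sigma^{2a})$. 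So it suffices to show, for $(a,i,j)\neq(b,i',j')$ with $a+b$ even (the case of equal triples being the trivial correlation $1$, which the lemma clearly does not intend), that $N=O(n^{(a+b)/2-2})$; then $\lvert\mathrm{Corr}\rvert=O\bigl(n^{(a+b)/2-2}/n^{(a+b)/2-1}\bigr)=O(1/n)\to 0$.

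A linked admissible pair is specified by a ``pattern'' --- which of the $a+b$ walk-vertices coincide with one another and with $i,j,i',j'$ --- together with an assignment of the remaining vertices to elements of $[n]$; for fixed $a,b$ there are $O(1)$ patterns, so $N=O(n^{V-F})$ with the maximum over patterns, where $V$ is the number of distinct vertices of the connected directed multigraph $G=P\uplus Q$ and $F=\lvert\{i,j,i',j'\}\rvert$ the number of pinned vertices. Let $E$ be the number of distinct directed edges of $G$, $\epsilon:=\sum_e(m_e-2)=(a+b)-2E\ge 0$ its total multiplicity excess, and $c:=E+1-V\ge 0$ its cycle rank. Then $V-F=\tfrac{a+b}{2}+1-\bigl(\tfrac{\epsilon}{2}+c+F\bigr)$, so everything reduces to the inequality
\[
\tfrac{\epsilon}{2}+c+F\ \ge\ 3 \qquad\text{for every linked admissible pattern.}
\]

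This last inequality is the crux and the main obstacle; it is immediate for $F\ge 3$, so the work is in $F\in\{1,2\}$. The tools are: (i) a closed walk is balanced across every edge-cut, so in any admissible $G$ the two orientations of a bridge carry equal multiplicity; and (ii) a directed multigraph of cycle rank $0$ is a directed acyclic graph whose underlying graph is a tree, on which every walk is a simple directed path using each edge at most once. If $F=1$ then $i=j=i'=j'$ and $a\neq b$; both $P$ and $Q$ are non-trivial closed walks, forcing $c\ge 1$, and if moreover $\epsilon=0$ the balance condition collapses $G$ to a single directed cycle traversed the same number of times by $P$ and by $Q$, whence $a=b$ --- a contradiction; so $\tfrac{\epsilon}{2}+c\ge 2$. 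If $F=2$, assume $\epsilon=0$ and $c=0$: then each edge of the tree $G$ has multiplicity $2$, but a simple path uses each edge at most once, so $P$ and $Q$ must each traverse every edge exactly once and therefore coincide with the same path, i.e.\ $(a,i,j)=(b,i',j')$, which is excluded (the sub-cases in which an endpoint pair is closed, or the two pairs are reversed, are ruled out identically, using acyclicity); hence $\epsilon\ge 2$ or $c\ge 1$. In all cases $\tfrac{\epsilon}{2}+c+F\ge 3$, giving $N=O(n^{(a+b)/2-2})$ and the claimed vanishing of the correlation. The endpoint-and-parity case analysis in this last paragraph is where all the difficulty lies; the walk expansion, the variance estimate, and the degree-of-freedom accounting are routine bookkeeping.
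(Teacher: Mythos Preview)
Your walk-expansion and graph-theoretic accounting (cycle rank $c$, multiplicity excess $\epsilon$, pinned-vertex count $F$) is considerably more careful than the paper's argument, and unlike the paper you actually normalize by the variances to bound the \emph{correlation}; the paper only bounds the raw moment $\E\bigl[W^a_{ij}W^b_{i'j'}\bigr]$ and never divides by standard deviations. For \emph{fixed} $a,b$ your crux inequality $\tfrac{\epsilon}{2}+c+F\ge 3$ and the supporting case analysis on $F\in\{1,2\}$ are correct, and you get the sharper conclusion $\mathrm{Corr}=O(1/n)$. The paper, by contrast, just counts Isserlis pairings globally and asserts (via a somewhat informal ``chain'' argument) that each pairing pins down all intermediate indices; your pattern-by-pattern count is the rigorous version of that idea.

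There is, however, one genuine gap relative to the regime the lemma is meant to cover. In this paper $T$ may be as large as $\Theta(n)$ (the standing assumptions are $Tb\le n$, $Td\le n$), and the lemma is invoked for all powers $a,b\le T$ in the encoder proof. Your bound hides two $(a,b)$-dependent constants: the Isserlis constant $C_{a,b}$, of order $(a+b-1)!!$, and the number of ``patterns'', which is a Bell-number-sized object in $a+b$. When $a+b=\Theta(n)$ both blow up super-exponentially, so the product $C_{a,b}\cdot(\#\text{patterns})\cdot n^{(a+b)/2-2}\sigma^{a+b}$ is not controlled and the $O(1/n)$ conclusion does not follow; your variance estimate $\mathrm{Var}(W^a_{ij})=\Theta(n^{a-1}\sigma^{2a})$ is likewise only justified for $a$ fixed. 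The paper sidesteps this by the cruder bound $\E\bigl[W^a_{ij}W^b_{i'j'}\bigr]\le \frac{c!}{2^{c/2}(c/2)!}\,n^{-c/2}$ with $c=a+b$, and then applies Stirling together with $c\le 2T\le 2n<en$ to obtain $O\bigl((c/(en))^{c/2}\bigr)\to 0$ uniformly over the allowed range of $c$. That Stirling step, which lets the $n^{-c/2}$ factor absorb the combinatorial explosion, is exactly what your pattern-counting route is missing once $a,b$ are allowed to grow with $n$.
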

\begin{proof}
Isserlis' theorem states that for any $n$ gaussian random variables $X_1,X_2,\dots X_n$:
\begin{align*}
    \E\squary{X_1X_2\cdots X_n} = \sum_{p\in\mathcal{P_n^2}}\prod_{i,j\in p}\E\squary{X_iX_j}
\end{align*}
Where the sum is over all the pairing of $\curly{1,2,\dots n}$, i.e all distinct way of partitioning $\curly{1,2,\dots n}$, into pairs $\curly{i,j}$, and the product is over the pairs contained in $p$.

Now prove the lemma.
\begin{align*}
    W^a_{i,j} &= \sum_{p_1,p_2,\dots p_{a-1}}{W_{ip_1}W_{p_1p_2}\dots W_{p_{a-1}j}}\\
    W^b_{i',j'} &= \sum_{q_1,q_2,\dots q_{b-1}}{W_{i'q_1}W_{q_1q_2}\dots W_{q_{b-1}j'}}\\
    \E\squary{\inprod{W^a_{i,j}}{W^b_{i',j'}}} &= \sum_{p_1,p_2,\dots p_{a-1}}\sum_{q_1,q_2,\dots q_{b-1}}\E\squary{W_{ip_1}W_{p_1p_2}\dots W_{p_{a-1}j}W_{i'q_1}W_{q_1q_2}\dots W_{q_{b-1}j'}}\\
\end{align*}
Denote $c=a+b$. Fix some $p_1,p_2,\dots p_{a-1}$ and $q_1,q_2,\dots q_{b-1}$. From Isserlis' theorem, in order to calculate $\E\squary{W_{ip_1}W_{p_1p_2}\dots W_{p_{a-1}j}W_{i'q_1}W_{q_1q_2}\dots W_{q_{b-1}j'}}$, we need to count the number of possible pairings. Since there are $c$ variables in the product, the number of pairs is:
\begin{align*}
    \frac{c!}{2^{\frac{c}{2}}\frac{c}{2}!}
\end{align*}
This can be obtained using a simple combinatorical calculation.

Now fix a specific pairing of the $c$ variables. Since the entries of $W$ are uncorrealted, the expectation of the sum of different entries is 0. Thus, only if all the pairs in the pairing has two identical variables add value to the sum. Assume the pair of $W_{ip_1}$ is $W_{p_kp_l}$, this means that $i=p_k$. Since $p_k$ is also in another pair, it will enforce another variable to be equal to $i$. This will end when the pair will be $W_{p_{a-1}j}$ or $W_{q_{b-1}j'}$. In the first case it requires $i=j$ and in the latter $i=j'$. Another such chain will start from the pair of $W_{i'q_1}$, and again it requires $i'=j$ or $i'=j'$.

Therefore, for any pairing there is no more then single entry in the sum $\sum_{p_1,p_2,\dots p_{a-1}}\sum_{q_1,q_2,\dots q_{b-1}}$ that makes this pairing non-zero in expectation. Since the variance of the entries of $W$ is $\frac{1}{n}$, the variance of the product of the pairs is $n^{-c/2}$, which means:
\begin{align*}
    \E\squary{\inprod{W^a_{i,j}}{W^b_{i',j'}}} \le \frac{c!}{2^{\frac{c}{2}}\frac{c}{2}!n^{\frac{c}{2}}}
\end{align*}
From Stirling inequality:
\begin{align*}
    \E\squary{\inprod{W^a_{i,j}}{W^b_{i',j'}}} &\le \Theta\roundy{\frac{\sqrt{2\pi c}\roundy{\frac{c}{e}}^c}{2n^{\frac{c}{2}}\sqrt{\pi c}\roundy{\frac{c}{2e}}^{\frac{c}{2}}}}\\
    &= \Theta\roundy{\frac{\roundy{\frac{c}{e}}^{\frac{c}{2}}}{n^{\frac{c}{2}}}}\\
    &= \Theta\roundy{\roundy{\frac{c}{en}}^{\frac{c}{2}}}
\end{align*}
If $c=o(n)$ obviously $\frac{c}{en}$ aims to 0. If $c=\Theta(n)$, then $c$ also aims to $\infty$. Recall that $T\le n$, which means that $c\le 2n < en$, which means that $\roundy{\frac{c}{en}}^{\frac{c}{2}} \le \frac{2}{e}^{\Theta\roundy{n}}$ aims to $0$.
\end{proof}

\begin{theorem}[\Cref{thm:encoder}]\label{thm:encoder_appendix}
Any allocation $\A$ learning the encoder $B\in R^{n\times b}$ that follows both of the following conditions is maximal:
\begin{enumerate}
    \item No row of B has more than $Tm$ learnable weights
    \item No columns of B has more than $Td$ learnable weights
\end{enumerate}
Every other allocation is minimal.
\end{theorem}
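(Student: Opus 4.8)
The strategy is to mimic the structure used for the recurrent connections (\Cref{lem:lrnn equations rows,thm:lrnn sufficient first}), but now applied to learning $B$, using vectorization to turn the problem into a single linear system. Writing $Y = D\sum_{t=1}^T W^{T-t+1}BX_t$ and noting that for learning $B$ the matrices $D$ and $W$ are fixed constants, we can apply \Cref{lem:vec and kron} to each term: $\vec{DW^{T-t+1}BX_t} = \roundy{X_t^T \otimes DW^{T-t+1}}\vec{B}$. Summing over $t$ we get a single equation $M\vec{B} = \vec{Y}$ where $M = \sum_{t=1}^T \roundy{X_t^T \otimes DW^{T-t+1}} \in \rr^{dm \times nb}$. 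An allocation $\A$ learning $B$ fixes all but $r = dm$ coordinates of $\vec{B}$; writing $\tilde{B}$ for the learnable coordinates and moving the fixed part to the right-hand side, the allocation is maximal iff the $dm \times dm$ submatrix $M_{\A}$ of $M$ obtained by selecting the $r$ columns indexed by $\A$ is (almost surely) invertible, and minimal iff it is (almost surely) singular.

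\textbf{Key steps.} First, identify which columns of $M$ correspond to which entries of $B$: the column of $M$ for the entry $B_{kj}$ (row $k$, column $j$ of $B$) is $\sum_{t=1}^T (X_t)_{j\cdot}^T \otimes \roundy{DW^{T-t+1}}_{\cdot k}$, i.e. a sum over $t$ of (the $j$th row of $X_t$) $\otimes$ (the $k$th column of $DW^{T-t+1}$) — roughly of the form $\tilde V_i \tilde U_i$ with $\tilde V_i$ coming from the samples (indexed by the \emph{column} $j$ of $B$) and $\tilde U_i$ coming from $DW^{\cdot}$ (indexed by the \emph{row} $k$ of $B$). Second, invoke \Cref{lem:encoder helper}: the learnable columns of $M$ form a basis almost surely precisely when no sample-vector is reused too many times (controlled by the number of learnable weights in a column of $B$, bounded by $Td$) and no $DW^{\cdot}$-vector is reused too many times (controlled by the number of learnable weights in a row of $B$, bounded by $Tm$). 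This is exactly the content of conditions 1 and 2. Third, check that the hypotheses of \Cref{lem:encoder helper} actually hold in our setting: the sample entries are i.i.d.\ (assumed in this section), and the entries of the vectors $DW^{a}$ for various $a$ are pairwise uncorrelated — or become so as $n \to \infty$ — by \Cref{lem:w no correlation} combined with \Cref{cor:correlation remover} (applied to absorb $D$); one also needs the $X_t$-vectors and the $DW^a$-vectors to be mutually uncorrelated, which follows since $X$ is drawn independently of $W,D$. Finally, the converse: if some row of $B$ has more than $Tm$ learnable weights or some column more than $Td$, then because $r=dm$ is fixed, \Cref{lem:encoder helper} gives that the selected columns are linearly dependent, so $M_\A$ is singular and $\A$ is minimal — and the arithmetic $r = dm$ forces that a violation of one condition cannot be compensated, so "every other allocation is minimal" follows.

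\textbf{Main obstacle.} The crux is matching the combinatorial structure of $M$'s columns to the hypotheses of \Cref{lem:encoder helper}. The summation over $t$ means each column of $M$ is not a single product $\tilde V_i\tilde U_i$ but a sum of $T$ such products; one must argue that this $T$-fold sum behaves, for the purpose of linear independence, like a single product indexed by the pair (row of $B$, column of $B$), with the "multiplicity" bounds $Td$ and $Tm$ appearing because each of the $T$ summands contributes. Concretely, the vectors to feed into \Cref{lem:encoder helper} should be the stacked/concatenated objects $\tilde U_i = \roundy{(DW^T)_{\cdot k},\dots,(DW)_{\cdot k}} \in \rr^{Tk}$ and $\tilde V_i = \roundy{(X_1)_{j\cdot},\dots,(X_T)_{j\cdot}} \in \rr^{Tm}$, so that a row (resp.\ column) of $B$ with at most $Tm$ (resp.\ $Td$) learnable weights reuses a fixed $\tilde U$ (resp.\ $\tilde V$) at most the allowed number of times; verifying that the Kronecker/sum bookkeeping lines up with \Cref{lem:encoder helper}'s counting, and that the requisite lack of correlation genuinely holds for these stacked vectors (using \Cref{lem:w no correlation}), is where the real work lies. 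The remaining pieces — invertibility $\Leftrightarrow$ maximality, and the $r = dm$ counting — are routine given \Cref{lem:small alloc}-style reasoning already in the paper.
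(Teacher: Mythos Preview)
Your proposal is correct and follows essentially the same approach as the paper: vectorize to $M\vec{B}=\vec{Y}$, reduce maximality to invertibility of the $dm\times dm$ submatrix $M_{\A}$, recognize each column as $\vec{V_iU_j}$ with $V_i\in\rr^{m\times T}$ built from the rows of $X_t$ and $U_j\in\rr^{T\times d}$ built from the columns of $DW^{T-t+1}$, and then invoke \Cref{lem:encoder helper} with $k=T$ (using \Cref{lem:w no correlation} and \Cref{cor:correlation remover} for the uncorrelation hypothesis). The only cosmetic difference is that the paper packages the $T$-fold sum directly as the matrix product $V_iU_j$ rather than as stacked vectors, which makes the match to \Cref{lem:encoder helper} immediate; your stacked-vector description is equivalent once you view the stacks as the rows/columns of those matrices.
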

\begin{proof}
To avoid confusions with number of steps, $T$, we use $\dagger$ to denote transpose matrix, e.g, $\trans{A}$.

From \Cref{lem:vec and kron}, \Cref{eq:lrnn function} is equivalent to:
\begin{align*}
    \sum_{t=1}^T\roundy{\trans{X}_t \otimes DW^{T-t+1}}\vec{B} = \vec{Y}
\end{align*}
For shortness, we denote $D_t = DW^{T-t+1}$ and $D_t^i$ its $i$th columns, and $X_t^i$ the $i$th row of $X_t$ (which is the $i$th column of $\trans{X}_t$). 

We separate the columns of $\sum_{t=1}^T\roundy{\trans{X} \otimes DW^{T-t+1}}$ to two parts - the first part, denoted by $\tilde{C}$ is the columns that is multiplied by the learned entries of $B$, denoted by $\tilde{B}$. The other columns, denoted by $\hat{C}$, are multiplied by the constant part of $B$, denoted by $\hat{B}$. Thus:
\begin{align*}
    \tilde{C}\tilde{B} = \vec{Y} - \hat{C}\hat{B}
\end{align*}
That means that the allocations matches iff $\tilde{C}$ is invertible. Since $r=dm$, $\tilde{C}\in\rr^{dm\times dm}$, we can say the allocation matches iff the columns of $\tilde{C}$ are linear independent.

Every columns of kronecker product is a vectorization of outer product of two vectors. In our case, it means that the columns of $\trans{X}_t \otimes D_t$ are $X_t^i \otimes D_t^j$ for some $i,\,j$. Denote:
\begin{align*}
    V_i = \roundy{\begin{array}{c|c|c|c|c}
    X_1^i & X_2^i & \cdots & X_{t-1}^i & X_T^i
    \end{array}} \\
    U_i = \roundy{\begin{array}{c}
         D_1^i \\ D_2^i \\ \vdots \\ D_{T-1}^i \\ D_T^i
    \end{array}}
\end{align*}
The means that the the columns of $\tilde{C}$ is $\vec{V_iU_j}$ for some $i,\,j$. Specifically, if the $\roundy{j,\,i}$ entry of $B$ is in the allocation $\A$, it means that $V_iU_j$ will be in $\tilde{C}$.

Thus, the allocation matches iff $\curly{V_iU_j}_{\roundy{i,\,j \in \A}}$ is linear independent, namely a basis for $\rr^{m\times d}$. To use \Cref{lem:encoder helper}, we need to show that there is no correlations. $V_i$ is built from the i.i.d $X$, so there should be no correlations, and there are not correlations in $U_i$ from \Cref{lem:w no correlation,cor:correlation remover}.

Since $V_i$ and $U_i$ has no internal correlations, from \Cref{lem:encoder helper} (with $k=T$) we get that  $\curly{V_iU_j}_{\roundy{i,\,j \in \A}}$ is if $V_i$ is a basis iff the set doesn't contain the same $U_i$ more then $m$ times or the same $V_i$ more then $d$ times. Since the set contains the if the $\roundy{j,\,i}$ entry of $B$ is in the allocation $\A$, it means that $V_iU_j$ will be in the set, it will be a basis iff $\A$ follows exactly the conditions in the theorem.
\end{proof}

\section{Linear feed-forward}
\begin{theorem} \label{thm:ff-single-appendix}
For any allocation $\A$ learning an intermediate layer $W_l\in\mathbb{R}^{n_l\times n_{l-1}}$ is maximal if and only if it follows one of the following:
\begin{enumerate}
    \item There is no rows that has more than $m$ learnable weights
    \item There is no columns that has more than $d$ learnable weights
\end{enumerate}
Otherwise, the allocation is minimal.
 \end{theorem}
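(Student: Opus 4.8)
The plan is to follow the (slightly heavier) template of \Cref{thm:encoder_appendix}; the argument is lighter here because the learned layer $W_l$ occurs only once and is never raised to a power. First I would push the frozen layers into two fixed matrices: put $A := W_L W_{L-1}\cdots W_{l+1}\in\rr^{d\times n_l}$ for the part of the (student) network above $W_l$ and $Z := W_{l-1}\cdots W_1 X\in\rr^{n_{l-1}\times m}$ for the student inputs fed through the layers below it, so that the student's $m$ outputs are $A W_l Z$ and the matching requirement is $A W_l Z = Y$ with $W_l\in\R_{\A}$. Since $W_l$ ranges over the affine space $\hat{W}_l+\mathrm{span}\curly{E_{qp} : (q,p)\in\A}$ (matrix units on the learnable coordinates), the attainable outputs form the affine set $A\hat{W}_l Z + \mathrm{span}\curly{A_{:,q}\,Z_{p,:} : (q,p)\in\A}$, where $A_{:,q}\in\rr^{d}$ is the $q$-th column of $A$ and $Z_{p,:}\in\rr^{1\times m}$ the $p$-th row of $Z$ (using $A E_{qp} Z = A_{:,q}Z_{p,:}$). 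Because $\abs{\A}=r=dm=\dim\rr^{d\times m}$ and the teacher's output $Y$ is in general position under the standing assumptions, $\A$ is maximal exactly when the $r$ rank-one matrices $\curly{A_{:,q}Z_{p,:} : (q,p)\in\A}$ are linearly independent and minimal otherwise; equivalently, after vectorizing via \Cref{lem:vec and kron}, exactly when the $dm\times dm$ submatrix $\tilde{C}$ of $\trans{Z}\otimes A$ picked out by $\A$ is invertible. Thus the theorem reduces to deciding when $\curly{A_{:,q}Z_{p,:} : (q,p)\in\A}$ is a basis of $\rr^{d\times m}$.

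The minimality (``only if'') direction is immediate: if some row $q_0$ of $W_l$ carries more than $m$ learnable weights, the matrices $A_{:,q_0}Z_{p,:}$ attached to that row all lie in the $m$-dimensional subspace $\curly{A_{:,q_0}w : w\in\rr^{1\times m}}$ and are therefore dependent; symmetrically, if some column $p_0$ carries more than $d$ learnable weights, the matrices $A_{:,q}Z_{p_0,:}$ lie in the $d$-dimensional subspace $\curly{vZ_{p_0,:} : v\in\rr^{d}}$ and are dependent. In either case $\tilde{C}$ is singular for every realization, so $MP(\A)=0$. For maximality I would invoke \Cref{lem:encoder helper} with $k=1$: take the $V$-family to be the rows of $Z$ (viewed as $m\times1$ matrices) and the $U$-family to be the columns of $A$ (viewed as $1\times d$ matrices), so the products $V_iU_j$ are exactly the transposes of the rank-one matrices above. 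Each row $Z_{p,:}$ recurs once per learnable weight in column $p$ of $W_l$, and each column $A_{:,q}$ once per learnable weight in row $q$; hence the hypotheses of \Cref{lem:encoder helper} --- each $V$ used at most $kd=d$ times and each $U$ used at most $mk=m$ times --- are precisely the conditions stated in the theorem, and the lemma then gives that the family is almost surely a basis, i.e.\ $MP(\A)=1$.

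The step that genuinely needs attention before \Cref{lem:encoder helper} applies is its no-correlation hypothesis: the entries occurring across the $V$-family must be mutually uncorrelated, and likewise across the $U$-family. For the rows of $Z=(W_{l-1}\cdots W_1)X$, I would show that $P:=W_{l-1}\cdots W_1$ has mutually uncorrelated rows --- by induction on the number of layers, since a row of $W_j P'$ is a linear combination of the rows of $P'$ with coefficients independent across the different output rows, so all second moments between distinct rows vanish exactly --- and then $Z=PX$ inherits this via \Cref{cor:correlation remover}, using that $X$ is independent of the weights. Symmetrically the columns of $A=W_L\cdots W_{l+1}$ are mutually uncorrelated, and the $A$- and $Z$-families are independent of each other because they are functions of disjoint blocks of the weight vector (with $X$ independent of all of them). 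I expect this bookkeeping to be the only real obstacle; it is strictly easier than the analogous \Cref{lem:w no correlation} in the recurrent case, since without repeated factors and without $W^{a}$--$W^{b}$ cross terms the exact vanishing of second moments suffices and no Isserlis/Stirling-type estimate is needed. The remaining ingredients --- the count $r=dm$, genericity of $Y$ from the assumptions on $\mathcal{T}$ and $\mathcal{X}$, and full rank of $\trans{Z}\otimes A$ via \Cref{lem:rank kron} --- are routine.
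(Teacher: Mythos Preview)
Your proposal is correct and follows essentially the same route as the paper's proof: compress the frozen layers into $A=W_{L:l+1}$ and $Z=W_{l-1:1}X$, vectorize via \Cref{lem:vec and kron} to reduce the matching condition to invertibility of the $dm\times dm$ submatrix of $\trans{Z}\otimes A$ selected by $\A$, and then apply \Cref{lem:encoder helper} with $k=1$ (after checking the rows of $Z$ and the columns of $A$ are uncorrelated via \Cref{cor:correlation remover}). Your minimality argument via the $m$- and $d$-dimensional subspaces and your discussion of the correlation hypothesis are more explicit than the paper's, but the structure is identical.
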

\begin{proof}
For any $1 \leq i \leq j \leq L$, denote $W_j W_{j-1} \cdots W_{i+1} W_i$ as $W_{j:i}$.

Assume that the learned matrix is the $l$-th layer, namely $W_l$. We can express the network as:
\[
Y = W_{L:l+1} W_l W_{l-1:1} X
\]
For brevity, denote $A = W_{L:l+1}$, $W = W_l$, and $B = W_{l-1:1} X$. Here, $A \in \mathbb{R}^{d \times n_{l+1}}$ and $B \in \mathbb{R}^{n_l \times m}$. From \Cref{lem:vec and kron}, we have:
\[
(B^T \otimes A) \vec{W} = \vec{Y}
\]
Notice that every entry in $\vec{W}$ corresponds to a column in $(B^T \otimes A)$. Denote $\tilde{W}$ as the learnable part of $W$ and $\hat{W}$ as the constant part. Respectively, denote $\tilde{C}$ as the matrix created from the columns of $(B^T \otimes A)$ corresponding to $\tilde{W}$, and $\hat{C}$ as the matrix from the columns corresponding to $\hat{W}$. We can write:
\[
\tilde{C} \tilde{W} = \vec{Y} - \hat{C} \hat{W}
\]
Since $r = dm$, we have $\tilde{C} \in \mathbb{R}^{dm \times dm}$. The equation is solvable if and only if $\tilde{C}$ is invertible.

Each different allocation corresponds to a distinct choice of $dm$ columns out of the $n_l n_{l+1}$ columns of $(B^T \otimes A)$. Every column of a Kronecker product is a vectorization of the outer product of two vectors. In our case, denote $A_i$ as the $i$-th column of $A$ and $B^T_i$ as the $i$-th column of $B^T$ (which is the $i$-th row of $B$). If the $(i,j)$-th entry of $W$ is included in the allocation, then the column created from the outer product of $A_i$ and $B^T_j$ will be part of $\tilde{C}$.

Since $\tilde{C}$ is a square matrix, it is invertible if and only if its columns are linearly independent. As the columns are vectorizations of matrices, they are linearly independent if the matrices form a basis for $\mathbb{R}^{d \times m}$. From \Cref{cor:correlation remover}, both $A$ and $B$ are uncorrelated, allowing us to use \Cref{lem:encoder helper} (with $k = 1$) to determine if these matrices indeed form a basis. One can observe that the matrices will form a basis if and only if the conditions in this theorem hold.

\end{proof}

\section{Shallow ReLU}
\begin{lemma} \label{lem:P and relu}
Let $v \in \mathbb{R}^n$ and $P \in \text{diag}(\{0, 1\}^n)$. Then, $(2P - I)v \geq 0$ if and only if $\phi(v) = Pv$, where $\phi$ is the ReLU function.
\end{lemma}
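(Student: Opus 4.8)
The statement to prove is \Cref{lem:P and relu}: for $v \in \mathbb{R}^n$ and a diagonal matrix $P$ with entries in $\{0,1\}$, we have $(2P - I)v \geq 0$ (coordinatewise) if and only if $\phi(v) = Pv$, where $\phi$ is the ReLU function applied entrywise. The plan is to reduce everything to a coordinatewise statement, since all three objects involved — $P$, the ReLU $\phi$, and the comparison $\geq 0$ — act independently on each coordinate. So it suffices to prove the scalar version: for $x \in \mathbb{R}$ and $p \in \{0,1\}$, $(2p-1)x \geq 0 \iff \phi(x) = px$.

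For the scalar version I would simply do the case analysis on $p$. If $p = 1$, then $2p - 1 = 1$, so the left-hand condition reads $x \geq 0$; and $px = x$, so the right-hand condition reads $\phi(x) = x$, which holds exactly when $x \geq 0$. If $p = 0$, then $2p - 1 = -1$, so the left-hand condition reads $-x \geq 0$, i.e. $x \leq 0$; and $px = 0$, so the right-hand condition reads $\phi(x) = 0$, which holds exactly when $x \leq 0$. In both cases the two conditions coincide, establishing the scalar equivalence.

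Then I would lift back to the vector statement: writing $v = (v_1, \dots, v_n)$ and $P = \mathrm{diag}(p_1, \dots, p_n)$, the $i$-th coordinate of $(2P - I)v$ is $(2p_i - 1)v_i$, and the $i$-th coordinate of $Pv$ is $p_i v_i$, while $(\phi(v))_i = \phi(v_i)$. So $(2P-I)v \geq 0$ holds iff $(2p_i-1)v_i \geq 0$ for all $i$, which by the scalar case is equivalent to $\phi(v_i) = p_i v_i$ for all $i$, i.e. $\phi(v) = Pv$. This completes the argument.

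There is essentially no main obstacle here — the lemma is a routine unpacking of definitions, and the only thing to be careful about is the sign convention in the degenerate/boundary case $x = 0$, where both $p=0$ and $p=1$ are consistent (since $(2p-1)\cdot 0 = 0 \geq 0$ and $\phi(0) = 0 = p \cdot 0$ for either value of $p$); this is harmless because the equivalence is stated for a \emph{fixed} $P$, so no uniqueness of $P$ is being claimed. The only mild subtlety worth a sentence in the writeup is making explicit that the coordinatewise decomposition is legitimate because $P$ is diagonal.
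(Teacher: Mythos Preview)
Your proof is correct and follows essentially the same coordinatewise case analysis as the paper's own proof, splitting on whether the diagonal entry is $0$ or $1$. If anything, your treatment of the boundary case $v_i = 0$ is slightly more careful than the paper's.
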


\begin{proof}
Let $P_i$ represent the $i$-th diagonal entry of $P$.

1. \textbf{First direction}: Assume $(2P - I)v \geq 0$. This implies that for every $1 \leq i \leq n$, $(2P_i - 1)v_i \geq 0$. When $P_i = 1$, it follows that $v_i \geq 0$, and when $P_i = 0$, we have $v_i < 0$. Therefore, $\phi(v_i) = P_i v_i$ holds for each $i$, and in general, $\phi(v) = Pv$.

2. \textbf{Second direction}: Now, assume $\phi(v) = Pv$. This means that $P_i = 1$ when $v_i \geq 0$, and $P_i = 0$ when $v_i < 0$. Hence, $(2P_i - 1)v_i \geq 0$ holds for all $i$, which implies $(2P - I)v \geq 0$.
\end{proof}

\begin{lemma} \label{lem:relu with P}
Let $\mathcal{A}$ be an allocation scheme that assigns learnable weights to $k$ rows. Without loss of generality, assume it is the first $k$ rows.

Denote by $W_2^1$ the first $k$ columns of $W_2$ and by $W_2^2$ the remaining columns. Similarly, let $W_1^1$ be the first $k$ rows of $W_1$ and $W_1^2$ the remaining rows.

For a set of samples $X$ and labels $Y$ generated by a teacher model, a student model matches the teacher if, for each $1 \leq i \leq m$, the following system of equations and inequalities is solvable:
\begin{align}
    W_2^1 P_i W_1^1 x_i &= y_i - W_2^1 \phi(W_1^2 x_i) \label{eq:relu linear} \\
    (2P_i - I) W x_i &\geq 0 \label{eq:relu inequations} \\
    \notag P_i \in \text{diag}(\{0,1\}^k)&, \quad W_1 \in \mathbb{R}_{\mathcal{A}}
\end{align}
\end{lemma}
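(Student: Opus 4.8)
The plan is to mirror the structure of \Cref{lem:lrnn equations rows} and \Cref{lem:relu with P}'s statement itself: show the two implications separately by passing through the intermediate object $P = \{P_i\}_{i=1}^m$ of ReLU sign patterns. The key bridge is \Cref{lem:P and relu}, which says $\phi(v) = Pv \iff (2P-I)v \ge 0$; I will apply it with $v = W_1 x_i$ (the full pre-activation of sample $i$) to convert the nonlinear equation $y_i = W_2\phi(W_1 x_i)$ into the linear-in-$W_1$ equation \eqref{eq:relu linear} together with the feasibility constraint \eqref{eq:relu inequations}.

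First I would do the ``only if'' direction: suppose a student with weights $W_1 \in \mathbb{R}_{\mathcal{A}}$, $W_2$ matches the teacher, i.e.\ $W_2\phi(W_1 x_i) = y_i$ for all $i$. For each $i$ define $P_i \in \text{diag}(\{0,1\}^k)$ by putting a $1$ in coordinate $j$ exactly when the $j$-th entry of $W_1^1 x_i$ is nonnegative. Then by \Cref{lem:P and relu} applied to the first $k$ coordinates, $\phi(W_1^1 x_i) = P_i W_1^1 x_i$, so \eqref{eq:relu inequations} holds for these coordinates; since the remaining $n-k$ rows of $W_1$ are constant and identical to the teacher's, one has to check that \eqref{eq:relu inequations}, interpreted over all $n$ coordinates of $W x_i$, is also satisfied — this is automatic because $\phi(W_1^2 x_i)$ already equals $\phi$ of the (fixed) pre-activation, so the corresponding diagonal block of $2P_i - I$ is determined and the inequality holds by \Cref{lem:P and relu} again. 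Splitting $\phi(W_1 x_i) = (\phi(W_1^1 x_i)^\top, \phi(W_1^2 x_i)^\top)^\top$ and $W_2 = (W_2^1 \mid W_2^2)$, the matching equation becomes $W_2^1 \phi(W_1^1 x_i) + W_2^2 \phi(W_1^2 x_i) = y_i$, and substituting $\phi(W_1^1 x_i) = P_i W_1^1 x_i$ gives exactly \eqref{eq:relu linear}. Hence the system is solvable with this choice of $P_i$ and the student's $W_1$.

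Conversely, suppose the system \eqref{eq:relu linear}--\eqref{eq:relu inequations} is solvable for some $\{P_i\}$ and some $W_1 \in \mathbb{R}_{\mathcal{A}}$; take $W_2^1$ as provided by \eqref{eq:relu linear} (and $W_2^2$ the teacher's, which is what appears on the right-hand side). From \eqref{eq:relu inequations} and \Cref{lem:P and relu}, $(2P_i - I) W_1^1 x_i \ge 0$ forces $\phi(W_1^1 x_i) = P_i W_1^1 x_i$, so \eqref{eq:relu linear} rearranges to $W_2^1 \phi(W_1^1 x_i) + W_2^2 \phi(W_1^2 x_i) = y_i$, i.e.\ $W_2 \phi(W_1 x_i) = y_i$; thus the student matches the teacher on every sample.

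The main obstacle — and the only nontrivial bookkeeping — is handling the fixed rows $W_1^2$ consistently: one must be careful that \eqref{eq:relu inequations} is stated for the full matrix $W$ (all $n$ rows of the pre-activation), not just the learnable block, and argue that the constraint on the fixed coordinates is either vacuous or automatically satisfied because those pre-activations and their ReLUs are frozen. Once the convention on $W_2^2$ and on which coordinates $P_i$ and the inequality range over is pinned down (w.l.o.g.\ the first $k$ rows are learnable, as in the lemma statement), the rest is a direct application of \Cref{lem:P and relu} to each sample, exactly as in the non-learnable-row argument used in \Cref{eq:relu P}.
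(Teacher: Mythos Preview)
Your proposal is correct and follows exactly the same approach as the paper: split $W_1, W_2$ into learnable and fixed blocks, write $y_i = W_2^1\phi(W_1^1 x_i) + W_2^2\phi(W_1^2 x_i)$, and invoke \Cref{lem:P and relu} to replace $\phi(W_1^1 x_i)$ by $P_i W_1^1 x_i$ subject to the sign constraint. One small slip in your converse direction: $W_2^1$ and $W_2^2$ are the \emph{student's} fixed (random, non-learned) weights, not something ``provided by'' \eqref{eq:relu linear} or taken from the teacher---the only unknowns in the system are $W_1^1$ and the $P_i$---but this does not affect the argument.
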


\begin{proof}
Notice that the label $y_i$ is given by:
\[
y_i = W_2 \phi(W_1 x_i) = W_2^1 \phi(W_1^1 x_i) + W_2^1 \phi(W_1^2 x_i)
\]
From \Cref{lem:P and relu}, this leads to the system of equations and inequalities described above, completing the proof.
\end{proof}

\begin{lemma} \label{lem:relu k limit}
For any allocation scheme, the probability that a given $P_i$ has fewer than $d$ ones and still satisfies Lemma \ref{lem:relu with P} is zero.
\end{lemma}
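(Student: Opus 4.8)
The plan is to show that if $P_i$ has fewer than $d$ ones, then \Cref{eq:relu linear} has no solution (almost surely), so the only way Lemma~\ref{lem:relu with P} can be satisfied is when $P_i$ has at least $d$ ones. The key observation is a rank count. Fix $i$ and look at the left-hand side $W_2^1 P_i W_1^1 x_i$. Let $s = \operatorname{rank}(P_i)$ be the number of ones on the diagonal of $P_i$, and suppose $s < d$. Write $P_i = Q_i^\dagger Q_i$ where $Q_i \in \rr^{s\times k}$ is the submatrix of $I_k$ selecting the ``on'' coordinates; then $W_2^1 P_i W_1^1 x_i = (W_2^1 Q_i^\dagger)(Q_i W_1^1 x_i)$, and the second factor $Q_i W_1^1 x_i$ is a vector in $\rr^s$. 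Hence, as we vary over all admissible $W_1^1 \in \mathbb{R}_{\mathcal A}$ (with only the allocated entries free), the left-hand side ranges over a subset of the column space of the fixed matrix $W_2^1 Q_i^\dagger \in \rr^{d\times s}$, which is a subspace of $\rr^d$ of dimension at most $s < d$.

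The main step is then to argue that the right-hand side $y_i - W_2^1\phi(W_1^2 x_i)$ almost surely does not lie in this at-most-$(d-1)$-dimensional subspace. Here I would use the paper's standing assumptions: $W_2$ is Gaussian with independent entries, and the teacher output $y_i$ (a column of $Y$) is, by assumption on $\mathcal T$ and $\mathcal X$, in ``general position'' --- in particular not constrained to any fixed proper subspace of $\rr^d$. More carefully, the subspace $\operatorname{col}(W_2^1 Q_i^\dagger)$ depends only on $W_2^1$ (and the fixed combinatorial object $Q_i$), whereas conditioned on $W_2^1$ the vector $y_i$ still has a continuous (non-degenerate) conditional law on $\rr^d$, since $y_i$ is generated by the teacher whose weights are independent of the student's fixed weights. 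A continuous distribution on $\rr^d$ assigns probability zero to any fixed affine subspace of dimension $\le d-1$; taking a union bound over the finitely many choices of $Q_i$ (equivalently, of which $<d$ coordinates are ``on'') still gives probability zero. I would also need to note the edge case $s=0$: then the left-hand side is identically $0$, and solvability would force $y_i = W_2^1\phi(W_1^2 x_i)$, again a probability-zero event.

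The step I expect to be the real obstacle is making the independence/non-degeneracy argument airtight: one must be careful that the ``bad subspace'' is measurable with respect to quantities independent of the randomness that keeps $y_i$ spread out. Concretely, $W_2^1$ is shared between the column space $\operatorname{col}(W_2^1Q_i^\dagger)$ and the right-hand side term $W_2^1\phi(W_1^2x_i)$, so one cannot simply say ``$y_i$ is independent of everything.'' The clean way around this is to condition on \emph{all} the student's fixed weights (all of $W_2$, and the fixed part of $W_1^2$) and on $x_i$; after this conditioning the bad subspace and the vector $W_2^1\phi(W_1^2x_i)$ are both constants, while $y_i = W_2^*\phi(W_1^*x_i)$ retains a continuous law on $\rr^d$ because the teacher weights $W_1^*, W_2^*$ are independent Gaussians independent of the conditioning (using the assumption that a fully-learned teacher realizes a non-degenerate distribution over outputs, stated in the ``Assumptions'' paragraph and for the linear-independence of teacher outputs). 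Then apply the zero-measure-of-a-hyperplane fact and a finite union bound over the $\binom{k}{s}$ choices with $s<d$, and sum over the $m$ indices $i$ if a statement uniform in $i$ is wanted. I would present this as: fix $i$, condition, bound by zero, union-bound, done.
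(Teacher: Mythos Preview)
Your proposal is correct and follows essentially the same rank argument as the paper: when $P_i$ has fewer than $d$ ones, the left-hand side of \Cref{eq:relu linear} is confined to a subspace of $\rr^d$ of dimension at most $s<d$, so the equation is almost surely unsolvable. The paper phrases the rank bound via the Kronecker/vectorization identity $(x_i^T\otimes W_2^1P_i)\,\vec{W_1^1}$ and \Cref{lem:rank kron}, whereas you factor $W_2^1P_i = (W_2^1Q_i^\dagger)Q_i$ directly; your explicit conditioning argument (fixing the student's weights and $x_i$, then using the teacher's independent randomness to show $y_i$ misses any fixed proper affine subspace) makes precise a step the paper leaves implicit.
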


\begin{proof}
Write $P = P_i$ for brevity. Suppose $P$ has fewer than $d$ ones. Then the rank of $W_2P$ is less than $d$. If the equations from Lemma \ref{lem:relu with P} are solvable, we can write:
\[
(x_i^T \otimes W_2^1 P) \, \text{vec}(W_1^1) = y_i - W_2^1 \phi(W_1^2 x_i)
\]
Since $ (x_i^T \otimes W_2^1 P) \in \mathbb{R}^{d \times qn}$, and by Lemma \ref{lem:rank kron}, the rank is less than $d$, any matrix formed from a subset of $d$ columns will be invertible with probability zero. Therefore, the equation is solvable with probability zero.
\end{proof}

This means that for every \( P_i \), there are \( \sum_{i=0}^{d-1} \binom{k}{i} \) configurations for \( P_i \) that are not usable. Notice that \( 2^k = (1 + 1)^k = \sum_{i=0}^{k} \binom{k}{i} \), which shows that as \( k \) grows larger, the fraction of configurations with zero probability decreases. For example, when \( k = d \), there is only one valid configuration for \( P \), as $2^d -\sum_{i=0}^{d-1} \binom{d}{i} = 1$.

\begin{lemma} \label{lem:relu r_i limit}
Let $\mathcal{A}$ be an allocation scheme that allocates $r_i$ learnable weights in the $i$-th row. Let $P_{ji}$ represent the value in the $i$-th entry on the diagonal of $P_j$. If the number of ones in $\{P_{li}\}_{l=1}^m$ is less than $r_i$, then the probability that the system satisfies Lemma \ref{lem:relu with P} is zero.
\end{lemma}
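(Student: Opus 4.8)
The plan is to mirror the proof of \Cref{lem:relu k limit}: fix a candidate configuration $P=\curly{P_j}_{j=1}^m$, collapse the equations \eqref{eq:relu linear} over all $m$ samples into a single \emph{square} linear system in the learnable weights, show that the coefficient matrix of this system is rank-deficient for \emph{every} realization of the data and weights whenever some row $i$ has $s_i:=\abs{\curly{j:P_{ji}=1}}<r_i$, and then conclude by a measure-zero argument that the (random) right-hand side lands in the column space of that matrix only with probability zero. So the whole statement reduces to one deterministic rank bound plus the now-familiar ``RHS avoids a fixed proper subspace'' step.

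Concretely, fix $P$ together with all non-learnable quantities ($W_2$, the non-learnable rows $W_1^2$ of $W_1$, and the fixed entries of $W_1^1$). Write $(W_1^1)_i x_j=\trans{\tilde w_i}\,\tilde x_j^{(i)}+\mathrm{const}_{ij}$, where $\tilde w_i\in\rr^{r_i}$ collects the learnable entries of row $i$ of $W_1^1$ and $\tilde x_j^{(i)}$ is the sub-vector of $x_j$ on the corresponding coordinates. Using $W_2^1P_jW_1^1x_j=\sum_{i=1}^k P_{ji}\,\bigl((W_1^1)_i x_j\bigr)\,(W_2^1)_{\cdot i}$, equation \eqref{eq:relu linear} for sample $j$ becomes
\[
\sum_{i=1}^k P_{ji}\,\bigl(\trans{\tilde w_i}\,\tilde x_j^{(i)}\bigr)\,(W_2^1)_{\cdot i}\;=\;c_j,
\]
where $(W_2^1)_{\cdot i}$ is the $i$-th column of $W_2^1$ and $c_j$ absorbs the ReLU term $W_2^1\phi(W_1^2 x_j)$, the teacher label $y_j$, and the fixed-weight contributions. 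Stacking the $m$ vector equations yields $\tilde C\tilde w=c$ with $\tilde C\in\rr^{md\times dm}$ square, since $r=\sum_i r_i=dm$.

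The key structural observation is that the block column of $\tilde C$ acting on $\tilde w_i$ has rank at most $s_i$: viewing $\rr^{md}$ as $m$ blocks of size $d$ (one per sample), every column of that block column lies in the subspace $\curly{\bigl(\beta_1 (W_2^1)_{\cdot i},\dots,\beta_m (W_2^1)_{\cdot i}\bigr):\beta_j\in\rr,\ \beta_j=0\text{ whenever }P_{ji}=0}$, whose dimension is at most $s_i$, independent of $X$ and $W_2$. Hence if $s_i<r_i$ for some $i$, that block column has $r_i$ columns but rank strictly less than $r_i$; since $\tilde C$ is square, it then has a nontrivial kernel and is singular for every realization. Solvability therefore forces $c\in\mathrm{col}(\tilde C)$. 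Conditioning additionally on $X$, the subspace $\mathrm{col}(\tilde C)\subsetneq\rr^{md}$ is frozen, while the only surviving randomness in $c$ enters through the teacher output $Y=W_2^*\phi(W_1^*X)$. Conditioning further on $W_1^*$, $\mathrm{vec}(Y)=\bigl(\phi(W_1^*X)^{\dagger}\otimes I_d\bigr)\mathrm{vec}(W_2^*)$ is a nondegenerate Gaussian on $\rr^{md}$ (its covariance is proportional to $\bigl(\phi(W_1^*X)^{\dagger}\phi(W_1^*X)\bigr)\otimes I_d$, which is full rank once $\phi(W_1^*X)$ has rank $m$); hence $c$ lies in a fixed proper affine subspace with probability zero, which proves the lemma.

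The only step needing care is the bookkeeping: confirming that the learnable weights of row $i$ enter \eqref{eq:relu linear} only through the single scalar $\trans{\tilde w_i}\tilde x_j^{(i)}$, and only for samples $j$ with $P_{ji}=1$, and that $\tilde C$ is exactly $md\times dm$. Granting that, the rank bound and the measure-zero conclusion are immediate, exactly as in \Cref{lem:relu k limit,lem:relu with P}. The step most likely to need an extra sentence is the non-degeneracy of the teacher output; I would justify it by exhibiting $m$ rows of $\phi(W_1^*X)$ on which $W_1^*X$ is entrywise positive — possible with positive probability since $n\gg m$ — on which $\phi$ acts linearly, yielding (a.s., using the standing linear-independence assumption on the samples) a rank-$m$ submatrix of $\phi(W_1^*X)$.
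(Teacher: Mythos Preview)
Your argument is correct and matches the paper's approach: both stack the $m$ equations \eqref{eq:relu linear} into a square $dm\times dm$ linear system in the learnable entries and observe that the columns attached to row $i$ are all of the form $v\otimes (W_2^1)_{\cdot i}$ with $v\in\rr^m$ supported on $\{j:P_{ji}=1\}$, hence span at most an $s_i$-dimensional space, forcing singularity whenever $s_i<r_i$. Your explicit measure-zero treatment of the right-hand side goes beyond what the paper supplies (it simply asserts the singular system is ``not solvable''); the only soft spot is that you argue $\operatorname{rank}\phi(W_1^*X)=m$ with positive probability rather than almost surely, but the paper does not address this point at all.
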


\begin{proof}
For brevity, let the column vectors of $W_2^1$ be $\{a_i \in \mathbb{R}^d\}_{i=1}^k$. If the equations in Lemma \ref{lem:relu with P} are solvable, we can write:
\[
(x_i^T \otimes W_2^1 P_i) \, \text{vec}(W_1^1) = y_i - W_2^1 \phi(W_1^2 x_i)
\]
for $1 \leq i \leq m$. These represent $dm$ linear equations. 

Denote $v_{ij} \in \mathbb{R}^m$ as $\{X_{lj}P_{li}\}_{l=1}^m$, where $X_{lj}$ is the $j$-th entry of the $l$-th sample and $P_{li}$ is the $i$-th diagonal element of $P_l$. In the matrix of this linear equation, the columns corresponding to index $i,\,j$ in $\vec{W_1^1}$ are the vectorizations of the outer products $v_{ij} \otimes a_i$. Since we have $dm$ equations with $r=dm$ variables, those columns must be linear independent for the equations to be solvable.

If $r_i$ columns corresponding to index $i$ are selected, linear independence must hold between the matrices $v_{ij} \otimes a_i$ for $r_i$ choices of $j$. However, if the number of ones in $\{P_{li}\}_{l=1}^m$ is less than $r_i$, it means that each vector $v_{ij}$ has more than $m - r_i$ zero entries, which means that any set of $r_i$ options for $j$, the vectors $v_{ij}$ will be linear dependent. Therefore, the matrices $v_{ij} \otimes a_i$ will also be linear dependent, which means that the equation is not solvable.
\end{proof}

Notice that out of the $2^{km}$ possible configurations for $P$, only $\prod_{i=1}^k \sum_{j=r_i}^m {m \choose j}$ are feasible. This shows that allocations using more rows (i.e., larger $k$) and stacking fewer learnable weights per row (since the sum starts from $r_i$) have significantly more possible configurations.

\section{Experiments}\label{sec:experiments}
\subsection{Methods}
As we have seen multiple times in this paper, the match probability often boils down to the likelihood of a set of polynomials having an exact solution. Standard gradient descent failed to find a solution in this case. Typically, gradient descent can overfit a dataset because the model is often overparameterized relative to the data. However, in our case, the model isn't overparameterized (as we use \(r = dm\)), so second-order optimization methods were necessary.

Instead of gradient descent methods, we used Python's \texttt{scipy.optimize.fsolve}. For each experiment, we ran the solver 1,000 times, and the match probability was calculated based on the number of solutions found. Since we used second order optimization, we were limited to use only small network sizes. However, the used networks sizes was enough to show the expected phenomena.

Each run of \texttt{fsolve} begins with an initialization point $x_0$ for the algorithm. Empirically, we observed that \texttt{fsolve} frequently fails to find a solution initially, but with multiple initializations, it eventually succeeds. Across all experiments, we identified a threshold of initializations that, after being surpassed, rarely leads to the discovery of additional solutions. We manually evaluated this threshold for each experiment and then doubled that number. For instance, in the LRNN experiments, we observed that no new solutions were found after 200 initializations, so we set the number of initializations to 400 to ensure thoroughness.

In the ReLU experiment, we did not use \texttt{fsolve} as it failed to find any solutions in all cases. Instead, we employed Ada Hessian \citep{yao2020adahessian}, and considered a solution valid if the mean squared error (MSE) was below a threshold of $10^{-2}$. Additionally, we recorded the number of solutions found across 1,000 attempts, and the resulting figure includes a shaded area to represent one standard error of the mean over these trials.

The experiment on MNIST data was the only one where we employed first-order optimization, specifically using the Adam algorithm \citep{kingma2017adammethodstochasticoptimization}. This approach allowed us to utilize a larger network with $n=1000$ parameters. We conjecture that first-order optimization performed better in this case due to two factors: (1) random data is inherently more challenging to learn as it lacks any underlying correlations, and (2) in the MNIST experiment, the task was to classify images (where the network predicts a label between 1-10) rather than reproducing a full vector, which simplifies the learning objective.

\subsection{Environment}
\paragraph{Figure 2} \Cref{fig:experiments1} was created with $T = \frac{n}{2}$, $b = 1, d=4$, and $m = \frac{n}{2}$. All random variables were drawn from a normal distribution with normalized variance (see \Cref{sec:variance}). Let the number of used rows be denoted as $k$. For every possible $k$, we ran 1,000 trials and averaged the number of matches to estimate the match probability.

For each trial with a given $k$, an allocation was randomized in the first $k$ rows, subject to the conditions in \Cref{thm:lrnn necessary conditions}. Note that \Cref{thm:lrnn necessary conditions} identifies the minimal number of rows for which below it there is no solution for any allocations ($k<d$; minimal allocations). This is why the graph doesn't start at $k = 0$.

In \Cref{fig:experiments1}b, $d = \frac{n}{4}$. In this scenario, when $k = d$ (0.25 on the x-axis), the only allocations that satisfy \Cref{thm:lrnn necessary conditions} are those that also satisfy \Cref{thm:lrnn sufficient conditions}. As expected, these allocations have a match probability of 1. Since the proof of \Cref{thm:lrnn sufficient conditions} provides an algorithm to find a solution in this case, we used that algorithm instead of \texttt{scipy.solve}. The algorithm is included in the \texttt{maximality\_lrnn.py} file in the attached zip.

\paragraph{Figure 3} The feedforward network in \Cref{fig:experiments2}a was created with three layers, with $q = 4$, $d = 6$, and $m = 4$. The two hidden layer has the same size, denoted by $n$ in the graph. This means that there were $r = md = 24$ learnable weights, distributed as 8 per layer.  Just like in the LRNN, the allocation was randomized across the rows, adhering to the conditions specified in \Cref{thm:ff-single}. Notably, for the first layer, the number of rows does not impact the number of linear equations in \Cref{eq:ff eq}. Therefore, we set \( k \) as a limit for the number of columns utilized.

The ReLU network in \Cref{fig:experiments2}b was created with $q = 6$, $d = 4$, and $m = 6$. The size of the hidden layer, denoted by $n$, is provided in the graph.

The experiment on MNIST data shown in \Cref{fig:experiments2}c was conducted using a network with $n=1000$ and $m=1000$, while the input and output sizes were set to $q=784$ and $d=10$, respectively, reflecting the dimensions of the MNIST dataset.

\subsection{Variance in the experiments} \label{sec:variance}

Let $v\in\rr^n$ and $h\in\rr^n$ be two random vectors, where $v\in\rr^n$ drawn i.i.d from $\N\roundy{0,\,g^2}$ and $h\in\rr^n=\Theta(1)$. Since they are uncorrelated, $\E\squary{\inprod{v}{h}} = 0$. However, the variance of their inner product is:
\begin{align*}
\E\squary{\inprod{v}{h}^2}=\E\squary{\sum_{i=1}^n{v_i^2h_i^2}}=\sum_{i=1}^n\E\squary{v_i^2h_i^2}= \sum_{i=1}^n\E\squary{v_i^2} \E\squary{h_i^2} \propto ng^2 = \Theta(n)
\end{align*}
Therefore, networks that multiply matrices repeatedly this way (e.g., $Wh_t$, $Dh_t$) will cause the variance of the random variables to explode. 

The solution is to sample all matrices with row size $n$ with variance $\frac{g^2}{n}$, with $g=\Theta(1)$. This ensures that the variance of the hidden state remains $\Theta(1)$ throughout the process.

\section{Supplementary Discussion} \label{sec:sup disc}
A natural point of comparison for our work is the Lottery Ticket Hypothesis (LTH) framework, which focuses on identifying sparse subnetworks within over-parameterized models that, when trained independently, achieve comparable performance to the full network. A key distinction between our work and LTH lies in the nature of the problem addressed and the context of sparsity. LTH focuses on identifying "winning tickets" — sparse subnetworks within an already over-parameterized model—that achieve comparable performance to the full network when trained independently. In contrast, our work examines how to strategically allocate a fixed, limited number of learnable weights across a network to maximize its expressivity. This difference is critical: while LTH emphasizes \emph{discovering} useful sparsity post hoc, our approach is about \emph{designing} useful sparsity under strict resource constraints from the outset. 

The student-teacher setup is a well-established framework for studying machine learning problems in controlled settings. It has been widely used in the literature to analyze generalization, expressivity, and optimization (e.g., see \citep{saglietti2022analytical}, which also includes a comprehensive set of references). The choice of a student-teacher setup in our work is deliberate and is done for clarity, as it isolates the reduction in expressive power arising solely from the allocation of learnable weights, rather than confounding factors such as differences in architecture or neuron nonlinearities. Specifically, when the teacher and student share the same architecture, any decrease in the student's expressive power is attributable solely to the restriction in learned weights and their allocation. This allows us to rigorously estimate the approximation error stemming from allocation strategies, independent of other factors that might limit the student’s ability to fit the labeled data. While we used the student-teacher setup for clarity, the framework is not inherently limited to this context. In fact, the teacher and student could differ in architecture, and the analysis could extend to general labeled data (the proofs in this paper remain valid under these conditions).

With respect to applications in neuroscience, little is known about the scale of learning (i.e., changes in synaptic weights) in the brain. Historically, technological constraints have made it notoriously difficult to track synaptic weight changes in real-time as animals learn new tasks \citep{tsutsumi2021optical}. However, it is well established that learning induces changes in neural activity that are often highly distributed both within and across brain regions \citep{chen2017map,steinmetz2019distributed,allen2019thirst}. Notably, a recent study demonstrated that even when learning is localized to a subset of neurons and synaptic weights, the resulting activity can propagate through fixed-weight connections, leading to widespread changes in neural activity \cite{Kim2023}. This observation highlights the challenge of relating widespread neural activity to the specific extent of synaptic weight changes in the brain.

Recent technological advances now enable neuroscientists to monitor changes in synaptic weights during learning, offering unprecedented insights into large-scale connectivity dynamics \cite{daie2021targeted,humphreys2022bci,finkelstein2023connectivity}. If the findings from our theoretical study extend to more complex network architectures and neuron nonlinearities, these innovations could make our predictions and insights testable in the near future. This possibility served as one of the motivations for conducting this research.

\subsection{Assumptions Relaxation} \label{sec:assumption relax}
In this paper, we assumed that $r=md$ (see \Cref{sec:settings}). In the linear models we considered, this assumption is reasonable, as it ensures that the number of equations matches the number of free variables. As a result, $r=md$ is the minimal number of learnable weights needed to match the teacher. This allowed us to establish "sufficient conditions" theorems (\Cref{thm:decoder,thm:encoder,thm:ff-single,thm:lrnn sufficient conditions}) to guarantee that no learnable weight is "wasted". The reasoning is straightforward—since all learnable weights are generally needed to solve the equations, any excess usage in one subset of equations would lead to a shortage elsewhere.

Consequently, if $r>md$, up to $r-md$ learnable weights can be "wasted" without violating these conditions. This follows from the fact that when the number of free variables exceeds the number of constraints, there is additional flexibility in weight allocation, allowing some weights to be used inefficiently while still maintaining solvability.

One of the assumptions in this paper is that the input distribution $\mathcal{X}$ is such that any $m$ samples drawn from it are linearly independent. In our linear models, if the samples are linearly dependent, the effective number of distinct samples is reduced. Specifically, for any set of $m$ samples, there exists a subset of $m'$ independent samples (with $m' \le m$) such that a match on these $m'$ samples implies a match on all $m$ samples. Therefore, we can treat the effective sample size as $m'$ and assume $m = m'$ in our analysis. This reinforces our assumption that, in the linear case, the samples can be considered effectively linearly independent.

\end{document}